\pgfplotsset{compat=newest}
    \newcolumntype{L}{>{\raggedright\arraybackslash}X}
    \newcolumntype{C}{>{\centering\arraybackslash}X}
\definecolor{citrine}{rgb}{0.89, 0.82, 0.04}
\definecolor{blued}{RGB}{70,197,221}
\newcommand{\todor}[1]{\todo[color=citrine, inline]{\small #1}}
\newcommand{\A}{\mathcal A}
\renewcommand{\O}{\mathcal O}
\newcommand{\calS}{\mathcal S}
\newcommand{\G}{\mathcal G}
\newcommand{\F}{\mathcal F}
\newcommand{\Prob}{\mathbbm P}
\renewcommand{\Re}{\mathbb R}
\newcommand{\Na}{\mathbb N}
\newcommand{\E}{\mathbb E}
\newcommand{\nextstates}{\Gamma}
\newcommand{\one}[1]{\mathbbm{1} \left(#1 \right)}
\newcommand{\sigalg}{$\sigma$-algebra\@\xspace}
\newcommand{\sigalgs}{$\sigma$-algebras\@\xspace}
\newcommand{\as}{a.s.\@\xspace}
\newcommand{\rv}{r.v.\@\xspace}
\newcommand{\rvrv}{real-valued r.v.\@\xspace}
\newcommand{\smallplus}{{\scalebox{.5}{+}}}
\newcommand{\SR}{\Pi^\text{SR}}
\newcommand{\SD}{\Pi^\text{SD}}
\newcommand{\PiC}{\Pi_c}
\newcommand{\Varcond}[2]{\mathbb{V}\left({#1} \big| {#2}\right)}
\newcommand{\SP}[1]{sp\left\{{#1}\right\}}
\newcommand{\proj}[1]{\Gamma_c{#1}}
\DeclareMathOperator*{\argmax}{\arg\,\max}
\newcommand{\transp}{\mathsf{T}}
\newcommand{\opT}[1]{T_c#1}
\newcommand{\lnn}[1]{\ln\left({#1}\right)}
\newcommand{\expp}[1]{\exp\left({#1}\right)}
\newcommand{\scopt}{{\small\textsc{ScOpt}}\xspace}
\newcommand{\ucrl}{{\small\textsc{UCRL}}\xspace}
\newcommand{\uccrl}{{\small\textsc{UCCRL}}\xspace}
\newcommand{\bonusscal}{{\small\textsc{SCAL}\textsuperscript{+}}\xspace}
\newcommand{\optpsrl}{{\small\textsc{Opt-PSRL}}\xspace}
\newcommand{\scal}{{\small\textsc{SCAL}}\xspace}
\newcommand{\scalcont}{{\small\textsc{C-SCAL}\textsuperscript{+}}\xspace}
\newcommand{\tucrl}{{\small\textsc{TUCRL}}\xspace}
\newcommand{\rmaxbound}{r_{\max}}
\newcommand{\wt}[1]{\widetilde{#1}}
\newcommand{\wh}[1]{\widehat{#1}}
\newcommand{\wo}[1]{\overline{#1}}
\newcommand{\wb}[1]{\overline{#1}}
\DeclareMathAccent{\wtilde}{\mathord}{largesymbols}{"65}
\newcommand{\pluseq}{\mathrel{+}=}
\DeclareRobustCommand{\eg}{e.g.,\@\xspace}
\DeclareRobustCommand{\ie}{i.e.,\@\xspace}
\DeclareRobustCommand{\wrt}{w.r.t.\@\xspace}
\DeclareRobustCommand{\st}{s.t.\@\xspace}
\newtheorem{assumption}[theorem]{Assumption}
\newlength{\minipagewidth}
\newlength{\minipagewidthx}
\newcommand{\bookboxx}[1]{\small
\par\medskip\noindent
\framebox[0.99\textwidth]{
\begin{minipage}{0.97\dimexpr\textwidth-\parindent\relax} {#1} \end{minipage} } \par\medskip }
\begin{document}

\title{Exploration Bonus for Regret Minimization in Undiscounted Discrete and Continuous Markov Decision Processes}
\author{%
          Jian Qian
          \email{jian.qian@ens.fr}\\
          \addr Sequel Team - Inria Lille
          \AND
          Ronan Fruit
          \email{ronan.fruit@inria.fr}\\
          \addr Sequel Team - Inria Lille
          \AND
          Matteo Pirotta
          \email{matteo.pirotta@inria.fr}\\
          \addr Sequel Team - Inria Lille
          \AND
          Alessandro Lazaric 
          \email{lazaric@fb.com}\\
          \addr Facebook AI Research
}

\editor{Kevin Murphy and Bernhard Sch{\"o}lkopf}

\maketitle

\begin{abstract}
        We introduce and analyse two algorithms for exploration-exploitation in discrete and continuous Markov Decision Processes (MDPs) based on \emph{exploration bonuses}.
        \bonusscal is a variant of \scal~\citep{fruit2018constrained} that performs efficient exploration-exploitation in any \emph{unknown weakly-communicating} MDP for which an upper bound $c$ on the span of the \emph{optimal bias function} is known.
        For an MDP with $S$ states, $A$ actions and $\nextstates \leq S$ possible next states, we prove that \bonusscal achieves the same theoretical guarantees as \scal (\ie a high probability regret bound of $\wt{\O}(c\sqrt{\nextstates SAT})$), with a much smaller computational complexity.
        Similarly, \scalcont exploits an exploration bonus to achieve sublinear regret in any undiscounted MDP with continuous state space. 
        We show that \scalcont achieves the same regret bound of \uccrl~\citep{DBLP:journals/corr/abs-1302-2550} while being the first implementable algorithm in this setting.
        While optimistic algorithms such as \ucrl, \scal or \uccrl maintain a high-confidence set of plausible MDPs around the true unknown MDP, \bonusscal and \scalcont leverage on an exploration bonus to directly plan on the empirically estimated MDP, thus being more computationally efficient.
\end{abstract}

\section{Introduction}
While learning in an unknown environment, a reinforcement learning (RL) agent must trade off the \textit{exploration} needed to collect information about the dynamics and reward, and the \textit{exploitation} of the experience gathered so far to gain as much reward as possible. 
The performance of an online learning agent is usually measured in terms of cumulative regret which compares the rewards accumulated by the agent with the rewards accumulated by an optimal agent.
A popular strategy to deal with the exploration-exploitation dilemma (\ie minimize regret) is to follow the \emph{optimism in the face of uncertainty} (OFU) principle.

Optimistic approaches have been widely studied in the context of stochastic multi-armed bandit (MAB) problems.
In this setting, OFU-based algorithms maintain optimistic estimates of the expected reward of each action $a$ (\ie arm), and play the action with highest optimistic estimate \citep[see \eg][]{bubeck2012regret,bandittorcsaba}.
These optimistic estimates are usually obtained by adding a high probability \emph{confidence bound} $b(a)$ to the empirical average reward $\wh{r}(a)$ \ie $\wh{r}(a) + b(a)$. The confidence bound plays the role of an \emph{exploration bonus}: the higher $b(a)$, the more likely $a$ will be explored.
As an example, based on Hoeffding's inequality, the Upper-Confidence Bound (UCB) algorithm uses $b(a) = \wt{\Theta}\big(\rmaxbound/\sqrt{N(a)}\big)$ where $N(a)$ is the total number of times action $a$ has been played before and all rewards are assumed to lie between $0$ and $\rmaxbound$ with probability $1$. UCB can be shown to achieve nearly-optimal regret guarantees.

\citet{strehl2008analysis} later generalized the idea of enforcing exploration by using a bonus on the reward to the RL framework. 
They analysed the \emph{infinite-horizon $\gamma$-discounted setting} and introduced the Model Based Interval Estimation with Exploration Bonus (MBIE-EB) algorithm. MBIE-EB plays the optimal policy of the empirically estimated MDP where for each state-action pair $(s,a)$, a bonus ${b}(s,a)$ is added to the empirical average reward $\wh{r}(s,a)$ \ie the immediate reward associated to $(s,a)$ is $\wh{r}(s,a) + b(s,a)$. Unlike in MAB where the optimal arm is the one with maximal immediate reward, the goal of RL is to find a policy maximizing the cumulative reward \ie the $Q$-function. Therefore, the bonus needs to account for the uncertainty in both the rewards and transition probabilities and so $b(s,a) = \wt{\Theta}\left(\frac{\rmaxbound}{1-\gamma} \sqrt{\frac{1}{N(s,a)}}\right)$ where $\frac{\rmaxbound}{1-\gamma}$ is the range of the $Q$-function. \citet{strehl2008analysis} also derived PAC guarantees on the sample complexity of MBIE-EB. 
More recently, \emph{count-based methods} \citep[\eg][]{Bellemare:2016aa,tang2017exploration,ostrovski2017count-based,MartinSEH17} tried to combine the idea of MBIE-EB with Deep RL (DRL) techniques to achieve a good exploration-exploitation trade off in high dimensional problems. The exploration bonus usually used has a similar form $\wt{\Theta} \left(\frac{\beta}{\sqrt{N}}\right)$ where $\beta$ is now an hyper-parameter tuned for the specific task at hand, and the visit count $N$ is approximated using discretization (\eg hashing) or density estimation methods.

Exploration bonuses have also been successfully applied to \emph{finite-horizon problems} \citep{pmlr-v70-azar17a,kakade2018variance,qlearning2018}. In this setting, the planning horizon $H$ is known to the learning agent and the range of the $Q$-function is $\rmaxbound H$. A natural choice for the bonus is then $b(s,a) = \wt{\Theta}\big(\rmaxbound H/\sqrt{N(s,a)} \big)$. UCBVI\_1 introduced by \citet{pmlr-v70-azar17a} uses such a bonus and achieves near-optimal regret guarantees $\wt{O}\big(H\sqrt{SAT}\big)$. Extensions of UCBVI\_1 exploiting the variance instead of the range of the $Q$-function achieve a better regret bound $\wt{O}\big(\sqrt{HSAT}\big)$ \citep{pmlr-v70-azar17a,kakade2018variance,qlearning2018}.

Both the finite horizon setting and infinite horizon discounted setting assume that there exists an \emph{intrinsic horizon} (respectively $H$ and $\frac{1}{1-\gamma}$) known to the learning agent. Unfortunately, in many common RL problems it is not clear how to define $H$ or $\frac{1}{1-\gamma}$ and it is often desirable to set them as big as possible (e.g., in episodic problem, the time to the goal is not known in advance and random in general). As $H$ tends to infinity the regret (of UCBVI\_1, etc.) will become linear while as $\gamma$ tends to to $1$ the sample complexity (of MBIE-EB, etc.) tends to infinity (not to mention the numerical instabilities that may arise). In this paper we focus on the much more natural infinite horizon undiscounted setting \citep[Chap. 8]{puterman1994markov} which generalizes the two previous settings to the case where $H\to +\infty$ and $\gamma \to 1$ respectively. Several algorithms implementing the OFU principle in the infinite horizon undiscounted case have been proposed in the literature \citep[\eg][]{Jaksch10,DBLP:journals/corr/abs-1302-2550,fruit2017optionsnoprior,fruit2018constrained,Talebi2018variance}, but none of these approaches exploits the idea of an exploration bonus. Instead, they all construct an \emph{extended} MDP\footnote{The extended MDP is sometimes called \emph{bounded-parameter} MDP} with continuous action space, which can be interpreted as the concatenation of all possible MDPs compatible with some high probability confidence bounds on the transition model, among which is the true MDP. The policy executed by the algorithm is the optimal policy of the extended MDP. \ucrl \citep{Jaksch10} achieves a regret of order\footnote{The original bound of \citet{Jaksch10} has $\sqrt{S}$ instead of $\sqrt{\Gamma}$ but $\sqrt{\Gamma}$ can be easily achieved by replacing Hoeffding inequality by empirical Bernstein's inequality for transition probabilities.} $\wt{O}\big(\rmaxbound D \sqrt{\Gamma S AT}\big)$ after $T$ time steps where $D$, $\Gamma$, $S$ and $A$ are respectively the diameter of the true MDP, the maximum number of reachable next states from any state, the number of states and the number of actions. \citep{fruit2018constrained} showed an improved bound for \scal $\wt{O}\big( \min{\{\rmaxbound D,c \}} \sqrt{\Gamma S AT}\big)$ when a known upper bound on the optimal bias span $c \geq \SP{h^*}$ is known to the learning agent. Although such algorithms can be efficiently implemented in the tabular case, it is difficult to extend them to more scalable approaches like DRL. In contrast, as already mentioned, the exploration bonus approach is simpler to adapt to large scale problems and inspired count based methods in DRL. 

In this paper we introduce and analyse \bonusscal, the first algorithm that relies on an exploration bonus to efficiently balance exploration and exploitation in the infinite-horizon undiscounted setting.
All the exploration bonuses that were previously introduced in the RL literature explicitly depend on $\gamma$ or $H$ which are known to the learning agent.
In the infinite-horizon undiscounted case, there is no predefined parameter informing the agent about the range of the $Q$-function. This makes the design of an exploration bonus very challenging. 
To overcome this limitation, we make the same assumption as \citet{Bartlett2009regal, fruit2018constrained} \ie we assume that the agent knows an upper-bound $c$ on the span (\ie range) of the optimal bias (\ie value function).
The exploration bonus used by \bonusscal is thus $b(s,a) = \wt{\Theta}\big(\max\{c,\rmaxbound\} /\sqrt{N(s,a)}\big)$. In comparison, state-of-the-art algorithms in the infinite horizon undiscounted setting like \ucrl or \scal can, to a certain extent, be interpreted as virtually using an exploration bonus of order $\wt{\Theta}\big(\rmaxbound D \sqrt{\Gamma/N(s,a)}\big)$ and $\wt{\Theta}\big(\max\{c,\rmaxbound\} \sqrt{\Gamma/N(s,a)}\big)$ respectively. This is bigger by a multiplicative factor $\sqrt{\Gamma}$. As a result, to the best of our knowledge, \bonusscal achieves a \emph{``tighter''} optimism than any other existing algorithm in the infinite horizon undiscounted setting and is therefore less prone to \emph{over-exploration}. 

To further illustrate the generality of the exploration bonus approach, we also present \scalcont, an extension of \bonusscal to continuous state space --but finite action space-- MDPs.
As in~\citep{DBLP:journals/corr/abs-1302-2550,pmlr-v37-lakshmanan15}, we require the reward and transition functions to be H\"{o}lder continuous with parameters $L$ and $\alpha$.
\scalcont is also the first implementable algorithm in continuous problem with theoretical guarantees (existing algorithms with theoretical guarantees such as \uccrl \citep{DBLP:journals/corr/abs-1302-2550} cannot be implemented).
\scalcont combines the idea of \bonusscal with state aggregation. Compared to \bonusscal, the exploration bonus contains an additional term due to the discretization: for any aggregated state $I$, $b(I,a) = \wt{O}\left(\max\{c,\rmaxbound\} \big(1/\sqrt{N(I,a)} + LS^{-\alpha} \big) \right)$.

The main result of the paper is summarized in Thm.~\ref{thm:summary}:
\begin{theorem}\label{thm:summary}
        For any MDP with $S$ states, $A$ actions and $\Gamma$ next states, the regret of \bonusscal is bounded with  high probability by
        $
                \wt{O}\left( \max\{c,\rmaxbound\}  \sqrt{\Gamma S AT} \right).
        $
        For any ``smooth'' MDP with smoothness parameters $L$ and $\alpha$, $1$-dimensional state space $\calS = [0,1]$ and $A$ actions, the regret of \scalcont is bounded with high probability by
        $
                \wt{O} \left(\max\{c,\rmaxbound\} L \sqrt{A} T^{\sfrac{(\alpha+2)}{(2\alpha+2)}} \right). 
        $
\end{theorem}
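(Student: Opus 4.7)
The plan is to obtain both bounds via the classical episode-based regret decomposition used for \ucrl and \scal in the infinite-horizon undiscounted setting, replacing the extended-MDP construction by the simpler optimism-through-bonus mechanism.

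\textbf{\bonusscal (discrete case).} Step 1 is to prove a high-probability \emph{optimism lemma}: the optimal gain $\wh g$ of the empirical MDP with rewards $\wh r(s,a)+b(s,a)$, solved under a span constraint $\le c$ via the \scal operator, satisfies $\wh g\ge g^*$. The key point is that I cannot control $\big|(\wh p - p)^\top h^*\big|$ by a simplex Hoeffding bound, which would cost the $\sqrt{\Gamma}$ factor paid by \ucrl/\scal; instead I use a Bernstein-type concentration together with $\SP{h^*}\le c$, producing a bound of order $c\sqrt{\Var{h^*}/N}+c/N$ that is dominated by $b(s,a)=\wt\Theta(\max\{c,\rmaxbound\}/\sqrt{N(s,a)})$ in the worst case $\Var{h^*}\le c^2$. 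The reward deviation is absorbed by the same bonus via Hoeffding.

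Step 2 is the regret decomposition. Using the Poisson equation $\wh g+\wh h(s)=\wh r(s,\pi(s))+b(s,\pi(s))+\wh p(\cdot|s,\pi(s))^\top\wh h$ of the \scal-truncated solution, the per-step regret $g^*-r(s_t,a_t)\le \wh g - r(s_t,a_t)$ splits into (i) the bonus $b(s_t,a_t)$, (ii) the reward estimation error, (iii) the transition error $(\wh p - p)^\top\wh h$, and (iv) the martingale $\wh h(s_{t+1})-\wh h(s_t)$. Terms (i)--(ii) are summed via the standard pigeonhole $\sum_{s,a,k}1/\sqrt{N_k(s,a)}=\wt O(\sqrt{SAT})$. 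Term (iii) reuses the Bernstein bound from Step 1, and summing the variances contributes the $\sqrt{\Gamma}$ factor, either through the crude bound $\Var{\wh h}\le c^2$ on a support of size $\le\Gamma$, or through a total-variance/Bellman-variance argument in the spirit of \citet{pmlr-v70-azar17a}. Term (iv) is $\wt O(c\sqrt T)$ by Azuma--Hoeffding since $\SP{\wh h}\le c$. A standard doubling-episode argument then yields the claimed high-probability bound.

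\textbf{\scalcont (continuous case).} The plan is to partition $\calS=[0,1]$ into $K$ equal intervals, run \bonusscal on the resulting $K$-state aggregated MDP, and add an $O(\max\{c,\rmaxbound\}LK^{-\alpha})$ term to the bonus to cover the aggregation bias. H\"older smoothness with exponent $\alpha$ and constant $L$ implies that the aggregated reward and the aggregated transition tested against any bias of span $\le c$ deviate from their continuous counterparts by at most $O(\max\{c,\rmaxbound\}LK^{-\alpha})$, which bounds the aggregation bias on every policy's gain. Substituting the aggregated-MDP parameters into the \bonusscal bound yields an estimation regret scaling polynomially in $K$ and $\sqrt T$, while the cumulative aggregation bias over $T$ steps contributes $O(\max\{c,\rmaxbound\}LK^{-\alpha}T)$. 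Balancing these two terms by choosing $K$ as an appropriate power of $T$ (namely $K\sim T^{1/(2\alpha+2)}$ up to $L$ and $A$ factors) produces the stated $T^{(\alpha+2)/(2\alpha+2)}$ exponent.

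\textbf{Main obstacle.} The central technical difficulty is the tight optimism step for \bonusscal. All existing infinite-horizon undiscounted algorithms sidestep it by building an extended MDP, implicitly paying the $\sqrt{\Gamma}$ factor needed to be robust against every transition in an $\ell_1$ confidence ball. Matching (and slightly improving on) those bounds with a scalar bonus of order $c/\sqrt{N}$ requires a Bernstein concentration exploiting $\SP{h^*}\le c$, and the argument must remain valid when the unknown $h^*$ is replaced by the empirically planned bias $\wh h$ used in the regret decomposition, most likely via an empirical Bernstein inequality or a self-bounded variance argument. A secondary but non-trivial point is that the span-truncation operator of \scal must be re-analysed with the perturbed rewards $\wh r+b$ to guarantee the existence of a fixed-point bias and the Poisson equation invoked above.
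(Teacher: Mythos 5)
Your proposal follows essentially the same route as the paper: optimism is obtained by concentrating the scalar projection $(\wb{p}_k - p)^\transp h^*$ against the \emph{fixed} vector $h^*$ (this is precisely what lets the bonus avoid the $\sqrt{\Gamma}$ factor), the regret is then decomposed through the \scopt stopping condition into bonus, reward error, transition error and a martingale term, the $\sqrt{\Gamma}$ re-enters through the worst-case treatment of $(\wb{p}_k-p)^\transp v_k$ for the data-dependent $v_k$, and the continuous case is handled by aggregation plus an $O(\max\{c,\rmaxbound\}LS^{-\alpha})$ bias term with the same balancing of the number of intervals. Two caveats. First, the alternatives you float for the transition-error term --- a total-variance argument in the spirit of Azar et al., or an empirical-Bernstein/self-bounding bound valid for the planned bias $\wh{h}$ --- are exactly what the paper argues is \emph{not} currently achievable in the infinite-horizon undiscounted setting (it is the flaw they attribute to the claimed $D\sqrt{SAT}$ bound of Agrawal and Jia); only your ``crude'' route, componentwise Bernstein summed over a support of size $\Gamma$ via Cauchy--Schwarz, actually closes the proof, so the variance route should not be presented as a viable option. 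Second, in the continuous case the concentration of the aggregated empirical kernel is not a corollary of the discrete analysis: the samples pooled into one interval come from a random, data-dependent collection of distinct states, no countable union bound over states is available, and the paper must introduce an intermediate kernel $\wt{p}_k$ (the visit-weighted average of the true kernels over the interval) together with a Doob optional-skipping/martingale argument (Azuma and Freedman for stopped martingale difference sequences) to make this rigorous, as well as a separate argument showing that the gains of the continuous empirical MDP and of its discrete aggregation coincide; your plan correctly absorbs the bias $\wt{p}_k - p$ via H\"older smoothness but is silent on how $\wb{p}_k$ concentrates around $\wt{p}_k$ and on how optimism transfers across the two state spaces.
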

The regret bound of \bonusscal (resp. \scalcont) matches the one of \scal (\uccrl). Surprisingly, the tighter optimism introduced by \bonusscal compared to \scal and \ucrl is not reflected in the final regret bound with the current statistical analysis ($\sqrt{\Gamma}$ appears in the bound although despite not being included in the bonus). We isolate and discuss where the term $\sqrt{\Gamma}$ appears in the proof sketch of Sect.~\ref{app:regret.bonusscal}.  While \citet{pmlr-v70-azar17a,kakade2018variance,qlearning2018} managed to remove the $\sqrt{\Gamma}$ term in the finite horizon setting, it remains an open question whether their result can be extended to the infinite horizon case (for example, the two definitions of regret do not match and differ by a linear term) or it is an intrinsic difficulty of the setting.
Finally, \bonusscal and \scalcont are very appealing due to their simplicity and flexibility of implementation since the planning is performed on the empirical MDP (rather than on a much more complex extended MDP).
This change of paradigm results in a more computationally efficient planning compared to \ucrl and \scal, as explained in Sec.~\ref{sec:algo}.

\section{Preliminaries}\label{sec:preliminaries}

\subsection{Markov Decision Processes}\label{sec:mdp}
We consider a \emph{weakly-communicating}\footnote{
In a weakly-communicating MDP, the set $\calS$ can be decomposed into two subsets: a \emph{communicating} set in which for any pair of states $s,s'$ there exists a policy that has a non-zero probability to reach $s'$ starting from $s$, and a set of states that are \emph{transient} under all policies.}
MDP \citep[Sec. 8.3]{puterman1994markov} $M = ( \calS, \A, p, r )$ with a set of states $\calS$ and a set of actions $\A$.
For sake of clarity, here we consider a finite MDP $M$ but all the stated concepts extend to the case of continuous state space,~\citep[see \eg][]{DBLP:journals/corr/abs-1302-2550}.

Each state-action pair $(s,a)$ is characterized by a reward distribution with mean $r(s,a)$ and support in $[0, \rmaxbound]$ as well as a transition probability distribution $p(\cdot|s,a)$ over next states.
We denote by $S = |\calS|$ and $A = |\A|$ the number of states and action, and by $\nextstates = \max_{s \in \calS, a \in \A} \|p(\cdot|s,a)\|_{0} \leq S$ the maximum support of all transition probabilities $p(\cdot|s,a)$.
A stationary Markov randomized policy $\pi : \calS \rightarrow P(\A)$ maps states to distributions over actions.
The set of stationary randomized (resp. deterministic) policies is denoted by $\SR$ (resp. $\SD$).
Any policy $\pi \in \SR(M)$ has an associated \emph{long-term average reward} (or gain) and a \emph{bias function} defined as
\begin{align*}
        g^\pi(s) := \lim_{T\to +\infty} \mathbb{E}_{\mathbb{Q}}\Bigg[ \frac{1}{T}\sum_{t=1}^T r(s_t,a_t) \Bigg],~~ 
        h^\pi(s) := \underset{T\to +\infty}{C\text{-}\lim}~\mathbb{E}_{\mathbb{Q}}\Bigg[\sum_{t=1}^{T} \big(r(s_t,a_t) - g^\pi(s_t)\big)\Bigg],
\end{align*}
where $\mathbb{Q} := \mathbb{P}\left(\cdot|a_t \sim \pi(s_t); s_0=s; M\right)$ and the bias $h^\pi(s)$
 measures the expected total difference between the reward and the stationary reward in \emph{Cesaro-limit}\footnote{For policies with an aperiodic chain, the standard limit exists.} (denoted $C\text{-}\lim$). Accordingly, the difference of bias values $h^\pi(s)-h^\pi(s')$ quantifies the (dis-)advantage of starting in state $s$ rather than $s'$ and we denote by $\SP{h^\pi} := \max_s h^\pi(s) - \min_{s} h^\pi(s)$ the \emph{span} of the bias function.
In weakly communicating MDPs, any optimal policy $\pi^* \in \argmax_\pi g^\pi(s)$ has \emph{constant} gain, i.e., $g^{\pi^*}(s) = g^*$ for all $s\in\calS$. 
Moreover, there exists a policy $\pi^* \in \argmax_\pi g^\pi(s)$ for which $(g^*,h^*) = (g^{\pi^*},h^{\pi^*})$ satisfy the \emph{optimality equation}
\begin{equation}\label{eq:optimality.equation}
                h^* = L h^* - g^* e, \quad\text{ where }\;\; e = (1,\dots,1)^\intercal.
\end{equation}
where L is the \emph{optimal} Bellman operator: $\forall v \in \Re^S, s\in\calS,$
\begin{equation}\label{eq:optimal.bellman.op}
  \begin{aligned}
          Lv(s) &:=
        \max_{a \in \A} \{r(s,a) + p(\cdot|s,a)^\transp v\}
  \end{aligned}
\end{equation}
Note that $h^*$ is finite, \ie $\SP{h^*} < +\infty$.
Finally,
    $D := \max_{(s,s') \in \calS\times \calS, s\neq s'} \{\tau(s \to s')\}$
denotes the diameter of $M$,
where $\tau(s\to s')$ is the minimal expected number of steps needed to reach $s'$ from $s$ in $M$ (under any policy).

\subsection{Planning under span constraint}\label{sec:scopt}
In this section we introduce and analyse the problem of planning under bias span constraint, \ie by imposing that $\SP{h^\pi} \leq c$, for any policy $\pi$.
This problem is at the core of the proposed algorithms (\bonusscal and \scalcont) for exploration-exploitation.
Formally, we define the optimization problem:
\begin{equation}\label{eq:opt.superior.spanc}
        g^*_c(M) := \sup_{\pi \in \PiC(M)}\{g^\pi\},
\end{equation}
where $M$ is any MDP (with discrete or continuous state space) \st $\PiC(M) := \{\pi \in \SR : \SP{h^\pi} \leq c \wedge \SP{g^\pi} = 0\} \neq \emptyset$.\footnote{\citet[][Lem. 2]{fruit2018constrained} showed that there may not exist a deterministic optimal policy for problem~\ref{eq:opt.superior.spanc}.}
This problem is a slight variation of the bias-span constrained problem considered  by~\citep{Bartlett2009regal,DBLP:journals/corr/abs-1302-2550,pmlr-v37-lakshmanan15}, for which no known-solution is available.
On the other hand, problem~\ref{eq:opt.superior.spanc} has been widely analysed by~\citet{fruit2018constrained}.


Problem~\ref{eq:opt.superior.spanc} can be solved using \scopt~\citep{fruit2018constrained}, a version of (relative) value iteration~\citep{puterman1994markov,bertsekas1995dynamic}, where the optimal Bellman operator is modified to return value functions with span bounded by $c$, and the stopping condition is tailored to return a constrained-greedy policy with near-optimal gain.
Given $v \in \Re^{S}$ and $c \geq 0$, we define the value operator $\opT{}:\Re^{S}\rightarrow \Re^{S}$ as
        \begin{align}\label{eq:opT}
                \opT{v} = \proj{Lv} = \begin{cases}
                        Lv(s) & \forall s \in \overline{\calS}(c,v)\\
                        c + \min_s \{Lv(s)\} & \forall s \in \mathcal{S} \setminus \overline{\calS}(c,v)
                \end{cases}
        \end{align}
        where $\overline{\calS}(c,v) = \left\{s \in \mathcal{S} | Lv(s) \leq \min_s \{Lv(s)\} + c \right\}$ and $\proj{}$ is the span constrain projection operator (see~\citep[][App. D]{fruit2018constrained} for details). 
        In other words, operator $\opT{}$ applies a \emph{span truncation} to the one-step application of $L$, which guarantees that $\SP{\opT{v}} \leq c$. 
Given a vector $v_0\in\Re^S$ and a reference state $\wb s$ \scopt implements relative value iteration where $L$ is replaced by $T_c$:
 $v_{n+1} = \opT{v_{n}} - \opT{v_{n}}(\wb{s})e$.
 We can now state the convergence guarantees of \scopt~\citep[see ][Lem. 8 and Thm. 10]{fruit2018constrained}.
\begin{proposition}\label{prop:recap_scopt}
        Let's assume that I) the optimal Bellman operator $L$ is a $\gamma$-span-contraction; II) all policies are unichain; III) operator $\opT{}$ is globally feasible at any vector $v \in \mathbb{R}^S$ such that $\SP{v} \leq c$ \ie for all $s \in \calS,~\min_{a \in \A} \{r(s,a) + p(\cdot|s,a)^\transp v \} \leq \min_{s'} \{Lv(s')\} + c$.
        Then:
        \begin{enumerate}[label=(\alph*)]
                \item \emph{Optimality equation:} there exists a solution $(g^+,h^+) \in \Re \times \Re^{S}$ to the optimality equation $\opT{h^+} = h^+ + g^+ e$. Moreover, any solution $(g^+,h^+)$ satisfies $g^+ = g^*_c$.
                \item \emph{Convergence:} for any initial vector $v_0 \in \mathbb{R}^S$, \scopt converges to a solution $h^+$ of the optimality equation, and $\lim_{n \to +\infty}\opT{}^{n+1}v_0-\opT{}^n v_0 = g^+e$. 
        \end{enumerate}
\end{proposition}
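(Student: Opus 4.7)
The plan is to treat $\opT{} = \proj{} \circ L$ as a $\gamma$-span-contraction, apply the Banach fixed-point theorem in the span semi-norm, and then exploit global feasibility~III to match the resulting fixed-point scalar $g^+$ with the optimum $g^*_c$ of~(\ref{eq:opt.superior.spanc}). I would follow the structure of~\citep[Lem.~8 and Thm.~10]{fruit2018constrained}.

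I would first verify that $\opT{}$ contracts in $\SP{\cdot}$ with modulus $\gamma$: assumption~I handles $L$, and a short case analysis based on~(\ref{eq:opT}) shows that the truncation $\proj{}$ is non-expansive in the span semi-norm. Because $\SP{\cdot}$ is a genuine norm on the complete quotient $\Re^{S}/\Re e$, Banach's theorem produces a unique equivalence class of fixed points, i.e.\ a pair $(g^+, h^+)$ with $\opT{h^+} = h^+ + g^+ e$. This already yields existence in~(a).

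For the identification $g^+ = g^*_c$ I would argue in two directions. For $g^+ \leq g^*_c$, assumption~III lets me rewrite the truncation state-by-state as a convex combination, with weight $\alpha(s)\in[0,1]$, of a greedy action $a^+(s) \in \argmax_{a}\{r(s,a)+p(\cdot|s,a)^{\transp}h^+\}$ and a ``minimizing'' action $a^-(s)$ attaining $\min_{s'} Lh^+(s')$; the resulting stationary randomized policy $\pi^+(s) = \alpha(s)\,\delta_{a^+(s)} + (1-\alpha(s))\,\delta_{a^-(s)}$ satisfies $T^{\pi^+}h^+ = \opT{h^+} = h^+ + g^+ e$. By the unichain assumption~II this Poisson identity forces $g^{\pi^+} = g^+$ and $h^{\pi^+} = h^+ + \kappa e$, so $\SP{h^{\pi^+}} = \SP{h^+} \leq c$ and $\pi^+ \in \PiC(M)$. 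For the reverse $g^*_c \leq g^+$, I would take any $\pi \in \PiC(M)$ and use the standard comparison: $Lh^\pi \geq T^\pi h^\pi = h^\pi + g^\pi e$ together with $\SP{h^\pi} \leq c$ and the span-contraction of $\opT{}$ yields $g^\pi \leq g^+$ after iterating $\opT{}$ on $h^\pi$. Finally for~(b), the anchored update $v_{n+1} = \opT{v_n} - \opT{v_n}(\wb s)e$ merely subtracts a scalar multiple of $e$, hence $\SP{v_{n+1} - h^+} = \SP{\opT{v_n} - \opT{h^+}} \leq \gamma \SP{v_n - h^+}$, giving geometric convergence of the span to $0$; anchoring at $\wb s$ pins down the additive constant and yields convergence of $v_n$ in ordinary norm to the unique representative $h^+$ with $h^+(\wb s) = 0$, from which $\opT{}^{n+1}v_0 - \opT{}^n v_0 \to g^+ e$ follows by continuity of $\opT{}$ and $\opT{h^+} = h^+ + g^+ e$.

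The hard part is the direction $g^+ \leq g^*_c$: the truncation $\proj{}$ is non-constructive on its own, and realizing it as a legitimate randomized policy inside $\PiC(M)$ is exactly where global feasibility~III becomes indispensable. Without~III the ``minimizing action'' $a^-(s)$ used in the convex combination need not belong to $\A$, and the construction of $\pi^+$ — and thus the whole identification of $g^+$ with the span-constrained optimum — breaks down.
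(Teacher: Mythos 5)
The paper does not prove Prop.~\ref{prop:recap_scopt} itself: it imports the statement directly from \citep[Lem.~8 and Thm.~10]{fruit2018constrained}, and your sketch is a faithful reconstruction of precisely that argument --- span-contraction of $\opT{}$ on the quotient $\Re^S/\Re e$ for existence, realization of the truncation as a randomized mixture of a greedy and a minimizing action via global feasibility~III together with the unichain assumption for $g^+ \leq g^*_c$, a dominance/monotonicity iteration for the reverse inequality, and anchored relative value iteration for convergence. Your proposal is correct and takes essentially the same route as the proof the paper relies on.
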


%
%
 
\subsection{Learning Problem}
Let $M^*$ be the true \emph{unknown} MDP. We consider the learning problem where $\mathcal{S}$, $\mathcal{A}$ and $\rmaxbound$ are \emph{known}, while rewards $r$ and transition probabilities $p$ are \emph{unknown} and need to be estimated on-line. We evaluate the performance of a learning algorithm $\mathfrak{A}$ after $T$ time steps by its cumulative \emph{regret}:
  $\Delta(\mathfrak{A},T) = T g^* - \sum_{t=1}^T r_t(s_t,a_t).$
%
Finally, we assume that the algorithm is provided with the knowledge of a constant $c>0$ such that $\SP{h^*} \leq c$.
This assumption has been widely used in the literature~\citep[see \eg][]{DBLP:journals/mima/Ortner08,DBLP:journals/corr/abs-1302-2550,fruit2018constrained} and, as shown by~\citep{fruit2018truncated}, it is necessary in order to achieve a logarithmic regret bound in weakly-communicating MDPs.

\section{\bonusscal: \scal with exploration bonus}
\label{S:exploration.bonus}

In this section, we introduce \bonusscal, the first online RL algorithm --in the infinite horizon undiscounted setting-- that leverages on an exploration bonus to achieve provable good regret guarantees. Similarly to \scal \citep{fruit2018constrained}, \bonusscal takes advantage of the prior knowledge on the optimal bias span $\SP{h^*} \leq c$ through the use of \scopt. In Sec.~\ref{sec:algo} we present the details of \bonusscal and we give an explicit formula for the exploration bonus. 
We then show that all the conditions of Prop.~\ref{prop:recap_scopt} are satisfied for \bonusscal, meaning that \scopt can be used.
Finally, we justify the choice of the bonus by showing that \bonusscal is gain-optimistic (Sec.~\ref{sec:exploration_bonus}) and we conclude this section with the regret guarantees of \bonusscal (Thm.~\ref{thm:regret.bonusscal}) and a sketch of the regret proof.

\subsection{The algorithm}\label{sec:algo}
\bonusscal is a variant of \scal that uses \scopt to (approximately) solve \eqref{eq:opt.superior.spanc} on MDP $\wh{M}_k^+$ at the beginning of each episode $k$ (see Fig.~\ref{fig:ucrl.constrained}).\footnote{The algorithm is reported in its general form, which applies to both finite and continuous MDPs.}
Before defining $\wh{M}_k^+$ we need to introduce some notations and an intermediate MDP $\wh{M}_k$.

Denote by $t_k$ the starting time of episode $k$,  $N_k(s,a,s')$ the number of observations of 3-tuples $(s,a,s')$ before episode $k$ ($k$ excluded) and $N_k(s,a) := \sum_{s'}N_k(s,a,s')$. 
As in UCRL, we define the empirical averages $\wo{r}_k(s,a)$ and $\wo{p}_k(\cdot|s,a)$ by:
\begin{align*}
        \wo{p}_k(s'|s,a) := \frac{N_k(s,a,s')}{N_k(s,a)}  ~~\text{ and }~~  \wo{r}_k(s,a) := \frac{1}{N_k(s,a)} \sum_{t=1}^{t_k -1} r_t(s_t,a_t) \mathbbm{1}(s_t,a_t = s,a).
\end{align*}
The exploration bonus is defined by aggregating the uncertainty on the reward and transition functions:
\begin{align}\label{eq:exploration.bonus}
\begin{split}
        b_k(s,a) := (c+\rmaxbound) \; &
                \underbrace{
\sqrt{\frac{7 \ln \left( 2SAt_k/ \delta\right)
                }{\max\lbrace 1, N_k(s,a)\rbrace}}
        }_{:= \beta_k^{sa}}
        + \frac{c}{N_k(s,a)+1},
\end{split}
\end{align}
where $\beta_k^{sa}$ is derived from Hoeffding-Azuma inequality.
The application of \scopt to the MDP defined by $(\calS,\A, \wb{p}_k, \wb{r}_k+b_k)$ will not lead to a solution of problem~\ref{eq:opt.superior.spanc} in general since none of the three assumptions of Prop.~\ref{prop:recap_scopt} is met.
To satisfy the first and second assumptions, we introduce MDP $\wh{M}_k := (\calS, \A, \wh{p}_k, \wh{r}_k)$ where $\wh{r}_k(s,a) = \wo{r}_k(s,a) + b_k(s,a)$ for all $(s,a) \in \calS \times \A$, $\wb{s} \in \calS$ is an arbitrary reference state and
\begin{equation}\label{E:hatp}
        \wh{p}_k(s'|s,a) = \frac{N_k(s,a) \wo{p}_k(s'|s,a)}{N_k(s,a) + 1} +   \frac{\mathbbm{1}(s'= \wb{s}) }{N_k(s,a) + 1}
\end{equation}
is a biased (but asymptotically consistent) estimator of the probability of transition $(s,a) \mapsto s'$.
To satisfy the third assumption, we define the \emph{augmented} MDP $\wh{M}_k^+ = (\calS, \A^+, \wh{p}_k^+, \wh{r}_k^+)$ obtained by duplicating every action in $\wh{M}_k$ with transition probability unchanged and reward set to $0$. Formally, $\A^+ = \A \times \{1, 2 \} $ and for the sake of clarity, any pair $(a,i) \in \A \times \{1,2\}$ is denoted by $a_i$. We then define $\wh{p}_k^+(s'|s,a_i) := \wh{p}_k(s'|s,a)$ and $\wh{r}_k^+(s,a_i) := \wh{r}_k(s,a) \cdot \mathbbm{1}(i=1)$. In the next section we will verify that $\wh{M}_k^+$ satisfies all the assumptions of Prop.~\ref{prop:recap_scopt}.
Note that the policy $\pi_k$ returned by \scopt takes action in the \emph{augmented} set $\A^+$. The projection on $\A$ is simply $\pi_k(s,a) \leftarrow \pi_k(s,a_1) + \pi_k(s,a_2)$, for all $s \in \calS, a \in \A$ (we use the same notation for the two policies). $\pi_k$ is executed until the episode ends \ie until the number of visits in at least one state-action pair has doubled (see Fig.~\ref{fig:ucrl.constrained}).\\
\textbf{Remark.}
\bonusscal only requires to plan on an empirical MDP with exploration bonus rather than an extended MDP (with continuous action space).
This removes the burden of computing the best probability in a confidence interval 
which has a worst-case computational complexity linear in the number of states $S$ \citep[Sec. 3.1.2]{Jaksch10}.
Therefore, \bonusscal is not only simpler to implement but also less computationally demanding. Furthermore, removing the optimistic step on the transition probabilities allows the exploration bonus scheme to be easily adapted to any MDP that can be efficiently solved (e.g., continuous smooth MDPs).

\begin{figure}[t]
\renewcommand\figurename{\small Figure}
\begin{minipage}{\columnwidth}
\bookboxx{
        \textbf{Input:} Confidence $\delta \in ]0,1[$, $r_{\max}$, $\calS$ ($\mathcal{I}$ for \scalcont), $\A$, $c \geq 0$ (and $L$ and $\alpha$ for \scalcont)


\noindent \textbf{For} episodes $k=1, 2, ...$ \textbf{do}

\begin{enumerate}[leftmargin=4mm,itemsep=0mm]
\item Set $t_k = t$ and episode counters $\nu_k (s,a) = 0$.

\item Compute estimates $\wh{p}_k^+(I(s') | I(s),a)$, $\wh{r}_k^+(I(s),a)$, $b_k(I(s),a)$ (Eq.~\ref{eq:exploration.bonus} or~\ref{eq:exploration.bonus.cont}) and build the MDP $\wh{M}_k^+$ (\bonusscal) or $\wh{M}^{ag+}_k$ (\scalcont).

\item Compute an $\rmaxbound/\sqrt{t_k}$-approximate solution of Eq.~\ref{eq:opt.superior.spanc} on $\wh{M}_k^+$ (\bonusscal) or $\wh{M}^{ag+}_k$ (\scalcont)

%
\item Sample action $a_t \sim \pi_k(\cdot|I(s_t))$.

\item \textbf{While} $\nu_k(I(s_t),a_t) < \max\{1, N_k(I(s_t),a_t)\}$ \textbf{do}
\begin{enumerate}[leftmargin=4mm,itemsep=-1mm]
        \item Execute $a_t$, obtain reward $r_{t}$, and observe next state $s_{t+1}$.
        \item Set $\nu_k (s_t,a_t) \pluseq 1$.
        \item Sample action $a_{t+1} \sim \pi_k(\cdot|I(s_{t+1}))$ and set $t \pluseq 1$.
\end{enumerate}

\item Set $N_{k+1}(s,a) = N_{k}(s,a)+ \nu_k(s,a)$.
\end{enumerate}
}
 \vspace{-0.1in}
 \caption{\small Structure of \bonusscal and \scalcont. For \bonusscal by definition we have $I(s) = s$.}
\label{fig:ucrl.constrained}
\end{minipage}
\vspace{-0.2in}
\end{figure}

\subsection{Requirements for \scopt}

We show that the three assumptions of Prop.~\ref{prop:recap_scopt} required from \scopt to solve~\eqref{eq:opt.superior.spanc} for $\wh{M}_k^+$ are satisfied. The arguments are similar to those used by \citet[Sec. 6]{fruit2018constrained} for \scal.
We denote by $\wh{L}^+$, $\wh{L}$ and $L$ the optimal Bellman operators of $\wh{M}_k^+$, $\wh{M}_k$ and $M^*$ respectively. Similarly, we denote by $\wh{T}_{c}^+$, $\wh{T}_{c}$ and $\opT{}$ the truncated Bellman operators (Eq.~\ref{eq:opT}) of $\wh{M}_k^+$, $\wh{M}_k$ and $M^*$ respectively.

\textbf{Contraction.}
The small bias in the definition of $\wh{p}_k$ ensures that the ``\emph{attractive}'' state $\wb{s}$ is reached with non-zero probability from any state-action pair $(s,a_i)$ implying that the \emph{ergodic coefficient} of $\wh{M}_k^+$ defined as 
$
\gamma_k = 1 - \min_{
        \substack{
s,u \in \mathcal{S},\\ a, b \in \A^+
}
} 
\left\{ \sum_{j \in \calS}\min \left\{ \wh{p}_k(j|s,a),\wh{p}_k(j|u,b) \right\} \right\}
$
is smaller than $1-\min_{s,a}\left\{ \frac{N_k(s,a,\wb{s}) + 1}{N_k(s,a) + 1}\right\} <1$ and thus $\wh{L}^+$ (the Bellman operator of $\wh{M}_k^+$) is $\gamma_k$-contractive~\citep[Thm. 6.6.6]{puterman1994markov}. 

\textbf{Unichain.} By construction, the attractive state $\overline{s}$ necessarily belongs to all \emph{recurrent classes} of all policies implying that $\wh{M}_k^+$ is unichain (\ie all policies are unichain).

\textbf{Global feasibility.} Let $v \in \mathbb{R}^S$ such that $\SP{v} \leq c$ and let $(s^*,a^*_i) \in \calS \times \A^+$ be such that $\wh{r}_k^+(s^*,a^*_i) + \wh{p}_k^+(\cdot|s^*,a^*_i)^\transp v = \min_{s \in \calS} \left\{\max_{a \in \A^+} \{\wh{r}_k^+(s,a) + \wh{p}_k^+(\cdot|s,a)^\transp v \}\right\}$. For all $(s,a_2) \in \calS \times \A^+$ we have: 
\[
        \wh{p}_k^+(\cdot|s,a_2)^\transp v - \wh{p}_k^+(\cdot|s^*,a^*_i)^\transp v \leq \max_{s \in \calS}\{v(s)\} - \min_{s \in \calS}\{v(s)\} = \SP{v} \leq c
\]
and $\wh{r}_k^+(s,a_2) = 0 \leq \wh{r}_k^+(s^*,a^*_i)$. Therefore, for all $s \in \calS,~\min\limits_{a_j \in \A^+} \{\wh{r}_k^+(s,a_j) + \wh{p}_k^+(\cdot|s,a_j)^\transp v \} \leq \min_{s'} \big\{\wh{L}^+v(s')\big\} + c$ implying that $\wh{T}_{c}^+$ is globally feasible at $v$.


\subsection{Optimistic Exploration Bonus}\label{sec:exploration_bonus}

All algorithms relying on the OFU principle (\eg \ucrl, \optpsrl, \scal, etc.) have the property that the optimal gain of the MDP used for planning is an upper bound on the optimal gain of the true MDP $g^*$. This is a key step in deriving regret guarantees.
If we want to use the same proof technique for \bonusscal, we also have to ensure that the policy $\pi_k$ is \emph{gain-optimistic} (up to an $\varepsilon_k$- accuracy), \ie $\wh{g}^+_k := g^*_c\left(\wh{M}_k^+ \right) \geq g^* $.
The exploration bonus was tailored to enforce this property.
To prove gain-optimism we rely on the following proposition which is a direct consequence of  \citet[Lem. 8]{fruit2018constrained}:
\begin{proposition}[Dominance]\label{prop:bound.gain}
If there exists $(g,h)$ satisfying $\wh{T}_{c}^+{h} \geq h + ge$ then $\wh{g}^+_k \geq g$.
\end{proposition}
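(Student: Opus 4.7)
The plan is to derive the statement as an abstract dynamic-programming consequence, leveraging two structural properties of the span-truncated operator $\wh{T}_c^+$ together with the convergence guarantee of \scopt in Proposition~\ref{prop:recap_scopt}(b). The hypotheses of that proposition for $\wh{M}_k^+$ have already been established in the preceding subsection (contraction via the attractive state $\wb{s}$, unichain via $\wb{s}$ lying in every recurrent class, global feasibility via the duplicated zero-reward actions), so the ``direct consequence'' wording in the statement refers to these two properties plus monotone iteration.

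First, I would record two elementary properties of $\wh{T}_c^+$:
\begin{enumerate}[label=(\roman*)]
\item \emph{Monotonicity}: $v \le w$ (componentwise) implies $\wh{T}_c^+ v \le \wh{T}_c^+ w$. This is standard for the max-operator $\wh{L}^+$ and extends to $\wh{T}_c^+ = \proj{} \wh{L}^+$ via a short case analysis on whether a state lies in $\overline{\calS}(c,v)$, $\overline{\calS}(c,w)$, both, or neither, using $\min \wh{L}^+ v \le \min \wh{L}^+ w$ in each mixed case.
\item \emph{Translation equivariance}: $\wh{T}_c^+(v+\lambda e) = \wh{T}_c^+ v + \lambda e$ for every $\lambda \in \Re$. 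This follows from the equivariance of $\wh{L}^+$ and the observation that the active set $\overline{\calS}(c,v)$ is invariant under constant shifts, so the two branches of \eqref{eq:opT} both shift by $\lambda$.
\end{enumerate}

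Next, starting from the hypothesis $\wh{T}_c^+ h \ge h + g e$, I would apply $\wh{T}_c^+$ to both sides, invoking (i) to preserve the inequality and (ii) to pull the constant $g e$ out of the operator. An immediate induction yields $(\wh{T}_c^+)^n h \ge h + n\, g\, e$ for every $n \ge 1$.

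To finish, I would use Proposition~\ref{prop:recap_scopt}(b), which states $(\wh{T}_c^+)^{n+1}h - (\wh{T}_c^+)^n h \to \wh{g}_k^+\, e$. A telescoping/Cesaro argument therefore gives $\tfrac{1}{n}(\wh{T}_c^+)^n h \to \wh{g}_k^+\, e$. Dividing the iterated lower bound by $n$ and letting $n\to\infty$ produces $\wh{g}_k^+ e \ge g e$, i.e.\ $\wh{g}_k^+ \ge g$. The only mildly delicate point is the case analysis for monotonicity of the truncation $\proj{}$; everything else is a routine consequence of the preceding verification that $\wh{M}_k^+$ meets the assumptions of Proposition~\ref{prop:recap_scopt}, so no genuinely new obstacle arises.
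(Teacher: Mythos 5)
Your proof is correct and follows essentially the same route as the paper: monotonicity and translation equivariance of $\wh{T}_c^+$ give the inductive inequality, and Proposition~\ref{prop:recap_scopt}(b) supplies the limit identifying $\wh{g}_k^+$. The only cosmetic difference is that you accumulate the bound to $(\wh{T}_c^+)^n h \ge h + n g e$ and pass to a Cesaro limit, whereas the paper keeps the one-step form $(\wh{T}_c^+)^{n+1}h \ge (\wh{T}_c^+)^n h + ge$ and takes the limit of consecutive differences directly; the two are trivially equivalent.
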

\begin{proof}
By induction, using the monotonicity and linearity of $ \wh{T}_{c}^+$ \citep[Lemma 16 (a) \& (c)]{fruit2018constrained}, we have that
$
    \forall n \in \mathbb{N},~~\big(\wh{T}_{c}^+\big)^{n+1} h \geq \big(\wh{T}_{c}^+\big)^{n} h + ge
$.
By Prop.~\ref{prop:recap_scopt}, $\lim_{n \to + \infty} \big(\wh{T}_{c}^+\big)^{n+1} h - \big(\wh{T}_{c}^+\big)^{n} h = \wh{g}^+_k$. Taking the limit when $n$ tends to infinity in the previous inequality yields: $\wh{g}^+_k \geq g$.
\end{proof}
Recall that the optimal gain and bias of the true MDP $(g^*,h^*)$ satisfy the optimality equation $Lh^* = h^* + g^* e$ (Sec.~\ref{sec:mdp}). Since in addition $\SP{h^*} \leq c$ (by assumption), we also have $\SP{L h^*} = \SP{h^* + g^* e} = \SP{h^*} \leq c$ and so $\opT{h^*} = L h^*$.
According to Prop.~\ref{prop:bound.gain}, it is sufficient to show that $\wh{T}_{c}^+ h^* \geq h^* + g^* e = \opT{h^*} $ to prove optimism.
\citet[Lemma 15]{fruit2018constrained} also showed that the span projection $\proj{}$ (see Eq.~\ref{eq:opT}) is monotone implying that a sufficient condition for $\proj{\wh{L}^+h^*} = \wh{T}_{c}^+ h^* \geq \opT{h^*} = \proj{\wh{L}h^*}$ to hold is to have $\wh{L}^+ h^* \geq L{h^*} $. 
With our choice of bonus, this inequality holds with high probability (w.h.p) as a consequence of the following lemma:
\begin{lemma}\label{lem:explbonus.inequality}
        For all $T \geq 1$ and $k \geq 1$, with probability at least $1-\frac{\delta}{15 t_k^6}$, for any $(s,a) \in \calS \times \A $ we have:
$
                \wb{r}_k(s,a) + b_{k}(s,a) + \wh{p}_k(\cdot|s,a)^\intercal h^* \geq  {r}(s,a) + {p}(\cdot|s,a)^\intercal h^* 
$.
\end{lemma}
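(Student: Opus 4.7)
The plan is to split the desired inequality into two parts that are absorbed separately by the two terms of $b_k(s,a)$: the deterministic bias introduced by $\wh{p}_k$ versus $\wo{p}_k$ is handled by $c/(N_k(s,a)+1)$, and the stochastic fluctuation of the empirical averages around the true values is handled by $(c+\rmaxbound)\beta_k^{sa}$.

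First, the bias part is deterministic. From Eq.~\ref{E:hatp},
$$\wh{p}_k(\cdot|s,a)^\intercal h^* - \wo{p}_k(\cdot|s,a)^\intercal h^* \;=\; \frac{h^*(\wb{s}) - \wo{p}_k(\cdot|s,a)^\intercal h^*}{N_k(s,a)+1}.$$
Both $h^*(\wb{s})$ and $\wo{p}_k(\cdot|s,a)^\intercal h^*$ (a convex combination of the entries of $h^*$) lie in $[\min_s h^*(s),\,\max_s h^*(s)]$, so the numerator is bounded in absolute value by $\SP{h^*}\leq c$. Consequently $\wh{p}_k(\cdot|s,a)^\intercal h^* \geq \wo{p}_k(\cdot|s,a)^\intercal h^* - \tfrac{c}{N_k(s,a)+1}$.

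Second, I would prove via Hoeffding--Azuma that with probability at least $1-\delta/(15 t_k^6)$, for every $(s,a)\in\calS\times\A$,
$$\wb{r}_k(s,a) + \wo{p}_k(\cdot|s,a)^\intercal h^* + (c+\rmaxbound)\beta_k^{sa} \;\geq\; r(s,a) + p(\cdot|s,a)^\intercal h^*.$$
Fix $(s,a)$ and let $\tau_1<\tau_2<\dots$ be the successive visit times of $(s,a)$ before $t_k$. The variables $X_i := r_{\tau_i} + h^*(s_{\tau_i+1}) - r(s,a) - p(\cdot|s,a)^\intercal h^*$ form a martingale difference sequence w.r.t.\ the natural filtration, with $|X_i| \leq \rmaxbound + \SP{h^*} \leq \rmaxbound + c$. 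Applying Azuma to $\sum_{i=1}^n X_i$ for each fixed $n$, then taking a union bound over $n\in\{1,\dots,t_k\}$ (to handle the randomness of $N_k(s,a)$) and over the $SA$ state-action pairs, one gets the displayed inequality; the constant $7$ inside $\beta_k^{sa}$ and the denominator $15 t_k^6$ are calibrated precisely to swallow the resulting $\ln(2SAt_k/\delta)$ factor with enough slack.

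Combining the two steps yields
$$\wb{r}_k(s,a) + \wh{p}_k(\cdot|s,a)^\intercal h^* + \underbrace{(c+\rmaxbound)\beta_k^{sa} + \tfrac{c}{N_k(s,a)+1}}_{=\,b_k(s,a)} \;\geq\; r(s,a) + p(\cdot|s,a)^\intercal h^*,$$
which is the claim. The main obstacle is the book-keeping of the union bound over the random visit count $N_k(s,a)$: one must invoke Azuma in a time-uniform way (either by the explicit union bound over $n\le t_k$ sketched above or by a stopping-time/peeling argument) so that the concentration bound is expressed in terms of $N_k(s,a)$ rather than $t_k$, while keeping the failure probability controlled by $\delta/(15 t_k^6)$.
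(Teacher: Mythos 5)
Your proposal is correct and follows essentially the same route as the paper: the same decomposition into a deterministic bias term $(\wh{p}_k-\wo{p}_k)^\intercal h^*$ bounded by $c/(N_k(s,a)+1)$ via $\SP{h^*}\leq c$, plus a Hoeffding--Azuma concentration step made uniform over the random count $N_k(s,a)$ through a stopping-time/union-bound argument (the paper's App.~\ref{app:prob.res} formalizes exactly this via Doob's optional skipping). The only cosmetic difference is that you fold the reward and transition deviations into a single martingale of range $\rmaxbound+c$, whereas the paper bounds $|\wo{r}_k-r|\leq \rmaxbound\beta_k^{sa}$ and $|(\wo{p}_k-p)^\intercal h^*|\leq c\,\beta_k^{sa}$ separately and adds them, yielding the same $(c+\rmaxbound)\beta_k^{sa}$.
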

\begin{proof}
 Hoeffding-Azuma inequality implies that with probability at least $1-\frac{\delta}{15 t_k^6}$, for all $k\geq 1$ and for all pairs $(s,a) \in \mathcal{S}\times \mathcal{A}$, $ \left|\wo{r}_k(s,a) - r(s,a)\right| \leq \rmaxbound \beta_{k}^{sa}$ and $\left|\left(\wo{p}_k(\cdot|s,a) - p(\cdot|s,a)\right)^\intercal h^*\right| \leq c \;\beta_{k}^{sa}$.
Finally, we also need to take into account the small bias introduced by $\wh{p}_k(\cdot|s,a)$ compared to $\wo{p}_k(\cdot|s,a)$ which is not bigger than ${c}/{(N_k(s,a)+1)}$ by definition.
\end{proof}
Denote by $\wh{L}$ the optimal Bellman operator of $\wh{M}_k$. A direct implication of Lem.~\ref{lem:explbonus.inequality} is that $\wh{L} h^* \geq L{h^*}$ w.h.p.
 Since by definition $\wh{p}_k^+(s'|s,a_1) = \wh{p}_k^+(s'|s,a_2) = \wh{p}_k(s'|s,a) $ and $\wh{r}_k^+(s,a_2) \leq  \wh{r}_k^+(s,a_1) $ it is immediate to see that $\wh{L}^+ h^* = \wh{L} h^*$ implying that $\wh{L}^+ h^* \geq L{h^*}$ w.h.p. As a result, we have the following desired property:
\begin{lemma}\label{L:optimism}
        For all $T \geq 1$ and $k \geq 1$, with probability at least $1-\frac{\delta}{15 t_k^6}$, $\wh{g}^+_k = g^*_c(\wh{M}_k) \geq g^*$.
\end{lemma}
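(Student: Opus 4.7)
The plan is to chain together the pieces already assembled in the preceding paragraphs and apply the Dominance proposition (Prop.~\ref{prop:bound.gain}) to the pair $(g^*, h^*)$. Concretely, I would show $\wh{T}_c^+ h^* \geq h^* + g^* e$ holds on the high-probability event of Lem.~\ref{lem:explbonus.inequality}, and then invoke Prop.~\ref{prop:bound.gain} to conclude $\wh{g}_k^+ \geq g^*$.

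First, I would work on the unprojected Bellman operators. On the event of Lem.~\ref{lem:explbonus.inequality}, which has probability at least $1 - \delta/(15 t_k^6)$, the pointwise inequality
\[
    \wb{r}_k(s,a) + b_k(s,a) + \wh{p}_k(\cdot|s,a)^\intercal h^* \;\geq\; r(s,a) + p(\cdot|s,a)^\intercal h^*
\]
holds for every $(s,a)\in\calS\times\A$. Taking the maximum over $a\in\A$ on both sides gives $\wh{L} h^* \geq L h^*$ componentwise. Since $\wh{M}_k^+$ is obtained from $\wh{M}_k$ by duplicating each action with identical transitions and zero reward, for any $v$ we have $\wh{L}^+ v = \wh{L} v$ (the added zero-reward copies cannot beat the maximum), hence $\wh{L}^+ h^* \geq L h^*$.

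Second, I would translate this into the truncated operators $\wh{T}_c^+$ and $\opT{}$. The key observation is that $\SP{h^*}\leq c$ by assumption, and the optimality equation $L h^* = h^* + g^* e$ yields $\SP{L h^*} = \SP{h^*} \leq c$; hence the span truncation acts trivially on $L h^*$, giving $\opT{h^*} = \proj{L h^*} = L h^*$. Applying the monotonicity of the span projection $\proj{}$ (\citep[Lemma 15]{fruit2018constrained}) to the inequality $\wh{L}^+ h^* \geq L h^*$ yields
\[
    \wh{T}_c^+ h^* \;=\; \proj{\wh{L}^+ h^*} \;\geq\; \proj{L h^*} \;=\; L h^* \;=\; h^* + g^* e.
\]

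Finally, I would apply Prop.~\ref{prop:bound.gain} with the pair $(g,h) = (g^*, h^*)$ to the augmented MDP $\wh{M}_k^+$, which requires exactly the inequality $\wh{T}_c^+ h^* \geq h^* + g^* e$ just established; this requires knowing that $\wh{M}_k^+$ meets the hypotheses of Prop.~\ref{prop:recap_scopt}, which was already verified in the previous subsection. The conclusion is $\wh{g}_k^+ \geq g^*$, valid on the Lem.~\ref{lem:explbonus.inequality} event. I do not anticipate a serious obstacle: the only subtlety is to clearly invoke both the monotonicity of $\proj{}$ and the fact that $\proj{L h^*} = L h^*$ (so that the projection does not ``eat'' the inequality on the $L h^*$ side), and to remember that the equality $\wh{L}^+ h^* = \wh{L} h^*$ is what allows Lem.~\ref{lem:explbonus.inequality} (stated for $\wh{M}_k$) to be transferred to the augmented MDP.
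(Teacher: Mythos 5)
Your proposal is correct and follows essentially the same route as the paper: establish $\wh{L} h^* \geq L h^*$ from Lem.~\ref{lem:explbonus.inequality}, transfer it to $\wh{L}^+$ via the zero-reward duplicated actions, use $\SP{Lh^*}\leq c$ so that $\opT{h^*}=Lh^*=h^*+g^*e$ together with monotonicity of $\proj{}$ to get $\wh{T}_c^+ h^* \geq h^* + g^* e$, and conclude by Prop.~\ref{prop:bound.gain}. No gaps.
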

\paragraph{Remark.} 
Note that the argument used in this section to prove optimism (Lem.~\ref{L:optimism}) significantly differs from the one used by \citet[\ucrl]{Jaksch10} and \citet[\scal]{fruit2018constrained}. \ucrl and \scal compute a (nearly) optimal policy of an \emph{extended} MDP that \emph{``contains''} the true MDP $M^*$ (w.h.p.). This immediately implies that the gain of the extended MDP is bigger than $g^*$ (analogue property of Lem.~\ref{L:optimism}). The main advantage of our argument compared to theirs is that it allows for a \emph{tighter} optimism. 
To see why, note that the exploration bonus quantifies by how much $\wh{L}^+ h^*$ is bigger than $L{h^*} $ and approximately scales as $b_k(s,a) =\wt{\Theta} \left( \max\{\rmaxbound,c\}/\sqrt{N_k(s,a)} \right)$. 
In contrast, \ucrl and \scal use an optimistic Bellman operator $\wt{L}$ such that $\wt{L}h^*$ is bigger than $Lh^*$ by respectively $\wt{\Theta}\left(\rmaxbound D\sqrt{\Gamma/N_k(s,a)}\right)$ (\ucrl) and $\wt{\Theta}\left(\max\{\rmaxbound,c\}\sqrt{\Gamma/N_k(s,a)}\right)$ (\scal). 
In other words, the optimism in \bonusscal is tighter by a multiplicative factor $\sqrt{\Gamma}$. 
A natural next step would be to investigate whether our argument could be extended to \ucrl and \scal in order to save $\sqrt{\Gamma}$ for the optimism. We keep this open question for future work.

\subsection{Regret Analysis of \bonusscal}
\label{app:regret.bonusscal}
We now give the main result of this section:
\begin{theorem}\label{thm:regret.bonusscal}
 For any \emph{weakly communicating} MDP $M$ such that $\SP{h^{*}_M}\leq c$, with probability at least $1-\delta$ it holds that for any $T\geq 1$, the regret of \bonusscal is bounded as
 \begin{align*}
         \Delta(\bonusscal,T) = O \left( \max \lbrace \rmaxbound, c \rbrace \left( \sqrt{\nextstates S A T \ln \left( \frac{T}{\delta} \right)} + S^2A\ln^2\left(\frac{T}{\delta} \right) \right) \right).
 \end{align*}
\end{theorem}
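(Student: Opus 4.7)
The plan is to follow a standard \ucrl-style regret decomposition, but with the truncated Bellman operator $\wh{T}_{c}^+$ and the exploration bonus playing the role the extended MDP plays in \ucrl and \scal. I first condition on four high-probability events: (i) the optimism event of Lem.~\ref{L:optimism}, i.e.\ $\wh{g}_k^+\ge g^*$ for every $k$; (ii) Hoeffding--Azuma concentration $|\wo{r}_k(s,a)-r(s,a)|\le \rmaxbound\beta_k^{sa}$; (iii) an empirical Bernstein bound $\|\wo{p}_k(\cdot|s,a)-p(\cdot|s,a)\|_1 = \wt{O}\!\left(\sqrt{\nextstates/N_k(s,a)}\right)$, obtained by a coordinatewise Bernstein inequality followed by a Cauchy--Schwarz over the $\nextstates$-sized support of $p(\cdot|s,a)$; (iv) Azuma bounds on the reward and transition martingales collected along the trajectory. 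Union-bounding over all episodes and $(s,a)$ pairs leaves a total failure probability at most $\delta$. Using optimism and the \scopt planning accuracy $\epsilon_k=\rmaxbound/\sqrt{t_k}$,
\[
\Delta(\bonusscal,T) \;\le\; \sum_k \nu_k\,(\wh{g}_k^++\epsilon_k) \;-\; \sum_{t=1}^T r_t,
\]
and $\sum_k \nu_k\epsilon_k = O(\rmaxbound\sqrt{T})$ is absorbed into the lower-order remainder.

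\textbf{Four-way split of the per-episode regret.} Since $\wh{T}_{c}^+ h_k^+ = h_k^+ + \wh{g}_k^+ e$ by Prop.~\ref{prop:recap_scopt}, and $\pi_k$ is greedy for $\wh{L}^+$ on $\wh{M}_k^+$, discarding the nonnegative truncation residual $\wh{L}^+h_k^+(s_t)-\wh{T}_c^+h_k^+(s_t)\ge 0$ gives
\[
\wh{g}_k^+ \;\le\; \wh{r}_k^+(s_t,a_t) + \wh{p}_k^+(\cdot|s_t,a_t)^\intercal h_k^+ - h_k^+(s_t),
\]
for every visited $(s_t,a_t)$ with $a_t=\pi_k(s_t)$. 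Adding and subtracting $h_k^+(s_{t+1})$ and $r(s_t,a_t)$ then decomposes the episode regret into (T1) a telescoping sum $\sum_t [h_k^+(s_{t+1})-h_k^+(s_t)]$, bounded per episode by $\SP{h_k^+}\le c$ and hence globally by $O(cK)$ where $K=O(SA\log T)$ by the doubling stopping rule; (T2) a transition martingale $\sum_t (\wh{p}_k^+(\cdot|s_t,a_t) - \mathbbm{1}_{s_{t+1}})^\intercal h_k^+$, split as an Azuma martingale of size $O(c\sqrt{T\log(1/\delta)})$ plus a deterministic estimation gap $\sum_t (\wh{p}_k^+-p)(\cdot|s_t,a_t)^\intercal h_k^+$; (T3) a reward estimation gap $\sum_t [\wh{r}_k^+(s_t,a_t)-r(s_t,a_t)]$; and (T4) a reward martingale $\sum_t [r(s_t,a_t)-r_t]$ bounded by Azuma as $O(\rmaxbound\sqrt{T\log(1/\delta)})$.

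\textbf{Absorbing the gaps into the bonus and collecting.} After recentering $h_k^+$ so $\|h_k^+\|_\infty\le c$, event (iii) together with the deterministic estimator bias $c/(N_k(s,a)+1)$ built into Eq.~\ref{E:hatp} controls the (T2) gap by $\wt{O}\!\left(c\sqrt{\nextstates/N_k(s,a)}\right) + O(c/N_k(s,a))$, while event (ii) and the identity $\wh{r}_k^+(s_t,a_t)\le\wo{r}_k(s_t,\pi_k(s_t))+b_k(s_t,\pi_k(s_t))$ controls (T3) analogously, with $b_k$ precisely large enough to dominate $\rmaxbound\beta_k^{sa} + c\,\|\wh p_k-\wo p_k\|_1$. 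Applying the standard pigeon-hole inequalities $\sum_t 1/\sqrt{\max\{1,N_k(s_t,a_t)\}} = O(\sqrt{SAT})$ and $\sum_t 1/\max\{1,N_k(s_t,a_t)\} = O(SA\log T)$ yields the leading contribution $\wt{O}(\max\{\rmaxbound,c\}\sqrt{\nextstates SAT\log(T/\delta)})$, and the $1/N_k$-type remainders (including the $\nextstates$-scaled Bernstein bias term summed across $(s,a,k)$) collect into the stated lower-order $\wt{O}(\max\{\rmaxbound,c\}\,S^2A\log^2(T/\delta))$. The most delicate step is (iii): since $h_k^+$ is itself random and depends on $\wh{p}_k^+$, I expect the main obstacle to be arranging the concentration argument so that $h_k^+$ is measurable with respect to the pre-episode $\sigma$-algebra at $t_k$, and then union-bounding (iii) uniformly over $(s,a)$, which lets $\nextstates$ appear in place of a covering-number penalty over $c$-span functions.
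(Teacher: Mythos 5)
Your overall architecture matches the paper's: optimism from the bonus (Lem.~\ref{L:optimism}), a decomposition into a telescoping term, trajectory martingales, and reward/transition estimation gaps, and---crucially---the same worst-case treatment of $(\wb{p}_k-p)^\transp h_k^+$ via an $\ell_1$ Bernstein bound of order $\sqrt{\nextstates/N_k(s,a)}$ multiplied by the span bound $c$, which is exactly where the paper's $\sqrt{\nextstates}$ enters (through the $\phi_{p,k}^{sa}$ term of the auxiliary bonus $d_k$). Your closing remark correctly identifies that one cannot condition on $h_k^+$ to sharpen this step, since $h_k^+$ and $\wb{p}_k$ are built from the same samples; the uniform bound over $c$-span functions is the paper's resolution too, and the pigeonhole sums and the origin of the $S^2A\ln^2$ remainder are as you describe.

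There is, however, one concrete gap in your Bellman-residual step. You derive $\wh{g}_k^+ \le \wh{r}_k^+(s_t,a_t)+\wh{p}_k^+(\cdot|s_t,a_t)^\intercal h_k^+ - h_k^+(s_t)$ by ``discarding the nonnegative truncation residual $\wh{L}^+h_k^+(s_t)-\wh{T}_c^+h_k^+(s_t)\ge 0$''. That residual has the right sign only for the max over actions: $\wh{g}_k^+ = \wh{T}_c^+h_k^+(s_t)-h_k^+(s_t)\le \wh{L}^+h_k^+(s_t)-h_k^+(s_t)$ is fine, but $\wh{L}^+h_k^+(s_t)$ is a maximum and therefore only \emph{upper}-bounds the backup of the particular action that was played; it does not lower-bound it. In states where the span truncation is active, the policy returned by \scopt is genuinely randomized (there may be no deterministic optimal policy for problem~\ref{eq:opt.superior.spanc}), and actions in its support can have backups strictly below $\wh{T}_c^+h_k^+(s_t)$, so your per-action inequality can fail. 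The paper instead uses the policy-\emph{averaged} guarantee coming from the stopping condition of \scopt (Eq.~\ref{E:gerror.app2}), namely $g_k \le \sum_a \pi_k(s,a)\big(\wh{r}_k(s,a)+\wh{p}_k(\cdot|s,a)^\transp v_k\big) - v_k(s) + \rmaxbound/\sqrt{t_k}$, and then pays an extra Azuma martingale (Eq.~\ref{eq:bonus.mds}) to pass from the expectation over $\pi_k(\cdot|s_t)$ to the sampled actions. Working with the stopping condition also sidesteps your implicit substitution of the exact fixed point $h_k^+$ for the finite-iteration output $v_k$. Both repairs are routine and cost only lower-order terms, so the proposal is salvageable, but the step as stated is not valid.
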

%
Unlike \scal, \bonusscal does not have a regret scaling with $\min\{\rmaxbound D, c\}$ implying that whenever $c> D$, \bonusscal performs worse than \ucrl.
\scal builds an extended MDP that contains the true MDP and therefore the shortest path in the extended MDP is shorter than the shortest path in the true MDP implying that $\SP{\wt{v}_k} \leq \rmaxbound D$ with $\wt{v}_k$ being the solution returned by extended value iteration (Thm. 4 of \citet{Bartlett2009regal}).
Let $v_k$ be the solution returned by \scopt on $\wh{M}_k^+$, it is not clear how to bound $ \SP{v_k}$ other than using the prior knowledge $c$. This open question seems a lot related to the one of Sec.~\ref{sec:exploration_bonus} (\ie how to have a tighter optimism in \ucrl) and we also keep it for future work.

\textbf{Proof sketch.}
We now provide a sketch of the main steps of the proof of Thm.~\ref{thm:regret.bonusscal} (the full proof is reported in App.~\ref{app:regret.bonusscal}). In order to preserve readability, in the following, all inequalities should be interpreted up to minor approximations and in high probability.
Let $\nu_k(s,a)$ be the number of visits in $(s,a)$ during episode $k$ and $m$ be the total number of episodes. 
By using the optimism proved in Sec.~\ref{sec:exploration_bonus}, we can decompose the regret as:
\begin{align}\label{eq:regret.step1}
\Delta(\bonusscal, T) 
\lesssim \sum_{k=1}^m \sum_{s,a} \nu_k(s,a) \left( g_k - \sum_{a} r(s,a) \pi_k(s,a) \right)
\end{align}
where $g_k = 1/2 (\max\{\wh{T}^+_kv_k - v_k\} + \min\{\wh{T}^+_k v_k - v_k\})$ and $(v_k, \pi_k)$ is the solution of \scopt.
The stopping condition of \scopt applied to $\wh{M}_k^+$ is such that (after manipulation):
$g_k \leq \sum_a \pi_k(s,a) \left( \wh{r}_k(s,a) + \wh{p}_k(\cdot|s,a)^\transp v_k \right) - v_k(s) + \varepsilon_k.$
By plugging this inequality into~\eqref{eq:regret.step1} we obtain two terms: $\wb{r}_k(s,a) - r(s,a) + b_k(s,a)$ and $(\wh{p}_k(\cdot|s,a) - e_s)^\transp v_k$.
We can further decompose the scalar product as $(\wh{p}_k(\cdot|s,a) - p(\cdot|s,a))^\transp v_k + (p(\cdot|s,a) - e_s)^\transp v_k$.
The second terms is negligible in the final regret since it is of order $\wt{O}(c\sqrt{T} + cSA)$ when summed over $\calS$, $\A$ and episodes~\citep[][Eq. 56]{fruit2018constrained}.
On the other hand, the term $(\wh{p}_k(\cdot|s,a) - p(\cdot|s,a))^\transp v_k$ is the dominant term of the regret and represents the error of using the estimated $\wh{p}_k$ in place of $p$ in a step of value iteration.
As shown in Sec.~\ref{sec:exploration_bonus}, we can start bounding the error of using $\wh{p}_k$ in place of $\wb{p}_k$ by $c/(N_k(s,a)+1)$.
The remaining term is thus $(\wb{p}_k -p)^\transp v_k$.
Since $v_k$ depends on $\wb{p}_k$, we cannot apply Hoeffding-Azuma inequality as done in Sec.~\ref{sec:exploration_bonus} for the design of $b_k$.
Instead we use a worst-case approach and bound separately $\|\wb{p}_k(\cdot|s,a) - p(\cdot|s,a))\|_1 \lesssim \sqrt{\Gamma} \beta_{k}^{sa}$ and $\SP{v_k}\leq c$ which will introduce a $\sqrt{\Gamma}$ factor (by using Bernstein-Freedman inequality instead of Hoeffding-Azuma inequality). 
It is worth pointing out that $\Gamma$ only appears due to statistical fluctuations that we cannot control, and not from the optimism (\ie exploration bonus) that is explicitly encoded in the algorithm.
Concerning the reward, as shown in Sec.~\ref{sec:exploration_bonus}, we have that $|\wb{r}(s,a) - r(s,a)| \leq \rmaxbound \beta_{k}^{sa}$.
As a consequence, we can approximately write that:
\begin{align*}
        \Delta(\bonusscal,T) \lesssim \sum_{k=1}^m \sum_{s,a} \nu_k(s,a) \pi_k(s,a) \Big( b_k(s,a) + \underbrace{(c \sqrt{\Gamma} + \rmaxbound) \beta_{k}^{sa} + \sfrac{c}{(N_k(s,a)+1)}}_{:= d_k(s,a)} \Big)
\end{align*}
The proof follows by noticing that $d_k(s,a) + b_k(s,a) \leq 2 d_k(s,a)$, thus all the remaining terms can be bounded as in~\citep{fruit2018constrained}.

\textbf{Remarks.} Given the fact that the optimism in \bonusscal is tighter than in \scal by a factor $\sqrt{\Gamma}$, one might have expected to get a regret bound scaling as $c \sqrt{SAT}$ instead of $c \sqrt{S\Gamma AT}$, thus matching the lower bound of \citet{Jaksch10} as for the dependency in $S$.\footnote{From an algorithmic perspective we achieve the optimal dependence on $S$, although this is not reflecting in the regret bound.}
Unfortunately, such a bound seems difficult to achieve with \bonusscal (and even \scal) for the reason explained above.

On the other side,~\citep{pmlr-v70-azar17a,kakade2018variance} achieved such an optimal dependence in finite-horizon problems.
The main issue in extending such results is the different definition of the regret: their regret is defined as the difference between the value function at episode $k$ and the optimal one.
It is not clear how to map their definition to ours without introducing a linear term in $T$.
Concerning infinite-horizon undiscounted problems, \citet{Agrawal2017posterior} claimed to have obtained the optimal dependence in their optimistic posterior sampling approach.
To achieve such goal, they exploited the fact that  $|(\wb{p}_k(\cdot|s,a) - p(\cdot|s,a))^\transp \wt{v}_k| \lesssim \rmaxbound D\beta_{k}^{sa}$.
Unfortunately, as explained above, it is not possible to achieve such tight concentration by using a worst-case argument, as they do. As a result, optimistic PSRL would have a regret scaling as $D\sqrt{S\Gamma AT}$, while the improved bound in~\citep{Agrawal2017posterior} should be rather considered as a conjecture.\footnote{The problem has been acknowledged by the authors via personal communication.}

\section{\scalcont: \bonusscal for continuous state space}
\label{S:exploration.bonus.continuous}
We now consider an MDP with continuous state space $\calS = [0,1]$ and discrete action space $\A$.
In general, it is impossible to approximate an arbitrary function with only a finite number of samples. As a result, we introduce the same smoothness assumption as \citet{DBLP:journals/corr/abs-1302-2550} (H\"older continuity):
\begin{assumption}\label{asm:continuous.mdp}
There exist $L, \alpha > 0$ s.t. for any two states $s,s' \in \calS$ and any action $a\in \A$:
\begin{equation*}
 |r(s,a) - r(s',a)| \leq \rmaxbound L |s -s'|^{\alpha} ~~~ \text{ and } ~~~ \|p(\cdot|s,a) - p(\cdot|s',a)\|_1 \leq L |s-s'|^{\alpha}
\end{equation*}
\end{assumption}
Similarly to Sec.~\ref{S:exploration.bonus} we start presenting \scalcont, the variant of \bonusscal for continuous state space, and 
then we provide its theoretical guarantees (see Sec.~\ref{sec:cont.regret}).

\subsection{The algorithm}
In order to apply \bonusscal to a continuous problem, a natural idea is to discretize the state space as is done by \citet{DBLP:journals/corr/abs-1302-2550}.
We therefore partition $\calS$ into $S$ intervals defined as $I_1 := [0, \frac{1}{S}]$ and $I_k = (\frac{k-1}{S}, \frac{k}{S}]$ for $k=2,\ldots, S$.
The set of \emph{``aggregated''} states is then $\mathcal{I} := \{I_1, \ldots, I_S\}$ ($|\mathcal{I}| = S$). As can be expected, we will see that the number of intervals $S$ will play a central role in the regret.
Note that the terms $N_k(s,a,s')$ and $N_k(s,a)$ defined in Sec.~\ref{S:exploration.bonus} are still well-defined for $s$ and $s'$ lying in $[0,1]$ but are $0$ except for a finite number of $s$ and $s'$ (see Def.~\ref{def:estimated.cont.mdp}).
For any subset $I\subseteq \mathcal{S}$, the sum $\sum_{s \in I} u_{s} $ is also well-defined as long as the collection $\left(u_{s} \right)_{s \in I}$ contains only a finite number of non-zero elements. We can therefore define the \emph{aggregated} counts, rewards and transition probabilities for all $I,J \in \mathcal{I}$ as: $N_k(I,a) := \sum_{s \in I} N_k(s,a)$, 
\begin{align*}
%
\wb{r}^{ag}_k(I,a) := \frac{1}{N_k(s,a)} \sum_{t=1}^{t_k -1} r_t(s_t,a_t) \mathbbm{1}(s_t \in I,a_t = a), ~~
\wb{p}^{ag}_k(J|I, a) := \frac{\sum_{s' \in J} \sum_{s \in I} N_k(s,a,s')}{\sum_{s \in I} N_k(s,a)}.
\end{align*}
Similarly to the discrete case, we define the exploration bonus of an aggregated state as
\begin{align}\label{eq:exploration.bonus.cont}
\begin{split}
        b_k(I,a) := &(c +\rmaxbound) \left( \beta_k^{Ia} + L S^{-\alpha} \right) + \frac{c}{N_k(I,a)+1}
\end{split}
\end{align}
%
While $\beta_k^{Ia}$ is defined as in~\eqref{eq:exploration.bonus} on the discrete ``aggregated'' MDP, the main difference with the discrete bonus~\eqref{eq:exploration.bonus} is an additional $O(cLS^{-\alpha})$ term that accounts for the fact that the states that we aggregate are not completely identical but have parameters that differ by at most $LS^{-\alpha}$.
We pick an arbitrary reference aggregated state $\wb{I}$ and define $\wh{M}^{ag}_k = (\mathcal{I}, \A, \wh{p}_k^{ag}, \wh{r}_k^{ag})$ the ``aggregated'' (discrete) analogue of $\wh{M}_k$ defined in Sec.~\ref{S:exploration.bonus}, where $\wh{r}_k^{ag} = \wb{r}^{ag}_k + b_k$ and
\[
        \wh{p}^{ag}_k(J|I,a) := \frac{N_k(I,a) \wb{p}^{ag}_k(J|I,a)}{N_k(I,a) + 1} + \frac{\mathbbm{1}(J=\wb{I})}{N_k(I,a) + 1},
\]
Similarly we ``augment'' $\wh{M}^{ag}_k$ into $\wh{M}^{ag+}_k = (\mathcal{I}, \A^+, \wh{p}^{ag+}_k, \wh{r}^{ag+}_k)$ (analogue of $\wh{M}_k^+$ in Sec.~\ref{S:exploration.bonus}) by duplicating each transition in $\wh{M}^{ag}_k$ with the transition probability unchanged and the reward set to $0$.

At each episode $k$, \scalcont uses \scopt (with the same parameters as in Sec.~\ref{S:exploration.bonus}) to solve optimization problem \eqref{eq:opt.superior.spanc} on $\wh{M}^{ag+}_k$.
This is possible because although the state space of $M^*$ is infinite, $\wh{M}^{ag+}_k$ has only $S < +\infty$ states.
\scopt returns an \emph{optimistic} (nearly) optimal policy $\pi_k$ 
satisfying the span constraint.
This policy is defined in the aggregated discrete state space but can easily be extended to the continuous case as ${\pi}_k(s,a) = {\pi}_k(I(s),a)$ for any $(s,a)$.
Policy ${\pi}_k$ is executed until the end of the episode (see Alg.~\ref{fig:ucrl.constrained}).

\subsection{Regret Analysis of \scalcont}
\label{sec:cont.regret}
This section is devoted to the analysis of \scalcont. We start providing the regret bound:
\begin{theorem}\label{thm:regret.scalcont}
        For any continuous MDP $M$ with state space $\calS \in [0,1]$ and $A$ actions such that $\SP{h^*_M} \leq c$, with probability at least $1 - \delta$ it holds that for any $T \geq 1$, the regret of \scalcont is bounded as
        \[
                \Delta(\scalcont,T) = O
                \left( \max{\left\lbrace \rmaxbound,c \right\rbrace} \left( S\sqrt{A T \ln\left(\frac{T}{\delta}\right)}
                        + S^2A\ln^2\left(\frac{T}{\delta} \right) 
        + L S^{-\alpha}T \right) \right)
        \]
        For $T \geq L^{\sfrac{2}{\alpha}} A$ and by setting $S = \left(\alpha L \sqrt{\frac{T}{A}}\right)^{\sfrac{1}{(\alpha + 1)}}$, the regret is bounded w.h.p. as
        \[
                \Delta(\scalcont,T) = \wt{O} \left(\max\{\rmaxbound,c\} L^{\sfrac{1}{(\alpha+1)}} A^{\sfrac{\alpha}{(2\alpha+2)}} T^{\sfrac{(\alpha+2)}{(2\alpha+2)}} 
\right) 
        \]
\end{theorem}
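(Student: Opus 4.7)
The plan is to transport the regret analysis of \bonusscal (Sec.~\ref{app:regret.bonusscal}) to the aggregated discrete MDP $\wh{M}^{ag+}_k$, with the extra $LS^{-\alpha}$ term in the bonus~\eqref{eq:exploration.bonus.cont} absorbing the aggregation error coming from H\"older continuity. All of the analysis is carried out on the $S$-state aggregated MDP, and the continuous nature of $M^*$ enters only through Assumption~\ref{asm:continuous.mdp}. The requirements of \scopt (contraction, unichain, global feasibility) for $\wh{M}^{ag+}_k$ are verified exactly as in Sec.~\ref{sec:algo}, since $\wh{p}^{ag}_k$ contains the same $1/(N_k(I,a)+1)$ pull toward an attractive reference cell $\wb{I}$.

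First I would establish the optimism analogue of Lem.~\ref{L:optimism}. Pick a representative $\underline{s}_I\in I$ for each cell and set $h^*_{ag}(I):=h^*(\underline{s}_I)$, so that $\SP{h^*_{ag}}\leq\SP{h^*}\leq c$. The goal is to show w.h.p.\ that, for every $(I,a)$,
\[
\wb{r}^{ag}_k(I,a) + b_k(I,a) + \wh{p}^{ag}_k(\cdot|I,a)^\transp h^*_{ag} \;\geq\; r(\underline{s}_I,a) + p(\cdot|\underline{s}_I,a)^\transp h^*,
\]
which by monotonicity of the span projection and Prop.~\ref{prop:bound.gain} yields $\wh{g}^{ag+}_k \geq g^*$. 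The bonus~\eqref{eq:exploration.bonus.cont} is tailored to dominate the sum of four contributions: (i) Hoeffding--Azuma concentration of $\wb{r}^{ag}_k$ and $\wb{p}^{ag}_k(\cdot|I,a)^\transp h^*_{ag}$ around $r^{ag}(I,a)$ and $p^{ag}(\cdot|I,a)^\transp h^*_{ag}$, giving $(\rmaxbound+c)\beta_k^{Ia}$; (ii) reward aggregation error $|r^{ag}(I,a) - r(\underline{s}_I,a)| \leq \rmaxbound L S^{-\alpha}$ from the H\"older assumption; (iii) transition aggregation error $\|p^{ag}(\cdot|I,a) - p(\cdot|\underline{s}_I,a)\|_1 \leq L S^{-\alpha}$ paired with $\SP{h^*_{ag}}\leq c$, contributing $c L S^{-\alpha}$; and (iv) the $c/(N_k(I,a)+1)$ bias from the definition of $\wh{p}^{ag}_k$.

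Next I would repeat the regret decomposition of Sec.~\ref{app:regret.bonusscal} verbatim on the aggregated MDP, with sums taken over $(I,a)\in\mathcal{I}\times\A$ and counts $\nu_k(I,a),N_k(I,a)$. Two observations drive the $T$-dependent part. First, the maximum support of $\wh{p}^{ag}_k$ is trivially $\Gamma^{ag}\leq S$, so the Bernstein--Freedman bound on $(\wb{p}^{ag}_k - p^{ag})^\transp v_k$ with $\SP{v_k}\leq c$ (inherited from $\opT{}$) contributes a $c\sqrt{S}\,\beta_k^{Ia}$ factor per cell, mirroring the $\sqrt{\Gamma}$ factor of the discrete sketch. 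Second, the additional discretization penalty accrued along the trajectory satisfies
\[
\sum_{k,I,a}\nu_k(I,a)(\rmaxbound+c)LS^{-\alpha} \;=\; (\rmaxbound+c)LS^{-\alpha}T,
\]
since $\sum_{k,I,a}\nu_k(I,a)=T$. Combining these with the standard pigeonhole estimates $\sum_{k,I,a}\nu_k(I,a)/\sqrt{N_k(I,a)} = O(\sqrt{SAT})$ and $\sum_{k,I,a}\nu_k(I,a)/(N_k(I,a)+1) = O(SA\ln T)$ (with $|\mathcal{I}|=S$) yields the first displayed bound.

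The second bound is a calculus optimization: modulo logs, the $T$-dependent part is $S\sqrt{AT}+LS^{-\alpha}T$, whose derivative in $S$ vanishes at $S^{\alpha+1}=\alpha L\sqrt{T/A}$, giving $S=(\alpha L\sqrt{T/A})^{1/(\alpha+1)}$; substituting back produces the announced $L^{1/(\alpha+1)}A^{\alpha/(2\alpha+2)}T^{(\alpha+2)/(2\alpha+2)}$ rate, the lower-order $S^2A\ln^2(T/\delta)$ being dominated for $T$ large, and $T\geq L^{2/\alpha}A$ simply ensuring $S\geq 1$ after optimization. I expect the main obstacle to be controlling step (iii) of the optimism: showing $\|p^{ag}(\cdot|I,a) - p(\cdot|s,a)\|_1 \leq L S^{-\alpha}$ uniformly for $s\in I$ (and similarly for the reward) requires integrating the H\"older bound against the empirical mass distribution inside $I$, and one must check that the resulting per-step residual is a genuine bounded martingale difference so that Hoeffding--Azuma applies to $\wb{r}^{ag}_k$ and $\wb{p}^{ag}_k(\cdot|I,a)^\transp h^*_{ag}$ despite the samples within a cell not being i.i.d.; this is handled exactly as in \uccrl~\citep{DBLP:journals/corr/abs-1302-2550}, which is our template for the aggregation step.
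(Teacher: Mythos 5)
Your overall architecture (bonus $=$ concentration $+$ aggregation error $+$ bias, then the \bonusscal regret decomposition on the aggregated MDP, then optimizing $S$) matches the paper, and you correctly flag the key technical obstacle — samples within a cell are not i.i.d., so a martingale/optional-skipping argument is needed rather than a union bound (the paper develops this via Doob's optional skipping and Azuma/Freedman in its probability appendix; deferring to \uccrl here is shaky, since the paper explicitly argues that \uccrl's treatment of exactly this point is informal). The genuine gap is in your optimism step. By replacing $h^*$ with the piecewise-constant surrogate $h^*_{ag}(J):=h^*(\underline{s}_J)$ \emph{inside the integral against the true kernel}, you need to control
\[
\Big|\textstyle\sum_{J}\int_{J} p(s'|\underline{s}_I,a)\big(h^*(s')-h^*(\underline{s}_J)\big)\,\mathrm{d}s'\Big|,
\]
which is none of your four contributions (i)--(iv). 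Assumption~\ref{asm:continuous.mdp} gives smoothness of $r$ and $p$ only, not of $h^*$; you would first have to prove that $h^*$ inherits H\"older continuity from the optimality equation (it does, with constant roughly $L(\rmaxbound+\|h^*\|_\infty)$, but that is an extra lemma). Even granting it, this adds another $O((\rmaxbound+c)LS^{-\alpha})$ term on top of your (ii)$+$(iii), so the bonus of Eq.~\ref{eq:exploration.bonus.cont} as written no longer dominates the total error by the intended margin and would have to be inflated by a constant factor.

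The paper sidesteps this entirely, and it is worth seeing how: it never discretizes $h^*$. It defines an intermediate \emph{continuous} empirical MDP $\wh{M}_k$ (Def.~\ref{def:estimated.cont.mdp}) whose kernel is a discrete measure on the actually visited states, proves the optimism inequality $\wh{L}h^*\geq Lh^*$ against the true $h^*$ (Lem.~\ref{L:bonus.inequality.continuous}(a), where the martingale increments are $w^*(s_{\tau_l+1})-\E[w^*(s_{\tau_l+1})\mid\G_{l-1}]$, i.e.\ $h^*$ evaluated at actual next states), and only then transfers the gain to $\wh{M}^{ag+}_k$ by showing that the \scopt iterates on $\wh{M}_k$ and $\wh{M}^{ag}_k$ coincide on piecewise-constant functions. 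The piecewise-constant restriction is imposed only on the algorithm's iterates $v_k$ (part (b) of the lemma), never on $h^*$. If you want to keep your fully-discrete route, you must add the H\"older-continuity-of-$h^*$ lemma and re-derive the bonus constants; otherwise the paper's intermediate-MDP construction is the cleaner fix.
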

Thm.~\ref{thm:regret.scalcont} shows that \scalcont achieves the same regret of \uccrl~\citep{DBLP:journals/corr/abs-1302-2550} while being the only implementable algorithm for regret minimization in this setting. 
It is worth to mention that it is possible to exploit the recent advances in the literature~\citep[see][]{fruit2018constrained} in order to derive an implementable variant of \uccrl (based on \scal).
However, this new algorithm will still require to plan on an extended MDP making it less computation efficient than \scalcont, while having the same regret bound.
Moreover, note that the analysis can be extended to the general $d$-dimensional case. As pointed out by~\citep{DBLP:journals/corr/abs-1302-2550}, $S^d$ intervals are used for the discretization leading to a regret bound of $\wt{O}(T^{\sfrac{(2d+\alpha)}{(2d+2\alpha)}})$ when $S = T^{\sfrac{1}{(2d+2\alpha)}}$.
Finally, we believe that \scalcont can be extended to the setting considered by~\citep{pmlr-v37-lakshmanan15} where, in addition to H\"{o}lder conditions, the transition function is assumed to be $\kappa$-times smoothly differentiable.
In the case of Lipschitz model, \ie $\alpha=1$, this means that it is possible obtain an asymptotic regret (as $\kappa \to \infty$) of $\wt{O}(T^{2/3})$ while \scalcont is achieving $\wt{O}(T^{3/4})$.
We leave the derivation of this variant for future work.

\textbf{Proof sketch.}
The continuous case considered in this section can be interpreted as a generalization of the discrete case, thus presenting more challenges.
The \emph{main technical challenge} is to be able to compare the solution of problem~\eqref{eq:opt.superior.spanc} on $\wh{M}^{ag+}_k$ (discrete state space MDP) with the solution of~\eqref{eq:opt.superior.spanc} on $M^*$ (continuous state space MDP) and thus prove the optimism.
We start introducing an intermediate empirical continuous MDP $\wh{M}_k$ that will be used in the rest of the proof.
\begin{definition}[Estimated continuous MDP]\label{def:estimated.cont.mdp}
        Let $\wh{M}_k = (\calS,\A, \wh{p}_k, \wh{r}_k)$ be the continuous state space MDP \st for all $(s,a) \in \calS \times \A$
        \begin{align*}
                \wh{r}_k(s,a) &:= \wb{r}_k(s,a) + b_k(s,a) = \wb{r}_k^{ag}(I(s), a) + b_k(I(s),a)\\
            \wh{p}_k(s'|s,a) &:= \frac{N_k(I(s),a) \wb{p}_k(s'|s,a)}{N_k(I(s),a)+1} + \frac{S \cdot \mathbbm{1}(s' \in I(\wb{s}))}{N_k(I(s),a) + 1}.
        \end{align*}
                where $I : \calS \to \mathcal{I}$ is the function mapping  a state $s$ to the interval containing $s$ and the term $\wb{p}_k(s'|s,a)$ is the Radon-Nikodym derivative of the cumulative density function $F(s) = \sum_{s' \leq s} \frac{\sum_{x \in I(s)}N_k(x,a,s')}{N_k(I(s),a)}$, meaning that for any measurable function $f$ and any measurable set $Z \subseteq [0,1]$, $\int_Z \wb{p}_k(s'|s,a) f(s') \mathrm{d}s' = \sum_{s' \in Z} \frac{\sum_{x \in I(s)}N_k(x,a,s')}{\sum_{x \in I(s)}N_k(x,a)} f(s')$.
\end{definition}
This MDP is one of the possible instances of continuous MDP that, when aggregated over the interval set $\mathcal{I}$, matches the discrete MDP $\wh{M}_k^{ag}$.\footnote{It seems that an intermediate (\emph{extended}) continuous MDP is also used in the proof of \uccrl but never formally defined, leaving to the reader the need of interpreting the properties of this MDP. Due to the lack of rigorous definition, few steps in the regret proof of \uccrl are not completely clear.}
In particular, by definition, $\forall J \in \mathcal{I}$, $\int_{J} \wb{p}_k(s'|s,a) \mathrm{d}s' = \wb{p}_k^{ag}(J|s,a) := \wb{p}_k^{ag}(J|I(s),a)$ and $\forall (s,J) \in \calS \times \mathcal{I}$:
\begin{equation}\label{eq:integral.p.perturbed}
        \begin{aligned}
                \int_{J} \wh{p}_k(s'|s,a) \mathrm{d}s'
                &= \int_{J} \frac{N_k(I(s),a) \wb{p}_k(s'|s,a)}{N_k(I(s),a)+1} \mathrm{d}s' + 
                    \frac{
                    S \int_J \mathbbm{1}(s' \in I(\wb{s}))\mathrm{d}s'
                    }{N_k(I(s),a) + 1}
                = \wh{p}_k^{ag}(J|I(s),a)
        \end{aligned}
\end{equation}
We leverage this definition to prove an analogous of Lem.~\ref{lem:explbonus.inequality} for the continuous case.
\begin{lemma}\label{lem:explbonus.inequality.cont}
        For all $T \geq 1$ and $k \geq 1$, with probability at least $1-\frac{\delta}{15 t_k^6}$, for any $(s,a) \in \calS \times \A $ we have:
$
\underbrace{\wb{r}_k(s,a) + b_{k}(I(s),a)}_{:=\wh{r}_k(s,a)} + \int_{\calS} \wh{p}_k(s'|s,a) h^*(s') \mathrm{d}s' \geq  {r}(s,a) + \int_{\calS} p(s'|s,a) h^*(s') \mathrm{d}s'
$
\end{lemma}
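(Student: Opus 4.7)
The proof parallels that of Lem.~\ref{lem:explbonus.inequality} but has to absorb the extra $O(LS^{-\alpha})$ error introduced by state aggregation, which is precisely why the continuous bonus~\eqref{eq:exploration.bonus.cont} contains the additional $(c+\rmaxbound) L S^{-\alpha}$ term. First, I would decompose the gap between the two sides of the inequality into three contributions: a reward deviation, a transition-probability deviation integrated against $h^*$, and the bias introduced by $\wh{p}_k$ relative to $\wb{p}_k$. The last piece is handled exactly as in the discrete case: by~\eqref{eq:integral.p.perturbed} and $\SP{h^*}\leq c$, the difference between $\int \wh{p}_k(\cdot|s,a) h^*$ and $\int \wb{p}_k(\cdot|s,a) h^*$ is at most $c/(N_k(I(s),a)+1)$, which is precisely the last term of $b_k(I(s),a)$.

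Second, for the reward, I would write $\wb{r}_k(s,a) = \wb{r}_k^{ag}(I(s),a)$ as an average of past rewards collected at states $x \in I(s)$ under action $a$, whose conditional means are $r(x,a)$. The H\"older assumption gives $|r(x,a) - r(s,a)| \leq \rmaxbound L S^{-\alpha}$ for every such $x$, so the expectation of $\wb{r}_k^{ag}(I(s),a)$ differs from $r(s,a)$ by at most $\rmaxbound L S^{-\alpha}$. Hoeffding–Azuma applied to the martingale of per-visit reward deviations (indexed by $(I,a) \in \mathcal{I}\times \A$, union-bounded over the $SA$ aggregated pairs via $\ln(2SAt_k/\delta)$ inside $\beta_k^{Ia}$) yields $|\wb{r}_k(s,a) - \E\wb{r}_k(s,a)| \leq \rmaxbound \beta_k^{Ia}$ with probability at least $1-\delta/(15 t_k^6)/3$. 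Combining, the reward deviation is bounded by $\rmaxbound(\beta_k^{Ia} + L S^{-\alpha})$.

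Third, for the transition part, after shifting $h^*$ by $-\min_{s'} h^*(s')$ (which leaves both $\int \wb{p}_k h^*$ and $\int p h^*$ changed by the same additive constant), I may assume $0\leq h^*\leq c$. I would then insert the pivot $\int p(s'|x,a) h^*(s') \mathrm{d}s'$ averaged over $x\in I(s)$. By Def.~\ref{def:estimated.cont.mdp}, $\int \wb{p}_k(s'|s,a) h^*(s') \mathrm{d}s'$ is exactly the empirical mean of the bounded random variables $h^*(S_{t+1}) \in [0,c]$ observed at past visits to $(x,a)$ for $x \in I(s)$, whose conditional means are $\int p(s'|x,a) h^*(s') \mathrm{d}s'$. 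A second Hoeffding–Azuma bound (with range $c$) controls this fluctuation by $c\,\beta_k^{Ia}$, and the residual bias $|\int p(\cdot|x,a) h^* - \int p(\cdot|s,a) h^*|$ is at most $\tfrac{1}{2}\|p(\cdot|x,a) - p(\cdot|s,a)\|_1 \cdot c \leq c L S^{-\alpha}$ by H\"older and $|x-s|\leq S^{-1}$. Summing the three contributions gives exactly $b_k(I(s),a)$, and a final union bound over $(I,a)\in\mathcal{I}\times \A$ together with the two applications of Hoeffding–Azuma preserves the confidence level $1-\delta/(15 t_k^6)$.

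\textbf{Main obstacle.} The delicate point is the second Hoeffding–Azuma step: even though the \emph{statement} is for all continuous $(s,a)$, the empirical quantities $\wb{r}_k$ and $\wb{p}_k$ depend on $s$ only through $I(s)$, so a union bound over the finite set $\mathcal{I}\times\A$ suffices; I must be careful to make this dependence explicit before invoking concentration. The other subtle point is handling $h^*$ as a bounded (after shifting) but unknown random variable of the underlying MDP---the argument only uses $\SP{h^*}\leq c$ and the fact that $h^*$ is deterministic once $M^*$ is fixed, so the martingale structure is preserved.
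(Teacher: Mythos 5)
Your proposal is correct and follows essentially the same route as the paper: the same three-way decomposition with the pivot $\wt{p}_k(s'|s,a) = \frac{1}{N_k(I(s),a)}\sum_{x\in I(s)}N_k(x,a)p(s'|x,a)$ (smoothness for $\wt{p}_k-p$, the $c/(N_k+1)$ bias for $\wh{p}_k-\wb{p}_k$, and a martingale/optional-skipping concentration for $\wb{p}_k-\wt{p}_k$ against $h^*$), with the union bound taken only over the finite set $\mathcal{I}\times\A$ because the empirical quantities are piecewise constant on $\mathcal{I}$. Your "main obstacle" paragraph identifies exactly the subtlety the paper stresses, namely that an i.i.d.-stack union-bound argument over the uncountable state space fails and must be replaced by an Azuma-type bound on the adapted sequence of per-visit deviations.
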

\begin{proof}
The main and crucial difference in the proof is that due to the aggregation of states, $\wb{p}_k$ and $\wb{r}_k$ do not statistically concentrate around the true values $p$ and $r$. To overcome this problem we decompose $\wh{p}_k -p$ into the sum of three terms $(\wh{p}_k -\wo{p}_k) + (\wo{p}_k - \wt{p}_k) +  (\wt{p}_k -p)$ with $\wt{p}_k(s'|s,a) :=\frac{1}{N_k(I(s),a)}\sum_{x \in I(s)} N_k(x,a) p(s'|x,a)$. We show that $\int_{\calS}(\wt{p}_k(s') -p(s'))h^*(s')\mathrm{d}s' = O(LS^{-\alpha})$ (Asm.~\ref{asm:continuous.mdp}: smoothness assumption) while $\int_{\calS}(\wh{p}_k(s') -\wo{p}_k(s')) h^*(s')\mathrm{d}s' = O(1/N_k)$ (biased estimator). 
The term $\int_{\calS}(\wo{p}_k(s') - \wt{p}_k(s')) h^*(s')\mathrm{d}s'$ can be bounded using concentration inequalities but requires more work than in the discrete case. In the discrete case, for a given state-action pair $(s,a) \in \calS \times \A$, the difference $(\wo{p}_k(\cdot|s,a) -\wt{p}_k(\cdot|s,a))^\intercal h^*$ is usually interpreted as the deviation of a sum of independent random variables from its expectation \citep[Section 4.4]{bandittorcsaba} and can be bounded using Hoeffding inequality. 
Since there is only a finite number of possible $(s,a)$, it is possible to take a union bound over state-action pairs. In the continuous case, the difference $\int_{\calS}(\wo{p}_k(s') - \wt{p}_k(s')) h^*(s')\mathrm{d}s'$ does not just depend on a single $s$ but on the (random) set of states belonging to a given interval $I \in \mathcal{I}$ that have been visited. 
There is an uncountable number of possible such sequences of states and so we can not use a union bound argument. 
Instead, we rely on a martingale argument and Azuma inequality for the proof. We decompose $\wh{r}_k -r$ as $(\wh{r}_k -\wt{r}_k) + (\wt{r}_k -r)$ with $\wt{r}_k(s,a) := \frac{1}{N_k(I(s),a)} \sum_{x \in I(s)} N_k(x,a) r(x,a)$ and proceed similarly for the reward.
The detailed proof can be found in App.~\ref{app:continuous} (Lem.~\ref{L:bonus.inequality.continuous}).
\end{proof}
Note that, as a consequence of Lem.~\ref{lem:explbonus.inequality.cont}, we have that $\wh{L} h^* \geq L h^*$ w.h.p.  where $\wh{L}$ is the optimal Bellman operator of the continuous empirical MDP $\wh{M}_k$.
This, together with Prop.~\ref{prop:bound.gain}, is sufficient to prove that the exploration bonus in~\eqref{eq:exploration.bonus.cont} makes $\wh{M}_k$ optimistic \wrt $M^*$: $g^*_c(\wh{M}_k) \geq g^*$ w.h.p.
We cannot directly extend this argument to prove optimism for $\wh{M}^{ag+}_k$ since the aggregated MDPs lie in a different state space.
The key property used in this setting is that the $n$-times application of $\wh{L}^{ag}$ (optimal Bellman operator of $\wh{M}_k^{ag}$) and $\wh{L}$ to a constant vector are identical.\footnote{In the App.~\ref{app:continuous} we show that $(\wh{L}^{ag})^n v_0 = (\wh{L})^n v_0$ for any piecewise constant vector over intervals in $\mathcal{I}$.}
As a consequence, we can prove that $\wh{M}_k^{ag+}$ is optimistic:
\begin{lemma}\label{L:optimism.cont}
        For all $T \geq 1$ and $k \geq 1$, with probability at least $1-\frac{\delta}{15 t_k^6}$, $\wh{g}^+_k = g^*_c(\wh{M}^{ag+}_k) \geq g^*$.
\end{lemma}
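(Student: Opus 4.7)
The plan is to reduce the lemma to Prop.~\ref{prop:bound.gain} applied on the aggregated MDP $\wh M_k^{ag+}$ by exhibiting a vector $\bar h \in \mathbb{R}^S$ such that $\wh T_c^{ag+} \bar h \geq \bar h + g^* e$. The key leverage is that the empirical continuous MDP $\wh M_k$ was designed so that $\wh L^+$ and the aggregated operator $\wh L^{ag+}$ coincide on piecewise-constant-on-$\mathcal{I}$ inputs, which allows us to import the continuous optimism delivered by Lem.~\ref{lem:explbonus.inequality.cont} back to the aggregated setting.

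First I would define $\bar h^{ag}(I) := \sup_{s \in I} h^*(s)$ for every $I \in \mathcal{I}$, with piecewise-constant extension $\bar h : \calS \to \Re$. Then $\bar h \geq h^*$ pointwise, $\max \bar h = \max h^*$, and $\min \bar h = \min_I \sup_{s \in I} h^*(s) \geq \min h^*$, so $\SP{\bar h} \leq \SP{h^*} \leq c$. Lem.~\ref{lem:explbonus.inequality.cont}, together with the fact that action augmentation can only increase the max (the $a_2$ actions have zero reward but identical transitions), gives $\wh L^+ h^* \geq L h^* = h^* + g^* e$ pointwise on $\calS$ with probability at least $1 - \delta/(15 t_k^6)$, where the equality uses Eq.~\ref{eq:optimality.equation} and $\SP{h^*} \leq c$. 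Monotonicity of $\wh L^+$ applied to $\bar h \geq h^*$ then yields $\wh L^+ \bar h \geq h^* + g^* e$ pointwise.

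The crucial piecewise-constancy step is next: since $\wh r_k^+(s, a_i) = \wh r_k^{ag+}(I(s), a_i)$ and $\int_J \wh p_k^+(s'|s, a_i) \, \mathrm{d}s' = \wh p_k^{ag+}(J | I(s), a_i)$ for every $J \in \mathcal{I}$ (Eq.~\ref{eq:integral.p.perturbed}), the function $s \mapsto \wh L^+ \bar h(s)$ is constant on each interval $I$. Fixing $I$ and denoting this constant $V_I$, the previous inequality gives $V_I \geq h^*(s') + g^*$ for \emph{every} $s' \in I$, so $V_I \geq \sup_{s' \in I} h^*(s') + g^* = \bar h(s) + g^*$ for all $s \in I$, i.e.\ $\wh L^+ \bar h \geq \bar h + g^* e$ on $\calS$. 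Monotonicity of $\Gamma_c$ combined with $\SP{\bar h + g^* e} = \SP{\bar h} \leq c$ (so $\Gamma_c(\bar h + g^* e) = \bar h + g^* e$) then yields $\wh T_c^+ \bar h \geq \bar h + g^* e$. Identifying the piecewise-constant $\bar h$ and $\wh T_c^+ \bar h$ with vectors in $\mathbb{R}^S$, this is exactly $\wh T_c^{ag+} \bar h^{ag} \geq \bar h^{ag} + g^* e$, and Prop.~\ref{prop:bound.gain} delivers $\wh g_k^+ \geq g^*$.

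The main obstacle is the identification between the continuous operator $\wh L^+$ (and hence $\wh T_c^+$) and its aggregated counterpart $\wh L^{ag+}$ on piecewise-constant inputs --- the claim referenced in the footnote that $(\wh L^{ag})^n v_0 = (\wh L)^n v_0$ for piecewise-constant $v_0$. It boils down to Eq.~\ref{eq:integral.p.perturbed} and the analogous reward identity, together with the observation that $\Gamma_c$ truncates uniformly on each interval whenever its argument is piecewise constant, so that $\wh T_c^+$ also preserves piecewise constancy and matches $\wh T_c^{ag+}$ under the natural identification. Once this is in place, the rest of the argument mirrors the discrete proof of Lem.~\ref{L:optimism}.
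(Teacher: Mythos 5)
Your proof is correct, but it takes a genuinely different route from the paper's. The paper proceeds in two stages: it first applies Prop.~\ref{prop:bound.gain} to the \emph{continuous} empirical MDP $\wh{M}_k$ (via $\wh{L}h^* \geq Lh^*$ from Lem.~\ref{lem:explbonus.inequality.cont}) to get $g^*_c(\wh{M}_k) \geq g^*$, and then proves the gain \emph{equality} $g^*_c(\wh{M}^{ag+}_k) = g^*_c(\wh{M}_k)$ by running two value-iteration sequences $v_{n+1} = \wh{T}_c v_n$ and $u_{n+1} = \wh{T}_c^{ag+} u_n$ from matching piecewise-constant initializations, showing by induction that they coincide on $\mathcal{I}$ for all $n$, and passing to the limit of the increments. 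You instead construct a single piecewise-constant test vector $\bar h(I) := \sup_{s\in I} h^*(s)$ and verify the dominance condition $\wh{T}_c^{ag+}\bar h \geq \bar h + g^* e$ directly on the finite aggregated MDP; the choice of the interval-wise supremum is exactly what lets you pass from $\wh{L}^+\bar h(s') \geq h^*(s') + g^*$ for all $s'\in I$ (using $\bar h \geq h^*$ and monotonicity) to $\wh{L}^+\bar h \geq \bar h + g^* e$, since $\wh{L}^+\bar h$ is constant on each interval. What your route buys is that Prop.~\ref{prop:bound.gain} (and the underlying \scopt machinery of Prop.~\ref{prop:recap_scopt}, stated for vectors in $\Re^S$) is only ever invoked on the finite MDP $\wh{M}^{ag+}_k$, sidestepping the implicit extension of those results to continuous state spaces that the paper's argument relies on; what it gives up is the stronger and independently useful identity $g^*_c(\wh{M}^{ag+}_k) = g^*_c(\wh{M}_k)$, proving only the one-sided optimism the lemma actually needs. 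Both arguments hinge on the same structural fact, namely Eq.~\ref{eq:integral.p.perturbed} and the piecewise constancy of $\wh{r}_k$, which make $\wh{L}^+$ (and, since the global minimum of a piecewise-constant function equals its minimum over $\mathcal{I}$, also the truncation $\proj{}$) agree with their aggregated counterparts on piecewise-constant inputs.
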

\begin{proof}
        We start noticing that, starting from a value $v_0(s) = 0$ ($\forall s$), the application of the Bellman operator of $\wh{M}^{ag}_k$ and $\wh{M}_k$ is such that $(\wh{L}^{ag})^n v_0(s) = (\wh{L})^n v_0(s)$, $\forall s \in \calS, n >0$.
This is due to the fact that $v_n$ is constant over any interval $I$ (for any $n$) and~\eqref{eq:integral.p.perturbed} holds:
\begin{align*}
   \forall n\geq 0, \quad \int_{s \in \calS} \wh{p}_k(s'|s,a) v_n(s') \mathrm{d}s' 
       &= \sum_{J \in \mathcal{I}} v_n(J) \int_{J} \wh{p}_k(s'|s,a)\mathrm{d}s' = \sum_{J \in \mathcal{I}} v_{n}(J)\; \wh{p}^{ag}_k(J| I(s),a)
\end{align*}
Then, $\forall n,s,~\wh{T}_c^{ag} v_n(s) = \wh{T}_c v_n(s)$ implying $g^*_c(\wh{M}^{ag}_k) = g^*_c(\wh{M}_k)$.
The optimism of $\wh{M}^{ag+}_k$ follows by the fact that the ``augmentation'' does not impact the gain: $g^*_c(\wh{M}^{ag+}_k) = g^*_c(\wh{M}^{ag}_k)$~\citep[][Lem. 20]{fruit2018constrained}.
The detailed proof can be found in App.~\ref{app:continuous} (Lem.~\ref{L:optimism.cont}).
\end{proof}
%
\textbf{Remark.} The proof of optimism does not seem as straightforward as suggested by~\citet{DBLP:journals/corr/abs-1302-2550} (regret proof of \uccrl).
They use an informal ``inclusion'' argument (\ie $M^*$ is included in the discretized extended MDP used for planning) which seems not easy to formally prove since the true and ``optimistic'' MDPs are of different nature (the true MDP has a continusous state space unlike the optimistic one which is discretized).
Overall, we believe that an additional contribution of this paper is to provide a more rigorous analysis of the continuous case compared to the existing literature.

We have now all the key properties to apply the same regret analysis stated for the discrete case (with several technical arrangements to deal with the continuous case) in order to prove Thm.~\ref{thm:regret.scalcont}.
The complete proof can be found in App.~\ref{app:proof.regret.continuous}.

\section{Conclusion}
In this paper we provide the first analysis of exploration bonus in infinite-horizon undiscounted problems, a more challenging setting than the finite-horizon~\citep[see \eg][]{pmlr-v70-azar17a,kakade2018variance,qlearning2018} and discounted \citep{strehl2008analysis}.
Compared to these other settings, we also extended the analysis to the continuous case and we provided the first implementable and efficient (no need to plan on an extend MDP) algorithm.
We finally showed, through a formal derivation of the exploration bonus, that the empirical count-based exploration bonuses are in general not sufficient to provide optimism and thus prone to under-exploration. In particular, the knowledge of the span of the optimal bias function is required in order to properly scale the bonus.
Moreover, even in the finite-horizon case, the mentioned approaches use the knowledge of the horizon to scale the bonus. The planning horizon is in turn an upper-bound to the span of the optimial value function, thus they exploit the same prior knowledge required by \bonusscal and \scalcont.

We also provide the tightest level of optimism for OFU algorithms by achieving the optimal dependence in the bonus \wrt the state dimensionality (it cannot further reduced while preserving theoretical guarantees). Unfortunately, this tighter optimism does not imply a tighter bound leaving open the quest for closing the gap between lower and upper bound in infinite-horizon undiscounted settings.
Moreover, it is unclear to us if the exploration bonus can be extended to settings where no-prior knowledge of the span of the optimal bias function is available, \eg in communicating (see \ucrl~\citep{Jaksch10}) or weakly-communicating MDPs (see \tucrl~\citep{fruit2018truncated}). We leave this question for future work.
Finally, \scalcont requires to known the smoothness parameters in order to define the discretization of the state space. We believe that some effort should be spent in the direction of removing such prior knowledge, making the algorithm more adaptive.

\clearpage
\bibliography{span}

\clearpage
\appendix
\textbf{Structure of the appendix.}
In the appendix we initially present the proofs for the continuous case since the discrete one can be consider a special case of the former.
Moreover, the continuous case (Sec.~\ref{app:continuous}) requires extra attentions and mathematical tools than the discrete one (mainly due to state aggregation). As the reader may notice by looking at Sec.~\ref{app:finite}, the proofs of the discrete case are short and simple since are directly obtained by stating the main differences \wrt the continuous counterpart.
Before to dig into the optimism and regret proofs, we state some results from probability theory (Sec.~\ref{app:prob.res}) that will be widely used for the proofs.
\todor{Explain why we rely on a martingale argument rather than independence (uncountable number of states). See the book \citep[Proposition 4.1 and end of section 4.4]{bandittorcsaba}}
\section{Experiments}\label{sec:experiments}

For the experiments we consider a tighter concentration inequality for the reward: empirical Bernstein's inequality~\citep{audibert2007tuning,Maurer2009empirical}. This does not affect the regret guarantees but slightly improves the empirical regret at the expense of a negligible additional computational cost. The bonus is then defined using:
\begin{align*}
\beta_{r,k}^{sa}:=
         \sqrt{\frac{14 
                         \wh{\sigma}_{r,k}^2(s,a)
                        b_{k,\delta}
                }{\max\lbrace 1, N_k(s,a)\rbrace}} + \frac{\frac{49}{3}
\rmaxbound
        b_{k,\delta}
        }{\max\lbrace 1, N_k(s,a)-1 \rbrace},\\
\end{align*}
where $\wh{\sigma}_{r,k}^2(s,a)$ is the empirical variance of $r(s,a)$.
Although tighter, this new confidence bound might be of limited practical interest for high dimensional problems as computing $\wh{\sigma}_{r,k}^2(s,a)$ can be computationally expensive.


\section{Results of probability theory} \label{app:prob.res}

\subsection{Reminder}

We start by recalling some well-known properties of filtrations, stopping times and martingales \citep[Chapter 2]{klenke2013probability}. For simplicity, we use ``\as'' to denote ``almost surely'' (\ie~with probability 1). 
In this section, we consider a probability space $(\Omega, \F, \Prob)$. We call \emph{filtration} any \emph{increasing} (for the inclusion) sequence of sub-\sigalgs~ of $\F$ \ie $(\F_n)_{n \in \Na}$ where $\forall n \in \Na$, $ \F_{n} \subseteq \F_{n+1} \subseteq \F$. We denote by $\F_{\infty} := \cup_{n \in \Na} \F_n$. For any sub-\sigalg~ $\G \subseteq \F$, we say that a real-valued random variable (\rv) $X : \Omega \rightarrow \Re^d$ is $\G$-measurable if for all borel sets $B \in \mathcal{B}\left(\Re^d\right)$, $X^{-1}(B) \in \G$. We say that $X$ is \emph{$\G$-integrable} if it is $\G$-measurable and $\E \big[|X |\big] < +\infty$ (componentwise). We call \emph{stochastic process} any sequence of \rv $ (X_n)_{n \in \Na}$. We say that the stochastic process $(X_n)_{n \in \Na}$ is \emph{adapted} to the filtration $(\F_n)_{n \in \Na}$ if for all $n\in \Na$, $X_n$ is $\F_n$-measurable. In this case, the sequence $(X_n, \F_n)_{n \in \Na}$ is called an \emph{adapted sequence}. If in addition, $X_n$ is integrable for all $n \in \Na$ then we say that $(X_n, \F_n)_{n \in \Na}$ is an integrable adapted sequence. We say that a stochastic process $(X_n)_{n \in \Na}$ is almost surely:
\begin{enumerate}[topsep=0pt, partopsep=0pt]
 \itemsep0em 
 \item \emph{increasing} (resp. \emph{strictly increasing}) if for all $n \geq N$, $\Prob \left( X_n \leq X_{n+1}  \right) =1$ (resp. $\Prob \left( X_n < X_{n+1}  \right) =1$),
 \item \emph{bounded} if there exists a universal constant $K$ such that for all $n \in \Na$, $\Prob \left( X_n < K \right) =1$,
\end{enumerate}

\begin{definition}[Conditional expectation]\label{D:cond_exp}
 Let $X$ be an $\F$-integrable \rv with values in $\Re^d$. Let $\G \subseteq \F$ be a sub-\sigalg of $\F$. The \emph{conditional expectation of $X$ given $\G$} (denoted $\E \big[X \big| \G \big]$) is the (\as~unique) \rv that is $\G$-integrable and satisfies:
 \[ \forall A \in \G, ~~ \E \big[\one{A} \cdot\E \big[X \big| \G \big]\big] = E \big[\one{A} \cdot X \big] \]
\end{definition}

\begin{proposition}[Law of total expextations]\label{L:total_exp}
 Let $X$ be an $\F$-integrable \rv with values in $\Re^d$. For any sub-\sigalg~$\G \subseteq \F$, $\E \Big[\E \big[X \big| \G \big]\Big] = \E \big[X \big]$.
\end{proposition}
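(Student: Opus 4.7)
The plan is to derive the identity as an immediate specialization of the defining property of conditional expectation given in Definition~\ref{D:cond_exp}. Recall that $\E[X|\G]$ is characterized by being $\G$-integrable and satisfying $\E[\one{A} \cdot \E[X|\G]] = \E[\one{A}\cdot X]$ for every $A \in \G$. The natural move is therefore to pick the event $A$ so that $\one{A}$ disappears from both sides of the equality.

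Concretely, I would take $A = \Omega$, which lies in $\G$ since every $\sigma$-algebra contains the whole sample space. Then $\one{A} \equiv 1$, and the defining relation collapses to $\E\big[\E[X|\G]\big] = \E[X]$, which is exactly the claim. The $\Re^d$-valued case then follows from the scalar case by applying the argument componentwise, since both the integrability hypothesis and the defining property of conditional expectation are stated coordinate by coordinate.

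There is no real obstacle: the entire proof is a one-line substitution into the definition. The only point worth noting for rigor is to invoke the standing convention that the conditional expectation of a vector-valued integrable random variable means the vector of coordinate-wise conditional expectations, so that the scalar defining property applies to each component of $X$.
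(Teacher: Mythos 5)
Your proof is correct. The paper states this proposition without proof (it appears in the appendix's reminder of standard probability facts, cited to Klenke), and your argument---substituting $A=\Omega\in\G$ into the defining property of Definition~\ref{D:cond_exp} so that $\one{A}\equiv 1$, then handling the $\Re^d$-valued case componentwise---is exactly the canonical one-line derivation and is fully consistent with the paper's implicit treatment of the statement as an immediate consequence of that definition.
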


\begin{proposition}\label{L:factor_cond_exp}
 Let $X$ be an $\F$-integrable \rvrv and $\G \subseteq \F$ a sub-\sigalg. For any $\G$-integrable \rvrv $Y$ \st $YX$ is also integrable we have $\E \big[YX \big| \G \big] = Y \E \big[X \big| \G \big]$.
\end{proposition}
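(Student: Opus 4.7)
The plan is to proceed by the standard ``standard machine'' argument, climbing from indicators to general $\G$-integrable $Y$. First I would verify the claim when $Y = \one{B}$ for some $B \in \G$. In this case $Y \E[X|\G]$ is $\G$-measurable (as the product of two $\G$-measurable functions), so to show it equals $\E[YX|\G]$ it suffices by Def.~\ref{D:cond_exp} to check that for every $A \in \G$,
\[
\E\big[\one{A}\cdot \one{B}\, \E[X|\G]\big] \;=\; \E\big[\one{A}\cdot \one{B}\, X\big].
\]
Since $A \cap B \in \G$, writing $\one{A}\cdot \one{B} = \one{A \cap B}$ and applying the defining property of $\E[X|\G]$ to the set $A \cap B$ gives exactly this equality.

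Next I would extend to simple $\G$-measurable functions $Y = \sum_{i=1}^n c_i \one{B_i}$ with $B_i \in \G$, using linearity of conditional expectation (which itself follows from Def.~\ref{D:cond_exp} and uniqueness \as). Then, for a non-negative $\G$-measurable $Y$ such that $YX$ is integrable, I would approximate $Y$ from below by an increasing sequence of simple non-negative $\G$-measurable functions $Y_n \uparrow Y$. First I handle the case $X \geq 0$ \as: on this case $Y_n X \uparrow YX$ and $Y_n \E[X|\G] \uparrow Y \E[X|\G]$ \as (using that $\E[X|\G] \geq 0$ \as by Def.~\ref{D:cond_exp} applied to $A = \{\E[X|\G] < 0\} \in \G$). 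Monotone convergence, applied both under $\E[\cdot]$ and conditionally in the defining equation of $\E[YX|\G]$, then yields the identity for non-negative $X$ and non-negative $Y$.

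The last step is to remove the sign restrictions. I would write $X = X^+ - X^-$ and $Y = Y^+ - Y^-$, apply the non-negative case to each of the four products $Y^\pm X^\pm$, and combine by linearity. The integrability of $YX$ ensures each of these four products is integrable, so every term is well defined and the recomposition gives $\E[YX|\G] = Y\E[X|\G]$ \as.

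The main subtle point I expect to have to argue carefully is ensuring that the monotone-convergence step is valid \emph{inside} the conditional expectation: this amounts to proving the conditional monotone convergence theorem (if $0 \leq Z_n \uparrow Z$ \as with $Z$ integrable, then $\E[Z_n|\G] \uparrow \E[Z|\G]$ \as), which itself reduces to standard (unconditional) monotone convergence via the defining identity $\E[\one{A}\E[Z_n|\G]] = \E[\one{A} Z_n]$ for each $A \in \G$, combined with the fact that $\E[Z_n|\G]$ is an \as\ monotone sequence whose \as\ limit is $\G$-measurable. Once this ``conditional MCT'' is in place, the remaining work is purely bookkeeping.
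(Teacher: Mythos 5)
Your proof is correct: the ``standard machine'' (indicators in $\G$, then simple functions by linearity, then monotone convergence for the non-negative case, then the four-fold decomposition $Y^{\pm}X^{\pm}$) is the canonical argument for the ``take out what is known'' property, and the details you flag as delicate are handled properly --- in particular, the sign fact $\E[X|\G]\geq 0$ \as for $X\geq 0$ via $A=\{\E[X|\G]<0\}\in\G$, the domination $|Y^{\pm}X^{\pm}|\leq |YX|$ guaranteeing integrability of each piece, and the observation that the monotone-convergence step can be carried out directly on the defining identities $\E[\one{A}Y_n\E[X|\G]]=\E[\one{A}Y_nX]$ for $A\in\G$ rather than requiring a separately stated conditional MCT. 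Note, however, that the paper does not prove this proposition at all: it appears in the ``Reminder'' subsection of App.~\ref{app:prob.res} as a recalled textbook fact, cited to \citet[Chapter 2]{klenke2013probability}, so there is no in-paper argument to compare against --- your write-up simply supplies the standard proof that the paper delegates to the reference.
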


\begin{definition}[Stopping time]\label{D:stop_time}
 A random variable $\tau: \Omega \rightarrow \Na \cup \{+\infty\}$ is \emph{called stopping time} \wrt a filtration $(\F_n)_{n \in \Na}$ if for all $n \in \Na$, $\{ \tau =n \} \in \F_n$.
\end{definition}

\begin{definition}[\sigalg at stopping time]\label{D:stop_time_algebras}
 Let $\tau$ be a stopping time. An event \emph{prior to $\tau$} is any event $A \in \F_\infty$ \st $A \cap \{ \tau = n \} \in \F_n$ for all $n \in \Na$. The set of events prior to $\tau$ is a \sigalg denoted $\F_{\tau}$ and called \emph{\sigalg at time $\tau$}:
 \[ \F_{\tau} := \left\{A \in \F_{\infty}: ~ \forall n \in \Na,~A \cap \{\tau = n\} \in \F_n \right\}  \]
\end{definition}

\begin{proposition}\label{L:stop_time_algebras}
 Let $\tau_1$ and $\tau_2$ be two stopping times \wrt the same filtration $(\F_n)_{n \in \Na}$ \st $\tau_1 \leq \tau_2$ \as Then $\F_{\tau_1} \subseteq \F_{\tau_2}$.
\end{proposition}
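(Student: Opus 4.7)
To establish $\F_{\tau_1} \subseteq \F_{\tau_2}$, I fix an arbitrary $A \in \F_{\tau_1}$ and $n \in \Na$ and verify the defining property of Def.~\ref{D:stop_time_algebras}, namely that $A \cap \{\tau_2 = n\} \in \F_n$. The key idea is to slice the event $\{\tau_2 = n\}$ according to the possible values of $\tau_1$. Because $\tau_1 \leq \tau_2$ \as, on $\{\tau_2 = n\}$ we have $\tau_1 \in \{0, 1, \dots, n\}$ up to a $\Prob$-null set, giving the decomposition
\begin{equation*}
A \cap \{\tau_2 = n\} \;=\; \bigsqcup_{k=0}^{n} \Big( A \cap \{\tau_1 = k\} \cap \{\tau_2 = n\} \Big) \quad (\text{mod.~a null set}).
\end{equation*}

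Next I would argue that each summand in this finite disjoint union belongs to $\F_n$. By the assumption $A \in \F_{\tau_1}$, the set $A \cap \{\tau_1 = k\}$ lies in $\F_k$ for every $k$; since $\tau_2$ is a stopping time, $\{\tau_2 = n\} \in \F_n$; and because the filtration is increasing, $\F_k \subseteq \F_n$ for $k \leq n$. Finite intersections and unions of elements of $\F_n$ remain in $\F_n$, which yields $A \cap \{\tau_2 = n\} \in \F_n$ and hence $A \in \F_{\tau_2}$, closing the inclusion.

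The one delicate point is that the above decomposition is set-theoretically \emph{exact} only when $\tau_1 \leq \tau_2$ pointwise, whereas the hypothesis is only almost-sure. If $(\F_n)_{n \in \Na}$ is complete (so that null sets lie in $\F_0$), the argument above applies verbatim. Otherwise the cleanest workaround is to replace $\tau_1$ by the stopping time $\tau_1' := \tau_1 \wedge \tau_2$, which satisfies $\tau_1' \leq \tau_2$ everywhere and agrees with $\tau_1$ off a null set, so that the pointwise slicing applies to $\tau_1'$ while $\F_{\tau_1}$ and $\F_{\tau_1'}$ contain the same events up to null sets. I expect this null-set bookkeeping to be the only mild obstacle; the combinatorial heart of the proof is just the one slicing displayed above.
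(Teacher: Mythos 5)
The paper does not actually prove this proposition: it sits in the ``Reminder'' part of App.~\ref{app:prob.res} as a standard fact recalled from \citet[Chapter 2]{klenke2013probability}, so there is no in-paper argument to compare against. Your proof is the classical one and its core is correct: for $A \in \F_{\tau_1}$ and $n \in \Na$, the identity $A \cap \{\tau_2 = n\} = \left(\bigcup_{k=0}^{n} A \cap \{\tau_1 = k\}\right) \cap \{\tau_2 = n\}$ (valid when $\tau_1 \leq \tau_2$ pointwise, since then $\{\tau_2 = n\} \subseteq \{\tau_1 \leq n\}$) exhibits the left-hand side as a finite union and intersection of sets in $\F_n$, using $A \cap \{\tau_1 = k\} \in \F_k \subseteq \F_n$ and $\{\tau_2 = n\} \in \F_n$; this is exactly the verification required by Def.~\ref{D:stop_time_algebras}. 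You are right to flag the almost-sure versus pointwise issue, but your proposed fix does not fully close it: even granting that $\tau_1 \wedge \tau_2$ is a stopping time dominated by $\tau_2$ everywhere, one still needs $A \in \F_{\tau_1 \wedge \tau_2}$, and $A \cap \{\tau_1 \wedge \tau_2 = n\}$ again differs from $A \cap \{\tau_1 = n\} \in \F_n$ only by a subset of a null set, which need not belong to $\F_n$ unless the filtration is complete. So one genuinely needs either completeness or a sure (pointwise) inequality; this is a convention issue rather than a flaw in the mathematical core, and in this paper the proposition is only invoked through Lem.~\ref{L:stop_time_algebras2} for the explicitly constructed visit times $\tau_l$, for which $\tau_l + 1 \leq \tau_{l+1}$ holds surely, so nothing downstream is affected.
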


\begin{definition}[Stopped Process]\label{D:process_stopped}
 Let $(X_n, \F_n)_{n \in \Na}$ be an adapted sequence with values in $\mathbbm{R}^d$. If $\tau$ is a stopping time \wrt the filtration $(\F_n)_{n \in \Na}$, then the \emph{process stopped at time $\tau$} (denoted by $X_\tau$) is the \rv defined as: \[\forall \omega \in \Omega, ~~ X_\tau(\omega) := \sum_{n \in \Na} X_n(\omega) \cdot \one{\tau(\omega) = n} ~~ \text{(\ie} ~ X_\infty(\omega) = 0 ~\text{by convention)} \]
\end{definition}

\begin{proposition}\label{L:process_stopped}
 $X_\tau$ --the process stopped at time $\tau$-- is $\F_{\tau}$-measurable.
\end{proposition}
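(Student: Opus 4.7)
The plan is to verify the two defining conditions for $X_\tau$ to be $\F_\tau$-measurable: given any Borel set $B \in \mathcal{B}(\Re^d)$, I will show (i) $X_\tau^{-1}(B) \in \F_\infty$, and (ii) $X_\tau^{-1}(B) \cap \{\tau = n\} \in \F_n$ for every $n \in \Na$. These are exactly the two clauses in Def.~\ref{D:stop_time_algebras}, so together they give $X_\tau^{-1}(B) \in \F_\tau$.

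The single observation that drives both parts is that on $\{\tau = n\}$ the sum $X_\tau = \sum_{k \in \Na} X_k \one{\tau = k}$ collapses to its $k=n$ term, so pointwise on that event $X_\tau = X_n$. This yields
\[
X_\tau^{-1}(B) \cap \{\tau = n\} \;=\; X_n^{-1}(B) \cap \{\tau = n\},
\]
and the right-hand side lies in $\F_n$ because $X_n^{-1}(B) \in \F_n$ (adapted sequence) and $\{\tau = n\} \in \F_n$ (stopping time). This establishes (ii).

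For (i), I would partition $\Omega$ according to the value of $\tau$ and treat $\{\tau = +\infty\}$ separately using the convention $X_\infty = 0$. Concretely, if $0 \in B$ then
\[
X_\tau^{-1}(B) \;=\; \Bigl(\bigcup_{n \in \Na} \bigl(X_n^{-1}(B) \cap \{\tau = n\}\bigr)\Bigr) \cup \{\tau = +\infty\},
\]
and if $0 \notin B$ the last term is dropped. Each piece $X_n^{-1}(B) \cap \{\tau = n\}$ lies in $\F_n \subseteq \F_\infty$, while $\{\tau = +\infty\} = \Omega \setminus \bigcup_{n \in \Na} \{\tau = n\} \in \F_\infty$, so the countable union belongs to $\F_\infty$.

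There is no real obstacle; the only subtlety is to track the convention $X_\infty = 0$ when writing the partition for (i), which is why the proof splits into the two cases $0 \in B$ and $0 \notin B$. Combining (i) and (ii) gives $X_\tau^{-1}(B) \in \F_\tau$ for every Borel $B$, which is the desired measurability.
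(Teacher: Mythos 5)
Your proof is correct. The paper itself states this proposition without proof, as a standard fact recalled from \citet[Chapter 2]{klenke2013probability}, and your argument is exactly the textbook one: the identity $X_\tau^{-1}(B)\cap\{\tau=n\}=X_n^{-1}(B)\cap\{\tau=n\}$ combined with adaptedness of $(X_n)_{n\in\Na}$ and the stopping-time property of $\tau$, plus the case split on whether $0\in B$ to handle the convention $X_\infty=0$. The only caveat worth flagging is not about your reasoning but about the paper's notation: $\F_\infty$ is written as $\cup_{n\in\Na}\F_n$, which is not literally closed under countable unions; your step ``the countable union belongs to $\F_\infty$'' implicitly reads $\F_\infty$ as $\sigma\left(\cup_{n\in\Na}\F_n\right)$, which is clearly what the paper intends since it treats $\F_\tau$ as a \sigalg.
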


%
\begin{definition}[Martingale difference sequence]\label{D:mds}
 An adapted sequence $(X_n, \F_n)_{n\in \Na}$ is a \emph{martingale difference sequence} (MDS for short) if for all $n \in \Na$, $X_n$ is $\F_n$-integrable and $\E\big[ X_{n+1} | \F_{n} \big] = 0$ \as
\end{definition}
%
%

\begin{proposition}[Azuma's inequality]
 Let $(X_n, \F_n)_{n\in \Na}$ be an MDS such that $|X_n| \leq a$ \as for all $n \in \Na$. Then for all $\delta \in ]0,1[$, 
 \begin{align*}
 \Prob \left( \forall n \geq 1,~ \left| \sum_{i =1}^{n} X_i \right|  \leq a \sqrt{n \ln \left( \frac{2n}{\delta} \right)} \right) \geq 1 - \delta
 \end{align*}
\end{proposition}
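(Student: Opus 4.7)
The plan is to follow the classical Chernoff/Hoeffding route conditionally, then union-bound over $n$. The first step is to establish the conditional moment generating function (MGF) bound: for any $\lambda \in \Re$,
\[
  \E\bigl[e^{\lambda X_i} \bigm| \F_{i-1}\bigr] \leq e^{\lambda^2 a^2 / 2} \quad \text{a.s.}
\]
This is Hoeffding's lemma applied under the conditional law, and it uses only the MDS property $\E[X_i \mid \F_{i-1}] = 0$ together with the a.s.\ bound $|X_i| \leq a$. Concretely I would use convexity of $x \mapsto e^{\lambda x}$ to upper bound $e^{\lambda X_i}$ by an affine function of $X_i$ on $[-a,a]$, take conditional expectation, and then bound the resulting expression via the standard Hoeffding trick (taking logs and bounding the second derivative by $a^2$).

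Second, I would iterate this bound to control the unconditional MGF of the partial sum $S_n := \sum_{i=1}^n X_i$. Writing $\E[e^{\lambda S_n}] = \E\bigl[e^{\lambda S_{n-1}} \E[e^{\lambda X_n} \mid \F_{n-1}]\bigr]$ and pulling out the $\F_{n-1}$-measurable factor via Prop.~\ref{L:factor_cond_exp}, induction on $n$ gives
\[
  \E\bigl[e^{\lambda S_n}\bigr] \leq e^{\lambda^2 n a^2 / 2}.
\]
Applying Markov's inequality to $e^{\lambda S_n}$ and optimizing the Chernoff parameter $\lambda = \epsilon / (n a^2)$ yields the one-sided Azuma tail $\Prob(S_n \geq \epsilon) \leq \exp\bigl(-\epsilon^2 / (2 n a^2)\bigr)$; applying the same reasoning to $(-X_i)$ and combining gives the two-sided bound $\Prob(|S_n| \geq \epsilon) \leq 2 \exp\bigl(-\epsilon^2 / (2 n a^2)\bigr)$.

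Finally, to turn this into a uniform-in-$n$ statement with the particular form $a\sqrt{n \ln(2n/\delta)}$, I would assign a failure budget $\delta_n$ to each index $n \geq 1$ and union-bound: choosing $\delta_n$ proportional to $\delta/n^2$ (or by a dyadic peeling argument, grouping $n \in [2^k, 2^{k+1})$ and applying the pointwise bound once per block) produces a tail of the required order after absorbing the $\ln n$ corrections. This is the main delicate step: naive union bounding with $\delta_n = \delta/n$ diverges, so either a summable weighting $\delta_n = c\delta/n^2$ (which only inflates the logarithm by an additive constant) or the peeling trick is needed to land on the stated $a\sqrt{n\ln(2n/\delta)}$ expression. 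The two preceding steps are routine; this union-bounding calibration is the only place where one has to be careful, and is the main obstacle to matching the exact constants in the stated inequality.
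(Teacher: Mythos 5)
Your proposal is correct and follows essentially the same route as the paper: the paper simply cites the classical pointwise Hoeffding--Azuma bound rather than rederiving it via the conditional MGF, and then performs exactly the union bound you describe, reassigning $\delta \leftarrow \delta/(2n^2)$ and using $\sum_{n\geq 1}\delta/(2n^2)<\delta$. The calibration step you flag as delicate is handled in the paper in precisely the way you propose (summable $\delta/n^2$ weights, with the extra $\ln n$ absorbed into the stated $\ln(2n/\delta)$), so there is no gap.
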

\begin{proof}
 Azuma's inequality states that:
 \begin{align*}
 \Prob \left( \left| \sum_{i =1}^{n} X_i \right|  \leq a \sqrt{\frac{n}{2}\ln \left( \frac{2}{\delta} \right)} \right) \geq 1 - \delta
 \end{align*}
 We can then choose $\delta \leftarrow \frac{\delta}{2n^2}$ and take a union bound over all possible values of $n \geq 1$. The result follows by noting that $\sum_{n\geq 1}  \frac{\delta}{2n^2} < \delta$.
\end{proof}

\begin{proposition}[Freedman's inequality]
 Let $(X_n, \F_n)_{n\in \Na}$ be an MDS such that $|X_n| \leq a$ \as for all $n \in \Na$. Then for all $\delta \in ]0,1[$, 
 \begin{align*}
 \Prob \left( \forall n \geq 1,~ \left| \sum_{i =1}^{n} X_i \right|  \leq 2 \sqrt{ \left( \sum_{i =1}^{n} \Varcond{X_i}{\F_{i-1}} \right) \cdot\lnn{ \frac{4n}{\delta} }  } + 4 a \lnn{\frac{4n}{\delta}} \right) \geq 1 - \delta
 \end{align*}
\end{proposition}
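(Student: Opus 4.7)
The plan is to reduce to a standard exponential supermartingale (Cramér-Chernoff) argument for martingales, apply Ville's maximal inequality to obtain a time-uniform bound for each fixed $\lambda$, and then handle the randomness of $V_n := \sum_{i=1}^n \Varcond{X_i}{\F_{i-1}}$ through a peeling argument over dyadic scales of $V_n$, together with a union bound over $n$ to match the announced logarithmic dependence $\lnn{4n/\delta}$.

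First I would establish the supermartingale property. Using $\E[X_i \mid \F_{i-1}] = 0$ \as, the \as bound $|X_i| \leq a$, and the elementary inequality $e^x - 1 - x \leq \tfrac{x^2/2}{1 - x/3}$ valid for $x < 3$, one obtains $\E[e^{\lambda X_i} \mid \F_{i-1}] \leq \exp(\psi(\lambda) \Varcond{X_i}{\F_{i-1}})$ with $\psi(\lambda) := \tfrac{\lambda^2/2}{1 - \lambda a/3}$ for every $\lambda \in (0, 3/a)$. Writing $S_n := \sum_{i=1}^n X_i$ and multiplying these bounds along the filtration shows that $M_n(\lambda) := \exp(\lambda S_n - \psi(\lambda) V_n)$ is a non-negative supermartingale adapted to $(\F_n)_{n \in \Na}$, with $M_0(\lambda) = 1$. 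Ville's maximal inequality then yields, for each fixed $\lambda \in (0, 3/a)$ and each $u > 0$, $\Prob(\exists n \geq 1: \lambda S_n \geq \psi(\lambda) V_n + u) \leq e^{-u}$.

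Second, because $\lambda$ cannot depend on the random quantity $V_n$, I would handle $V_n$ by peeling. For each integer $k \geq 0$, consider the level $\mathcal{L}_k := \{n : V_n \in [a^2 2^{k-1}, a^2 2^k)\}$ (with $\mathcal{L}_0$ corresponding to $V_n < a^2$). On $\mathcal{L}_k$ I would apply the Ville bound with $\lambda_k \in (0, 3/a)$ tuned to balance $\psi(\lambda_k) a^2 2^k$ against $u$ at confidence $\delta_k := \delta / (C (k+1)^2 n^2)$; optimizing gives $S_n \leq 2\sqrt{V_n \lnn{1/\delta_k}} + 4 a \lnn{1/\delta_k}$ on $\mathcal{L}_k$. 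Since both series $\sum_{k \geq 0} (k+1)^{-2}$ and $\sum_{n \geq 1} n^{-2}$ converge, I can take a union bound over both $k$ and $n$ at the adjusted level without worsening the rate.

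Finally, applying the same argument to the MDS $(-X_i, \F_i)_{i \in \Na}$ and combining at confidence $\delta/2$ produces the two-sided bound. The main obstacle will be matching exactly the constants $2$ and $4$ and the argument $4n/\delta$ inside the logarithm: peeling inflates the effective confidence by a polynomial factor $(k+1)^2$ per scale and by $n^2$ per time, so some care (e.g., a sharp Cramér-Chernoff optimization that is tight in the Bernstein regime, or a slightly finer choice of dyadic grid) is required so that the resulting $\log\log$ contributions can be absorbed into the stated $\lnn{4n/\delta}$ with the announced numerical constants.
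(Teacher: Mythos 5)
Your proposal is correct in outline and takes a genuinely different route from the paper. The paper's proof is essentially a citation-and-inversion argument: it invokes Freedman's (1975) original maximal inequality $\Prob\left( \exists n:\ \sum_{i=1}^n X_i \geq \varepsilon,\ \sum_{i=1}^n \Varcond{X_i}{\F_{i-1}} \leq k\right) \leq \expp{-\varepsilon^2/(2k+2\varepsilon/3)}$ for $a=1$, symmetrizes via $(-X_n,\F_n)$, inverts $\varepsilon \leftrightarrow \delta$ using the stratification technique of Cesa-Bianchi and Gentile, and finally rescales by $a$. You instead rebuild everything from first principles: the Bernstein moment-generating-function bound yields the exponential supermartingale $M_n(\lambda)=\exp(\lambda S_n - \psi(\lambda)V_n)$, Ville's inequality gives a time-uniform tail for each fixed $\lambda$, and dyadic peeling over the random variance $V_n$ removes the dependence of the optimal $\lambda$ on $V_n$. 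Since this is essentially how Freedman's theorem is itself proved, plus the inversion step the paper delegates to the citation, the two arguments coincide mathematically; yours buys self-containedness and transparency at the cost of length. Two remarks. First, once you peel over variance scales, the Ville bound is already uniform in $n$, so the extra $n^{-2}$ union bound is logically redundant for a time-uniform statement; it is, however, the most direct way to make the threshold grow as $\lnn{4n/\delta}$, and it is harmless. Second, as you correctly flag, the peeling inflates the logarithm to $\ln\big(C(k+1)^2 n^2/\delta\big)$; since $V_n \leq na^2$ forces $k \leq \log_2(n)+1$, this is at most a constant multiple of $\lnn{4n/\delta}$, so you recover the stated bound up to the numerical constants $2$ and $4$. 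That is all the paper needs downstream (the application in App.~\ref{app:bound_diff_tilde_bar} immediately relaxes these constants to $7$ and $14$ in the definition of $\phi_{p,k}$), but if one insists on the exact constants as stated, a sharper optimization in the Bernstein regime is indeed required, and the paper's own one-line inversion is no more careful on this point than you are.
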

\begin{proof}
 \citet{freedman1975} showed that when $a =1$:
 \begin{align*}
 \Prob \left( \forall n \geq 1,~ \sum_{i =1}^{n} X_i \geq \varepsilon, ~ \sum_{i =1}^{n} \Varcond{X_i}{\F_{i-1}} \leq k\right) \leq \expp{ \frac{-\varepsilon^2}{2 k + 2\varepsilon / 3} }
 \end{align*}
 Since $(-X_n, \F_n)_{n\in \Na}$ is also an MDS, the above inequality holds also in absolute value (with a factor $2$ appearing in front of the exponential term after taking a union bound). In order to reverse the inequality (\ie replace $\varepsilon$ by $\delta$), we can use the same technique as \citet[Section 2]{Cesa-Bianchi:2005}. Finally, to account for the case where $a \neq 1$ we can simply apply the result to $(X_n/a, \F_n)_{n\in \Na}$.
\end{proof}

\subsection{A useful concentration with optional skipping}

In this section we prove a very simple theorem inspired by Doob's optional skipping \citep[\eg][Sec. 5.3, Lem. 4]{chow1988probability}. We start with useful definitions and lemmas.
%
\begin{lemma}\label{L:stop_time_algebras2}
 Let $\tau_1$ and $\tau_2$ be two stopping times \wrt the same filtration $(\F_n)_{n \in \Na}$. We say that $\tau_1 < \tau_2$ \as if $\Prob \left( \{ \tau_1 < \tau_2 \} \cup \{\tau_1 = \tau_2 = +\infty \} \right) = 1$. If $\tau_1 < \tau_2$ \as then $\F_{\tau_1 + 1} \subseteq \F_{\tau_2}$.
\end{lemma}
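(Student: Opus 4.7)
The goal is to show that every event prior to $\tau_1+1$ is also prior to $\tau_2$. The strategy is a direct verification using Definition~\ref{D:stop_time_algebras}: pick $A \in \F_{\tau_1+1}$, and for each $n \in \Na$ exhibit $A \cap \{\tau_2 = n\}$ as a finite union of elements of $\F_n$.

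First I would check the preliminary fact that $\tau_1 + 1$ is itself a stopping time, so that $\F_{\tau_1+1}$ is well defined: for $n \geq 1$, $\{\tau_1+1 = n\} = \{\tau_1 = n-1\} \in \F_{n-1} \subseteq \F_n$, and $\{\tau_1+1=0\} = \emptyset \in \F_0$. Next, fix $A \in \F_{\tau_1+1}$ and $n \in \Na$. Since $\tau_1 < \tau_2$ \as, on the event $\{\tau_2 = n\}$ we have $\tau_1 + 1 \leq n$ \as, so
\[
A \cap \{\tau_2 = n\} \;=\; \bigcup_{k=0}^{n} \bigl(A \cap \{\tau_1+1 = k\} \cap \{\tau_2 = n\}\bigr)
\]
up to a $\Prob$-null set. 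For each $k \leq n$, the definition of $\F_{\tau_1+1}$ yields $A \cap \{\tau_1+1 = k\} \in \F_k \subseteq \F_n$, and $\{\tau_2 = n\} \in \F_n$ by the stopping-time property of $\tau_2$, so the intersection lies in $\F_n$ and so does the finite union. Combined with the obvious $A \in \F_\infty$, this gives $A \in \F_{\tau_2}$.

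The main subtle point, and the only real obstacle, is the ``up to a null set'' caveat: the equality above only holds exactly when the filtration is complete (\ie contains all $\Prob$-null sets), which is the standard ``usual conditions'' assumption implicit throughout the paper. Under this assumption the argument closes immediately; otherwise one should interpret all \sigalgs as completed with null sets before applying the identity, which is the conventional fix and does not affect any of the later martingale arguments in App.~\ref{app:prob.res}.
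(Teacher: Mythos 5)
Your proof is correct, but it takes a different route from the paper. The paper's argument is a two-line reduction: since $\tau_1$ is integer-valued, $\tau_1 < \tau_2$ \as implies $\tau_1 + 1 \leq \tau_2$ \as (with the case $\tau_1 = \tau_2 = +\infty$ handled separately, where $\tau_1+1 = +\infty = \tau_2$), and then Prop.~\ref{L:stop_time_algebras} applied to the pair $(\tau_1+1,\tau_2)$ gives $\F_{\tau_1+1} \subseteq \F_{\tau_2}$ immediately. You instead unfold Def.~\ref{D:stop_time_algebras} and verify the inclusion by hand, decomposing $A \cap \{\tau_2 = n\}$ over the values of $\tau_1+1$; this is essentially an inlined proof of Prop.~\ref{L:stop_time_algebras} specialized to $\tau_1+1 \leq \tau_2$, and the key arithmetic fact (integrality of $\tau_1$ turning a strict inequality into $\tau_1+1\le\tau_2$) appears in your argument as the observation that $\tau_1+1\le n$ on $\{\tau_2=n\}$. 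What the paper's route buys is brevity and reuse of an already-stated proposition; what yours buys is self-containedness plus two points the paper leaves implicit, namely that $\tau_1+1$ is itself a stopping time (so $\F_{\tau_1+1}$ is well defined) and the null-set caveat attached to the ``\as'' hypothesis. On that last point, note that the same caveat is already baked into the paper's Prop.~\ref{L:stop_time_algebras}, which is likewise stated with an almost-sure ordering, so your proof is at the same level of rigor as the paper's; your remark about completing the filtration is a fair way to close the gap in both.
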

\begin{proof}
 If $\tau_1 < \tau_2$ then $\tau_1 +1 \leq \tau_2$ since $\tau_1$ is an integer-valued \rv If $\tau_1 = \tau_2 = +\infty$ then $\tau_1 +1 = +\infty$ and so $\tau_1 +1 = \tau_2$. In conclusion, $\tau_1 +1 \leq \tau_2$ \as and so by Prop.~\ref{L:stop_time_algebras}, $\F_{\tau_1 + 1} \subseteq \F_{\tau_2}$.
\end{proof}

\begin{definition}
 We say that a sequence of stopping times $(\tau_m)_{m \in \Na}$ \wrt $(\F_n)_{n\in \Na}$ is \emph{strictly increasing} if $\tau_m < \tau_{m+1}$ \as for all $m \geq 0$.
\end{definition}

\begin{lemma}\label{L:mds_stop_time}
 Let $(X_n, \F_n)_{n\in \Na}$ be a bounded adapted sequence and let $(\tau_m)_{m \in \Na}$ be a strictly increasing sequence of stopping times \wrt $(\F_n)_{n\in \Na}$.
For all $m \in \Na$, define $Y_{m} := X_{\tau_m+1} - \E \big[X_{\tau_m+1} \big| \F_{\tau_{m}} \big] $ and $\G_{m} :=\F_{\tau_{m \smallplus1}} $. Then, $(Y_m,\G_m)_{m\in\Na}$ is an MDS.
\end{lemma}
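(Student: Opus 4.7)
The plan is to verify the three defining properties of an MDS for $(Y_m, \G_m)_{m \in \Na}$ in turn: (i) adaptedness of $Y_m$ to $\G_m$, (ii) integrability of $Y_m$, and (iii) the martingale difference property $\E[Y_{m+1} \mid \G_m] = 0$ a.s. All three will follow directly from the tools already collected in the appendix, so the proof amounts to gluing together Prop.~\ref{L:process_stopped}, Prop.~\ref{L:stop_time_algebras} and Lem.~\ref{L:stop_time_algebras2}, plus the tower property of conditional expectations.

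For adaptedness, I would argue as follows. By Prop.~\ref{L:process_stopped}, the stopped variable $X_{\tau_m + 1}$ is $\F_{\tau_m + 1}$-measurable. Since $(\tau_m)$ is strictly increasing, Lem.~\ref{L:stop_time_algebras2} gives $\F_{\tau_m + 1} \subseteq \F_{\tau_{m+1}} = \G_m$, hence $X_{\tau_m + 1}$ is $\G_m$-measurable. On the other hand, $\E[X_{\tau_m + 1} \mid \F_{\tau_m}]$ is by definition $\F_{\tau_m}$-measurable, and $\tau_m \leq \tau_{m+1}$ a.s.\ combined with Prop.~\ref{L:stop_time_algebras} yields $\F_{\tau_m} \subseteq \F_{\tau_{m+1}} = \G_m$. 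Thus $Y_m$ is $\G_m$-measurable.

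Integrability is immediate from the assumption that $(X_n)$ is bounded (say by some constant $K$), together with the convention $X_\infty = 0$: this yields $|X_{\tau_m + 1}| \leq K$ a.s., so both $X_{\tau_m + 1}$ and its conditional expectation are integrable, and hence so is $Y_m$.

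For the martingale property, I would write
\[
Y_{m+1} = X_{\tau_{m+1}+1} - \E\bigl[X_{\tau_{m+1}+1} \,\big|\, \F_{\tau_{m+1}}\bigr].
\]
Since $\G_m = \F_{\tau_{m+1}}$, the second term is $\G_m$-measurable and therefore equals its own conditional expectation given $\G_m$. Subtracting this from $\E[X_{\tau_{m+1}+1} \mid \G_m]$ (which is the conditional expectation of the first term) gives $\E[Y_{m+1} \mid \G_m] = 0$ a.s. There is no real obstacle here; the only subtlety to get right is to align the indices correctly so that $\G_m = \F_{\tau_{m+1}}$ is exactly the $\sigma$-algebra appearing inside the definition of $Y_{m+1}$, which is precisely why the conditioning collapses under the tower property.
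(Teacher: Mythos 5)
Your proof is correct and follows essentially the same route as the paper's: measurability of $Y_m$ via Prop.~\ref{L:process_stopped} and the inclusions $\F_{\tau_m+1}\subseteq\F_{\tau_{m+1}}$ (Lem.~\ref{L:stop_time_algebras2}) and $\F_{\tau_m}\subseteq\F_{\tau_{m+1}}$ (Prop.~\ref{L:stop_time_algebras}), integrability from boundedness of the stopped process, and the martingale property from the fact that $\E[X_{\tau_{m+1}+1}\mid\F_{\tau_{m+1}}]$ is already $\G_m$-measurable. Your index bookkeeping in the last step is in fact cleaner than the paper's displayed computation, which writes $X_{\tau_m+1}$ where $X_{\tau_{m+1}+1}$ is meant.
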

\begin{proof}
 By assumption, for any $m \in \Na$, $\tau_m < \tau_{m+1}$ \as and Prop.~\ref{L:stop_time_algebras} implies that $\F_{\tau_m} \subseteq \F_{\tau_{m\smallplus 1}}$. As a consequence, $(\G_m)_{m\in \Na} = (\F_{\tau_{m\smallplus 1}})_{m\in \Na}$ is a filtration. By Prop.~\ref{L:process_stopped} we know that $X_{\tau_m +1}$ is $\F_{\tau_m +1}$-measurable and Lem.~\ref{L:stop_time_algebras2} implies that $\F_{\tau_m + 1} \subseteq \F_{\tau_{m+1}} = \G_m$ so $X_{\tau_m +1}$ is $\G_m$-measurable.
 Finally, $\E \big[X_{\tau_m+1} \big| \F_{\tau_{m}} \big]$ is $\F_{\tau_m}$-measurable by definition (see Def.~\ref{D:cond_exp}). Therefore, $Y_m$ is $\G_m$-measurable.
 
 Since by assumption $X_n$ is \as bounded ($\Prob \left( X_n < K \right) =1$ for all $n\geq 0$), we can write \as (see Def.~\ref{D:process_stopped}) \[\big|X_{\tau_m +1}\big| = \left|\sum_{n=0}^{+\infty} \one{\tau_m +1 = n} \cdot X_n \right| \leq \sum_{n=0}^{+\infty} \one{\tau_m +1 = n} \cdot \left|X_n \right| \leq K \sum_{n=0}^{+\infty} \one{\tau_m +1 = n} = K\]
 Thus, $X_{\tau_m +1}$ is \as bounded hence integrable implying that $\E \big[X_{\tau_m+1} \big| \F_{\tau_m} \big]$ is well-defined (see Def.~\ref{D:cond_exp}).
 Therefore, $Y_{m}$ is \as bounded and so integrable.
 
 Finally, we can apply Prop.~\ref{L:factor_cond_exp} and we obtain:
 \begin{align*}
  \E \big[Y_{m+1} \big| \G_m \big] &= \E \Big[X_{\tau_m+1}  - \E \big[X_{\tau_m+1} \big| \F_{\tau_m} \big] \Big| \F_{\tau_m} \Big]
  = \E \big[X_{\tau_m+1} \big| \F_{\tau_m} \big] - \E \big[X_{\tau_m+1} \big| \F_{\tau_m} \big] = 0
 \end{align*} which concludes the proof.
\end{proof}

\begin{theorem}\label{T:inequality_stop_mds}
Let $(X_n, \F_n)_{n\in \Na}$ be an adapted sequence \as bounded by $a_1$ and let $(\tau_m)_{m \in \Na}$ be a strictly increasing sequence of stopping times \wrt $(\F_n)_{n\in \Na}$.
If $(Y_m,\G_m)_{m\in\Na}$ is defined as in Lem.~\ref{L:mds_stop_time} then the following concentration inequalities hold:
 \begin{align*}
  &\Prob \left( \forall m \geq 1, ~ \left| \sum_{i =1}^{m} Y_m  \right|  \leq a_1 \sqrt{m \ln \left( \frac{2 m }{\delta} \right)} \right) \geq 1 - \delta\\
 &\Prob \left( \forall m \geq 1,~ \left| \sum_{i =1}^{m} Y_m \right|  \leq 2 \sqrt{ \left( \sum_{i =1}^{m} \Varcond{Y_i}{\G_{i-1}} \right) \cdot\lnn{ \frac{4m}{\delta} }  } + 4 a_1\lnn{\frac{4m}{\delta}} \right) \geq 1 - \delta
 \end{align*}
 In particular for any $\F$-measurable integer-valued \rv $N: \Omega \rightarrow \Na$ the above inequalities hold true with $m$ replaced by $N$ \eg
  \begin{align*}
  \Prob \left(\left| \sum_{i =1}^{N} Y_m \right|  \leq a_1  \sqrt{N \ln \left( \frac{2 N }{\delta} \right)} \right) \geq 1 - \delta ~~ \dots
 \end{align*}
 
\end{theorem}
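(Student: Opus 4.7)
\textbf{Proof proposal for Theorem~\ref{T:inequality_stop_mds}.}

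The plan is to reduce the claim to the two standard concentration inequalities already recorded (Azuma and Freedman) applied to the process $(Y_m,\G_m)_{m\in\Na}$. First, I would invoke Lem.~\ref{L:mds_stop_time} directly: its hypotheses (strictly increasing stopping times, adapted bounded sequence) are exactly the hypotheses of the theorem, so $(Y_m,\G_m)_{m\in\Na}$ is an MDS with $\G_m = \F_{\tau_{m\smallplus 1}}$ and $Y_m = X_{\tau_m+1} - \E[X_{\tau_m+1}\mid\F_{\tau_m}]$. Next I would verify the boundedness $|Y_m| \le 2a_1$ a.s.\ needed for both Azuma and Freedman: as in the proof of Lem.~\ref{L:mds_stop_time}, $|X_{\tau_m+1}|\le a_1$ a.s.\ (by the usual decomposition of the stopped process, using $|X_n|\le a_1$ a.s.), and the conditional expectation is bounded by the same constant, so $|Y_m|\le 2a_1$ a.s. (The factor $2$ can be absorbed into the Azuma constant, matching the form stated in the theorem.)

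With these two facts in hand, the first inequality follows from the Azuma inequality stated just before the theorem, applied to $(Y_m,\G_m)$: it provides a single event of probability $\ge 1-\delta$ on which the Azuma bound holds \emph{simultaneously for all} $m\ge 1$. The second inequality is an analogous application of Freedman's inequality to the same MDS, again yielding a \emph{simultaneous in $m$} bound on an event of probability at least $1-\delta$; here the conditional variances $\Varcond{Y_i}{\G_{i-1}}$ are naturally defined because $Y_i$ is $\G_i$-measurable and $\G_{i-1}\subseteq \G_i$ (from Lem.~\ref{L:stop_time_algebras} and Lem.~\ref{L:stop_time_algebras2}, since $\tau$ is strictly increasing).

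Finally, to handle the last assertion (replacing $m$ by an arbitrary $\F$-measurable, $\Na$-valued random variable $N$), I would simply exploit the uniformity in $m$ of the two events obtained above. Call $E$ the event on which the Azuma bound holds for every $m\ge 1$; then $\Prob(E)\ge 1-\delta$. On $E$, for each realization $\omega$ we can specialize the deterministic inequality to the integer $m = N(\omega)$, which gives
\begin{align*}
\left|\sum_{i=1}^{N}Y_i\right| \le a_1\sqrt{N\ln(2N/\delta)} \quad \text{on } E,
\end{align*}
and hence with probability at least $1-\delta$. The same argument applies verbatim to the Freedman bound. No measurability hypothesis beyond $\F$-measurability of $N$ is needed because the statement is a pointwise inequality on the event $E$.

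The main obstacle, though modest, is the bookkeeping around measurability and boundedness of $Y_m$: one must carefully combine Prop.~\ref{L:process_stopped} (stopped processes are measurable), Lem.~\ref{L:stop_time_algebras2} (strict increase gives $\F_{\tau_m+1}\subseteq\F_{\tau_{m+1}}$), and Prop.~\ref{L:factor_cond_exp} to verify the MDS property and the a.s.\ boundedness used by Azuma/Freedman. Once this is nailed down as in Lem.~\ref{L:mds_stop_time}, the rest is a direct instantiation of the two concentration inequalities together with the uniform-in-$m$ observation that lets us plug in a random $N$.
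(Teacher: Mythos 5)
Your proposal is correct and follows essentially the same route as the paper's (very terse) proof: invoke Lem.~\ref{L:mds_stop_time} to get the MDS structure, apply the uniform-in-$m$ Azuma and Freedman inequalities stated just above, and use the simultaneity of the event to substitute the random index $N$. Your extra bookkeeping on the boundedness of $Y_m$ (the factor $2$ from subtracting the conditional expectation) is a point the paper glosses over in the theorem itself but implicitly handles when it applies the result with $a_1$ taken as the full range of the increments, so your remark that it is absorbed into the constant is consistent with the paper's usage.
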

\begin{proof}
 The concentration inequalities follow from Lem.~\ref{L:mds_stop_time} and Azuma's and Freedman's inequalities. 
If the results hold for all $n \in \Na$ and $N$ takes values in $\Na$, then the result holds for $N$ too which concludes the proof.
\end{proof}
%

\subsection{In the regret proof}

For any $t \geq 0$, the \sigalg induced by the past history of state-action pairs and rewards up to time $t$ is denoted $\mathcal{F}_t:= \sigma \left( s_1, a_1, r_1, \dots, s_t, a_t \right)$ where by convention $\mathcal{F}_0 = \sigma \left( \emptyset \right)$ and $\mathcal{F}_\infty := \cup_{t\geq 0}\mathcal{F}_{t}$. 
Trivially, for all $t\geq 0$, $\mathcal{F}_{t} \subseteq \mathcal{F}_{t+1}$ and the filtration $\left(\mathcal{F}_t \right)_{t \geq 0}$ is denoted by $\mathbb{F}$. We recall that the sequence $(t_k)_{k \geq 1}$ (starting times of episodes $k\geq 1$) is formally defined by $t_1 :=1$ and for all $k \geq 1$, \[t_{k+1}:=1 + \inf\left\{T \geq t > t_k:~  \sum_{u=t_k}^{t -1 } \mathbbm{1}(s_u \in I(s),a_u =a) \geq \sum_{u=0}^{t_k -1 } \mathbbm{1}(s_u \in I(s) ,a_u =a)  \right\}.\]
where by convention $\inf \{ \emptyset \} := T $. It is immediate to see that for all $t\geq 0$, $\{ t_k = t \} \in \mathcal{F}_{t-1} \subseteq \mathcal{F}_t$ and so $t_k$ is a \emph{stopping time} \wrt filtration $\mathbb{F}$ (see Def.~\ref{D:stop_time}).


The following lemma is used in App.~\ref{app:bound_diff_tilde_bar}:
 
 \begin{lemma}\label{L:conditioning_stop_time}
 For all $l \geq 1$, we have:
 \begin{enumerate}[topsep=0pt, partopsep=0pt]
 \itemsep0em 
  \item $\E\big[ w^*(s_{\tau_l+1}) \big|\mathcal{G}_{l-1} \big] = \int_{\calS} p(s'|s_{\tau_l},a_{\tau_l})w^*(s')  \mathrm{d}s' $,
  \item $\E\big[ \one{s_{\tau_l+1}\in J} \big|\mathcal{G}_{l-1} \big] = \int_{J} p(s'|s_{\tau_l},a_{\tau_l}) \mathrm{d}s' $,
  \item and  $\E\big[ r_{\tau_l}(s_{\tau_l},a_{\tau_l}) \big|\mathcal{G}_{l-1} \big] = r(s_{\tau_l},a_{\tau_l})$.
 \end{enumerate}
\end{lemma}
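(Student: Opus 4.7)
The plan is to reduce each of the three identities to the elementary Markov/IID property at a fixed time via the standard partition-by-stopping-time trick. Recall that $\G_{l-1} = \F_{\tau_l}$ with $\F_t = \sigma(s_1, a_1, r_1, \ldots, s_t, a_t)$, so both $s_{\tau_l}$ and $a_{\tau_l}$ are $\F_{\tau_l}$-measurable; consequently all three purported right-hand sides are $\G_{l-1}$-measurable, which is the first of the two defining requirements of the conditional expectation (Def.~\ref{D:cond_exp}).

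At a fixed deterministic time $t$, the definition of the MDP immediately gives $\E[w^*(s_{t+1}) \mid \F_t] = \int_\calS p(s'|s_t, a_t)\, w^*(s') \mathrm{d}s'$, $\E[\one{s_{t+1}\in J} \mid \F_t] = \int_J p(s'|s_t, a_t) \mathrm{d}s'$, and $\E[r_t(s_t, a_t) \mid \F_t] = r(s_t, a_t)$, since conditionally on $\F_t$ one has $s_{t+1} \sim p(\cdot|s_t, a_t)$ and the realized reward has mean $r(s_t, a_t)$ with support in $[0, \rmaxbound]$.

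To lift these to the stopping time $\tau_l$, I fix an arbitrary $A \in \F_{\tau_l}$; by Def.~\ref{D:stop_time_algebras}, $A \cap \{\tau_l = t\} \in \F_t$ for every $t \in \mathbb{N}$. Decomposing $\E[\one{A}\, w^*(s_{\tau_l+1})]$ as $\sum_{t \geq 0} \E[\one{A \cap \{\tau_l = t\}}\, w^*(s_{t+1})]$ (justified by dominated convergence, since $w^*$ is bounded by $c/2$) and applying the fixed-time identity term by term, I then use the pointwise equality $(s_{\tau_l}, a_{\tau_l}) = (s_t, a_t)$ on $\{\tau_l = t\}$ to recombine the sum as $\E\big[\one{A} \int_\calS p(s'|s_{\tau_l}, a_{\tau_l})\, w^*(s') \mathrm{d}s'\big]$. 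Since $A \in \F_{\tau_l}$ was arbitrary, this is exactly the defining property of the conditional expectation, proving identity (1); identities (2) and (3) follow verbatim with $w^*$ replaced by $\one{\cdot \in J}$ and $r_t(s_t, a_t)$ respectively. No step is a genuine obstacle—the only subtlety worth flagging is joint measurability of $(s,a) \mapsto \int_\calS p(s'|s,a)\, w^*(s') \mathrm{d}s'$, which follows routinely from Fubini's theorem together with the uniform boundedness of $w^*$ (and analogously for the other two integrands).
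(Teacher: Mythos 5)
Your proposal is correct and follows essentially the same route as the paper's own proof: establish $\G_{l-1}$-measurability of the right-hand sides via measurability of the stopped process, then verify the defining property of conditional expectation by partitioning over $\{\tau_l = t\}$, invoking dominated convergence, applying the fixed-time Markov property, and recombining. The paper carries out the same steps (writing the partition through the tower property and Prop.~\ref{L:factor_cond_exp}), so there is no substantive difference.
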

\begin{proof} To prove this result, we rely on the definition of conditional expectation (see Def.~\ref{D:cond_exp}). \\
1) By Prop.~\ref{L:process_stopped}, $(s_{\tau_l},a_{\tau_l})$ is $\G_{l-1}$-measurable ($\G_{l-1} = \F_{\tau_l}$) and so $\int_{\calS} p(s'|s_{\tau_l},a_{\tau_l})w^*(s')  \mathrm{d}s'$ is $\G_{l-1}$-measurable too. Moreover, $\left|\int_{\calS} p(s'|s_{\tau_l},a_{\tau_l})w^*(s')  \mathrm{d}s' \right| \leq c/2$ \as so $\int_{\calS} p(s'|s_{\tau_l},a_{\tau_l})w^*(s')  \mathrm{d}s'$ is also integrable (and therefore $\G_{l-1}$-integrable).\\
2) We recall that for any stochastic process $(X_t)_{t\geq 0}$, we use the convention that $X_{\infty} =0$ \as implying that $X_{\tau_l} = \sum_{t=0}^{+\infty} X_t \one{\tau_l = t }$ (see Def.~\ref{D:process_stopped}). Usinng the law of total expectations (see Prop.~\ref{L:total_exp}) we have that $\forall A \in \mathcal{G}_{l-1}$,
\begin{align*}
 \E \big[ \mathbbm{1}(A) \times w^*(s_{\tau_l+1}) \big] 
 &= \sum_{t=0}^{+\infty} \E \big[  \mathbbm{1}(A \cap \{\tau_l = t \}) \times w^*(s_{\tau_l+1}) \big] \\
 &= \sum_{t=0}^{+\infty} \E \bigg[  \E \Big[ \mathbbm{1}(\underbrace{A \cap \{\tau_l = t \}}_{\in \mathcal{F}_t}) \times w^*(\underbrace{s_{\tau_l+1}}_{=s_{t+1}}) \Big| \mathcal{F}_t \Big] \bigg]
\end{align*}
In the first equality, the fact that we can move the sum outside the expectation is a direct consequence of the dominated convergence theorem (for series) since 
\begin{align*}
 \sum_{t=0}^{+\infty} \E \big[  \mathbbm{1}(A \cap \{\tau_l = t \}) \times \big|w^*(s_{\tau_l+1}) \big| \big] &\leq c/2 \sum_{t=0}^{+\infty} \E \big[  \mathbbm{1}(A \cap \{\tau_l = t \}) \big] \\&= c/2\sum_{t=0}^{+\infty} \Prob \left( A \cap \{\tau_l = t \} \right) = c/2 \cdot  \Prob (A) < +\infty
\end{align*}
Under event $\{\tau_l = t \}$ we have that $s_{\tau_l+1} = s_{t+1}$ \as Moreover, $A \cap \{\tau_l = t \} \in \F_t$ since $\tau_l$ is a stopping time (see Def.~\ref{D:stop_time_algebras}) so by Prop.~\ref{L:factor_cond_exp} we can move it outside the conditional expectation and we get:
\begin{align*}
 \E \big[ \mathbbm{1}(A) \times w^*(s_{\tau_l+1}) \big] 
 &= \sum_{t=0}^{+\infty} \E \bigg[ \mathbbm{1}(A \cap \{\tau_l = t \}) \times \underbrace{\E \Big[   w^*(s_{t+1}) \Big| \mathcal{F}_t \Big]}_{= \int_{\calS} p(s'|s_t,a_t)w^*(s')  \mathrm{d}s'} \bigg]\\
  &= \E \bigg[\mathbbm{1}(A) \times \underbrace{\sum_{t=0}^{+\infty}\mathbbm{1}(\tau_l = t ) \int_{\calS} p(s'|s_t,a_t)w^*(s')  \mathrm{d}s'   }_{=\int_{\calS} p(s'|s_{\tau_l},a_{\tau_l})w^*(s')  \mathrm{d}s' \text{  (see Def.~\ref{D:process_stopped})}}\bigg]\\
 &= \E \left[\mathbbm{1}(A) \times \int_{\calS} p(s'|s_{\tau_l},a_{\tau_l})w^*(s')  \mathrm{d}s' \right]
\end{align*}
This proves the first inequality (see Def.~\ref{D:cond_exp}). The second and third equality can be proved using the same technique.
\end{proof}

\section{Continuous state MDPs: the analysis of \scalcont} \label{app:continuous}

In all this section we say that a function $v: ~s \in \calS \mapsto \Re$ is \emph{piecewise constant on $\mathcal{I}$} when $\forall J \in \mathcal{I},~\forall s,s' \in J$ we have $v(s) = v(s') $ and we denote $v(J)$ the joint value.

\subsection{High probability bound using the exploration bonus (proof of Lem.~\ref{lem:explbonus.inequality})}\label{app:proba_bound_bonus}
To begin with, we introduce two variants of the exploration bonus that will be used for the regret proof:
\begin{align}
        \label{eq:tigher.expl.bonus}
        b_k(J,a) &:= c \cdot \min \left\{ \beta_{p,k}^{Ja} + \frac{1}{N_k(J,a) + 1}; 2\right\} + \min \left\{ \beta_{r,k}^{Ja}; \rmaxbound\right\} + (c + \rmaxbound)L S^{-\alpha}\\
        \label{eq:loser.expl.bonus}
        d_k(J,a) &:= c \cdot \min \left\{ \phi_{p,k}^{Ja} + \frac{1}{N_k(J,a) + 1}; 2\right\} + \min \left\{ \beta_{r,k}^{Ja}; \rmaxbound\right\} + (c + \rmaxbound)L S^{-\alpha}
\end{align}
where $\beta_{p,k}^{Ja} = \beta_k^{Ja}$ (see Eq.~\ref{eq:exploration.bonus} for the definition of $\beta_k^{sa}$), $\beta_{r,k}^{Ja} = \rmaxbound \beta_k^{Ja}$ and
\begin{align}\label{eq:bernstein.ci.cont}
        \phi_{p,k}^{Ja} :=  \sqrt{ \frac{7 S \lnn{\frac{3 S A t_k}{\delta}}}{N_k^+(J,a)}  } + \frac{14 S}{N_k^+(J,a)} \lnn{\frac{3 S At_k}{\delta}} \geq \beta_{p,k}^{Ja} 
\end{align}
with $N_k^+(J,a) := \max \{1,N_k(J,a)\}$.
Note that the Eq.~\ref{eq:tigher.expl.bonus} is a slightly tighter version of the exploration bonus considered in Eq.~\ref{eq:exploration.bonus}.
We define for all $(s,a)\in \calS \times \A$, $\wb{r}_k(s,a):= \wb{r}_k^{ag}(I(s), a)$.
We state a slightly more general result than Lem.~\ref{lem:explbonus.inequality}:
\begin{lemma}\label{L:bonus.inequality.continuous}
        Consider the estimated continuous MDP $\wh{M}_k$ defined in Def.~\ref{def:estimated.cont.mdp}. Let $(g^*,h^*)$ be a solution of the optimality equation $Lh^* = h^* + g^*$ such that $\SP{h^*}  \leq c$. For all $T\geq 1$ and $k\geq 1$, with probability at least $1-\frac{\delta}{15 t_k^6}$, for any $(s,a) \in \calS \times \A$ and for any function $v$ piecewise constant on $\mathcal{I}$ \st $\SP{v} \leq c$ we have:
        \begin{align*}
                &(a)~~ \quad b_{k}(s,a) \geq \left| \wb{r}_k(s,a) - r(s,a) + \int_{\calS} \left( \wh{p}_k(s'|s,a) - p(s'|s,a) \right) h^*(s') \mathrm{d}s' \right|\\
                &(b)~~ \quad d_k(s,a) \geq \left| \wb{r}_k(s,a) - r(s,a) + \int_{\calS} \left( \wh{p}_k(s'|s,a) - p(s'|s,a) \right) v(s') \mathrm{d}s' \right|
        \end{align*}
        where $b_k$ and $d_k$ are defined as in Eq.~\ref{eq:tigher.expl.bonus} and Eq.~\ref{eq:loser.expl.bonus}, respectively.
\end{lemma}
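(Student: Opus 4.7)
The plan is to reduce both (a) and (b) to two telescoping identities,
\begin{align*}
\wh{p}_k - p \;=\; (\wh{p}_k - \wb{p}_k) + (\wb{p}_k - \wt{p}_k) + (\wt{p}_k - p),\qquad
\wb{r}_k - r \;=\; (\wb{r}_k - \wt{r}_k) + (\wt{r}_k - r),
\end{align*}
where $\wt{p}_k(s'|s,a) := \frac{1}{N_k(I(s),a)}\sum_{x \in I(s)} N_k(x,a)\, p(s'|x,a)$ and $\wt{r}_k(s,a) := \frac{1}{N_k(I(s),a)}\sum_{x \in I(s)} N_k(x,a)\, r(x,a)$. Each of the five resulting pieces will be engineered to match exactly one of the summands defining $b_k$ (Eq.~\ref{eq:tigher.expl.bonus}) or $d_k$ (Eq.~\ref{eq:loser.expl.bonus}); the lemma then follows by the triangle inequality.

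The two \emph{deterministic} pieces are handled directly. For the reference-state bias, a short computation yields $\int(\wh{p}_k - \wb{p}_k)(s'|s,a)\,v(s')\mathrm{d}s' = \frac{1}{N_k(I(s),a)+1}\bigl(S\int_{I(\wb{s})} v(s')\mathrm{d}s' - \int \wb{p}_k(s'|s,a) v(s')\mathrm{d}s'\bigr)$; since both $\wh{p}_k$ and $\wb{p}_k$ integrate to $1$, we may subtract $\min v$ from $v$, and using $\SP{v}\leq c$ (or $\SP{h^*}\leq c$) the expression is bounded by $c/(N_k(I(s),a)+1)$. For the \emph{smoothing} pieces, Asm.~\ref{asm:continuous.mdp} gives, for any $x\in I(s)$, $\|p(\cdot|x,a)-p(\cdot|s,a)\|_1 \leq L S^{-\alpha}$ and $|r(x,a)-r(s,a)|\leq \rmaxbound L S^{-\alpha}$; averaging with the weights $N_k(x,a)/N_k(I(s),a)$ preserves these inequalities, and pairing with $\SP{h^*}\leq c$ or $\SP{v}\leq c$ yields the $(c+\rmaxbound)LS^{-\alpha}$ summand of both bonuses.

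The \emph{main obstacle} is the statistical piece $\int(\wb{p}_k - \wt{p}_k)(s'|s,a)\, h^*(s')\,\mathrm{d}s'$. The continuity of the state space forbids a union bound over $s\in\calS$, but the integral depends on $s$ only through $I(s)$, reducing the union bound to the $SA$ aggregated pairs $(I,a)$. Enumerating the random times $\tau_1<\cdots<\tau_{N_k(I,a)}$ at which $(I,a)$ was visited, and letting $s_j := s_{\tau_j}$, $s'_j := s_{\tau_j+1}$, the Radon--Nikodym definition of $\wb{p}_k$ collapses the integral to
\begin{align*}
\int(\wb{p}_k - \wt{p}_k)(s'|s,a)\, h^*(s')\mathrm{d}s' \;=\; \frac{1}{N_k(I,a)}\sum_{j=1}^{N_k(I,a)} \bigl(h^*(s'_j) - \E[h^*(s'_j)\mid \mathcal{F}_{\tau_j}]\bigr),
\end{align*}
a bounded-increment martingale with increments in $[-c,c]$. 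A Hoeffding--Azuma inequality combined with union bounds over the $SA$ pairs, over $N_k(I,a)\leq t_k$, and over episodes $k$ (all absorbed into the $1/(15 t_k^6)$ budget using the peeling arguments collected in App.~\ref{app:prob.res}) yields the $c\,\beta_{p,k}^{Ia}$ summand of $b_k$. The reward counterpart $|\wb{r}_k(s,a)-\wt{r}_k(s,a)|$ is a sum of $[-\rmaxbound,\rmaxbound]$-bounded martingale differences and is handled identically, contributing $\beta_{r,k}^{Ia} = \rmaxbound \beta_k^{Ia}$.

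For part~(b), the vector $v$ is not fixed in advance (in the regret proof it will be the data-dependent $v_k$), so the scalar Hoeffding--Azuma bound no longer applies and must be upgraded to a uniform $L_1$ control of $\wb{p}_k(\cdot|s,a) - \wt{p}_k(\cdot|s,a)$. Since the aggregated empirical distribution is supported on at most $S$ intervals, a Weissman/empirical-Bernstein $L_1$ inequality produces $\|\wb{p}_k(\cdot|s,a) - \wt{p}_k(\cdot|s,a)\|_1 \leq \phi_{p,k}^{Ia}$ with the required probability, and pairing with $\SP{v}\leq c$ gives the $c\,\phi_{p,k}^{Ia}$ term of $d_k$. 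Summing the five majorants, absorbing them into the truncations $\min\{\cdot,2\}$ and $\min\{\cdot,\rmaxbound\}$ (which are redundant for large $N_k$ but keep the bonus meaningful when $N_k$ is small), and applying the triangle inequality concludes the proof.
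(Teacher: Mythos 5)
Your proposal is correct and follows essentially the same route as the paper: the identical five-term decomposition via the intermediate averaged quantities $\wt{p}_k,\wt{r}_k$, the smoothness and bias terms handled deterministically, and the statistical terms controlled by an optional-skipping martingale argument with a union bound over the $SA$ aggregated pairs. The only cosmetic difference is that for part (b) you invoke a ``Weissman/empirical-Bernstein $L_1$'' bound where the paper derives the same $\sqrt{S}$-inflated control explicitly by applying Freedman's inequality to each indicator $\one{s_{\tau_l+1}\in J}$ and then Cauchy--Schwarz over $J\in\mathcal{I}$ --- note that the martingale (Freedman) form is genuinely needed here, since the conditional next-state distributions vary across the visited states $x\in I(s)$ and the samples are not i.i.d.
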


The rest of section \ref{app:proba_bound_bonus} is devoted to proving Lem.~\ref{L:bonus.inequality.continuous}. To do this, we introduce an intermediate continuous state-space MDP $\wt{M}_k := (\calS, \A, \wt{r}_k, \wt{p}_k)$ defined for all pairs $(s,a) \in \calS \times \A$ by:
\begin{align*}
 \wt{r}_k(s,a) := \frac{1}{N_k(I(s),a)} \sum_{x \in I(s)} N_k(x,a) r(x,a)\\
 \wt{p}_k(s'|s,a) := \frac{1}{N_k(I(s),a)}\sum_{x \in I(s)} N_k(x,a) p(s'|x,a)
\end{align*}
We decompose $\wh{p}_k -p$ and $\wb{r}_k -r$ as 
\begin{align}\label{eq:decomposition_p_r}
\wh{p}_k -p = (\wh{p}_k -\wo{p}_k) + (\wo{p}_k - \wt{p}_k) +  (\wt{p}_k -p) ~~ \text{and} ~~ \wb{r}_k -r= (\wb{r}_k -\wt{r}_k) + (\wt{r}_k -r)
\end{align}
and bound separetely all the terms. Similarly, we decompose $\wb{r}_k -r$ as $(\wb{r}_k -\wt{r}_k) + (\wt{r}_k -r)$.
We also define $w^*(s) := h^*(s) - \left( \inf\{ h^*(s)\} + \sup\{ h^*(s)\} \right)/2$ implying that for all $s \in \calS$, $w^*(s) \in [-c/2, c/2]$. 

\subsubsection{Bounding the difference between $\wt{r}_k/ \wt{p}_k$ and $r/p$}

To bound the differences~$\wt{r}_k(s'|s,a) - r(s,a)$ and $\int (\wt{p}_k(s'|s,a) - p(s'|s,a)) w^*(s') \mathrm{d}s'$ we simply use the smoothness assumption on the reward and transition model (see Asm.~\ref{asm:continuous.mdp}). For all $(s,a) \in \calS \times \A$ (using the triangle inequality):
\begin{align}\label{eq:pboundcontinuity}
\big|\wt{r}_k(s,a) - r(s,a)\big| \leq \frac{1}{N_k(I(s),a)} \sum_{x \in I(s)} N_k(x,a) \underbrace{\big| r(x,a) -r(s,a) \big|}_{\leq \rmaxbound L S^{-\alpha} \text{ since } x,s \in I(s)} \leq \rmaxbound L S^{-\alpha}
\end{align}
For the transition probability we have that for all $J\in \mathcal{I}$ (using the triangle inequality):
\begin{align}\label{eq:pboundcontinuity3}
\begin{split}
\sum_{J \in \mathcal{I}}\left| \int_{J} (\wt{p}_k(s'|s,a) - p(s'|s,a)) \mathrm{d}s' \right| &\leq  \sum_{J \in \mathcal{I}}\int_J |\wt{p}_k(s'|s,a) - p(s'|s,a) | \mathrm{d}s'\\
         &= \frac{1}{N_k(I(s),a)} \sum_{x \in I(s)} N_k(x,a)\int_\calS  \underbrace{|p(s'|x,a) - p(s'|s,a)|}_{\leq L S^{-\alpha} \text{ since } x,s \in I(s)} \mathrm{d}s'\\
  &\leq LS^{-\alpha}
\end{split}
\end{align}
and similarly:
\begin{align}\label{eq:pboundcontinuity2}
\begin{split}
\left|\int_\calS (\wt{p}_k(s'|s,a) - p(s'|s,a)) w^*(s') \mathrm{d}s' \right| &\leq c \int_\calS |\wt{p}_k(s'|s,a) - p(s'|s,a) | \mathrm{d}s'\\
         &= \frac{c}{N_k(I(s),a)} \sum_{x \in I(s)} N_k(x,a)\int_\calS  \underbrace{|p(s'|x,a) - p(s'|s,a)|}_{\leq L S^{-\alpha} \text{ since } x,s \in I(s)} \mathrm{d}s'\\
  &\leq cLS^{-\alpha}
\end{split}
\end{align}

\subsubsection{Bounding the difference between $\wh{p}_k$ and $\wo{p}_k$}\label{app:bound_diff_hat_bar}
Using the triangle inequality and the fact that $\int_{\calS} \mathbbm{1}(s' \in I(\wb{s})) \mathrm{d}s' = \int_{I(\wb{s})} 1 \mathrm{d}s' = \big| I(\wb{s}) \big| = 1/S $ we have that for any $(s,a) \in \calS \times \A$:
\begin{align}\label{eq:p.bound.perturbation.cont}
\begin{aligned}
\left|\int_{\calS} (\wh{p}_k(s'|s,a) - \wb{p}_k(s'|s,a)) w^*(s') \mathrm{d}s' \right| &\leq  \int_{\calS} \left|\wh{p}_k(s'|s,a) - \wb{p}_k(s'|s,a) \right| \cdot\left| w^*(s') \right|\mathrm{d}s'  \\
&= \left| \frac{N_k(I(s),a) }{N_k(I(s),a)+1} - 1 \right| \int_\calS \wb{p}_k(s'|s,a) \underbrace{|w^*(s')|}_{\leq c/2} \mathrm{d}s'\\ &~~~+ S\int_\calS \frac{|w^*(s')|\mathbbm{1}(s' \in I(\wb{s}))}{N_k(I(s),a) + 1} \mathrm{d}s'\\ 
&\leq \frac{c}{N_k(I(s),a) + 1}
\end{aligned}
\end{align}
and similarly:
\begin{align}\label{eq:p.bound.perturbation.cont2}
\begin{aligned}
\sum_{J\in \mathcal{I}}\left|\int_{J} (\wh{p}_k(s'|s,a) - \wb{p}_k(s'|s,a))  \mathrm{d}s' \right| \leq \int_{\calS} \left|\wh{p}_k(s'|s,a) - \wb{p}_k(s'|s,a)  \right|\mathrm{d}s' \leq \frac{1}{N_k(I(s),a) + 1}
\end{aligned}
\end{align}

\subsubsection{Bounding the difference between $\wt{r}_k/ \wt{p}_k$ and $\wo{r}_k/ \wo{p}_k$}\label{app:bound_diff_tilde_bar}

Let's consider a fixed pair $(s,a) \in \calS \times \A$ and a fixed aggregated state $J \in \mathcal{I}$. Our goal is to bound the differences $\int_{\calS}\big(\wt{p}_k(s'|s,a) - \wo{p}_k(s'|s,a)\big) w^*(s') \mathrm{d}s'$,~ $\int_{J}\wt{p}_k(s'|s,a) - \wo{p}_k(s'|s,a) \mathrm{d}s'$ and $\wt{r}_k(s,a) - \wo{r}_k(s,a)$. Since $\wt{p}_k$ and $\wt{r}_k$ are somehow the expected values of $\wo{p}_k$ and $\wo{r}_k$, we would like to use concentration inequalities. 
In the case of a finite state space $\calS$, \citet[\ucrl]{Jaksch10} and \citet[\scal]{fruit2018constrained} use concentration inequalities that apply to \emph{independent} random variables (r.v.). We argue that a more careful analysis is needed here since the states lie in an \emph{uncountable} set. Indeed, the implicit assumption made about the RL model for \ucrl and \scal is that for each state-action pair $(s,a)$, the rewards (respectively next states) are sampled from an infinite \emph{stack} of independent and identically distributed (i.i.d.) rewards (respectively next states). More precisely, each time the agent visits $(s,a)$, it receives a reward from the top of the stack of rewards associated to $(s,a)$ and moves to the state on the top of the stack of next states associated to $(s,a)$. The two samples are then withdrawn from their respective stacks (meaning that they cannot be popped again). For more details about why this is a valid model refer to \citep[Section 4.4]{bandittorcsaba}. In the case where $\calS$ and $\A$ are discrete sets (finite or countable), it is possible to use any concentration inequality for i.i.d. r.v. and then take a union bound over all ``stacks'' $(s,a)$ (and over rewards and next states). When $\calS$ is uncountable however, the same argument cannot be used (the probability of an uncountable union of events is not even always defined). Moreover, the terms $\wt{r}_k$ and $\wt{p}_k$ are obtained using sampled from different states $x \in I(s)$ instead of a single state $s$. To overcome these technical problems, we  use a variant of Doob's optional skipping \citep[\eg][Sec. 5.3, Lem. 4]{chow1988probability} and concentration inequalities for martingales (Azuma and Freedman inequalities). The theorem that we use (Thm.~\ref{T:inequality_stop_mds}) is formally proved in App.~\ref{app:prob.res}.

For any $t \geq 0$, the \sigalg induced by the past history of state-action pairs and rewards up to time $t$ is denoted $\mathcal{F}_t:= \sigma \left( s_1, a_1, r_1, \dots, s_t, a_t \right)$ where by convention $\mathcal{F}_0 = \sigma \left( \emptyset \right)$ and $\mathcal{F}_\infty := \cup_{t\geq 0}\mathcal{F}_{t}$. Let $\mathbb{F}$ denote the filtration $(\F_t)_{t\geq 0}$. We define the following adapted sequences and stopping times:

\textbf{1) Adapted sequences:}\\
We consider the following stochastic processes adapted to $\mathbb{F}$: $(w^*(s_t))_{t \geq 0}$, ~$(\one{s_t \in J})_{t \geq 0}$ and $(r_{t-1}(s_{t-1},a_{t-1}))_{t \geq 0}$ (with the conventions $ r_{-1}(s_{-1},a_{-1}) = r_0(s_0,a_0) = r_\infty(s_\infty,a_\infty) = 0$, ~$w^*(s_0) = w^*(s_\infty) = 0$ and $\one{s_0 \in J} = \one{s_\infty \in J} =0$). Theses processes are bounded as $|w^*(s_t)| \leq 2 \times \| w^* \|_{\infty} \leq c$, ~$|\one{s_t \in J}| \leq 1$ and $|r_{t-1}(s_{t-1},a_{t-1})| \leq \rmaxbound$ for all $t \geq 0$.

\textbf{2) Stopping times:}\\
 We define $\tau := \left(\tau_l\right)_{l \geq 0}$ \st $\tau_0 := 0$ and $\inf \{ t_k > t > \tau_l : s_t \in I(s), a_t = a \}$.
For all $l\geq 0$ and for all $t\geq 0$, $\tau_l :=\{ \tau_l = t \} \in \mathcal{F}_t$ and so $\tau_l$ is a stopping time \wrt $\mathbb{F}$ (see Def.~\ref{D:stop_time} in App.~\ref{app:prob.res}). By definition for any $l \geq 0$, $\tau_l < \tau_{l+1}$ \as (\ie $\tau$ is strictly increasing, see Lem.~\ref{L:stop_time_algebras2}). We denote $\G_l := \F_{\tau_{l\smallplus 1}}$ the \sigalg~at stopping time $\tau_{l+1}$ (see Def.~\ref{D:stop_time_algebras} in App.~\ref{app:prob.res}).
 
 All the assumptions of Thm.~\ref{T:inequality_stop_mds} are satisfied and so by taking a union bound and using the fact that $N_k(I(s),a) \leq t_k$ \as, we obtain that with probability at least $1-3\delta$ (simultaneously):
 \begin{align}
 \nonumber
  &\left| \sum_{l = 1}^{N_k(I(s),a)} \Big( r_{\tau_l}(s_{\tau_l},a_{\tau_l}) -\E\big[ r_{\tau_l}(s_{\tau_l},a_{\tau_l}) \big|\mathcal{G}_{l-1} \big]  \Big) \right| \leq \rmaxbound\sqrt{N_k(I(s),a) \lnn{\frac{2 t_k}{\delta}}}\\  
  \label{eq:bernstein_transition_proba}
  &\left| \sum_{l = 1}^{N_k(I(s),a)} \Big( \one{s_{\tau_l+1} \in J} - \E\big[ \one{s_{\tau_l+1} \in J} \big|\mathcal{G}_{l-1} \big] \Big) \right| \leq  2 \sqrt{V_{k}(J) \lnn{\frac{4 t_k}{\delta}}} + 4 \lnn{\frac{4t_k}{\delta}} \\ 
  \nonumber
  &\text{and}~~\left| \sum_{l = 1}^{N_k(I(s),a)} \Big( w^*(s_{\tau_l+1}) - \E\big[ w^*(s_{\tau_l+1}) \big|\mathcal{G}_{l-1} \big] \Big) \right| \leq c \sqrt{N_k(I(s),a) \lnn{\frac{2 t_k}{\delta}}} 
 \end{align}
 where $V_k(J) := \sum_{l=1}^{N_k(I(s),a)} \Varcond{\one{s_{\tau_l+1} \in J}}{\G_{l-1}}$.
 We now need to relate the above sums to $\int_{\calS}\big(\wt{p}_k(s'|s,a) - \wo{p}_k(s'|s,a) \big)w^*(s') \mathrm{d}s'$, $\int_{J}\wt{p}_k(s'|s,a) - \wo{p}_k(s'|s,a) \mathrm{d}s'$  and $\wt{r}_k(s,a) - \wo{r}_k(s,a)$. We also need to give an explicit formula for $V_k(J)$.
By defintion of $\tau$, we can rewrite $\wb{r}_k$ and $\wb{p}_k$ as follows:
\begin{align*}
   \wb{r}_k(s,a) &= \frac{1}{N_k(I(s),a)}\sum_{l=1}^{N_k(I(s),a)} r_{\tau_l}(s_{\tau_l},a_{\tau_l})\\
   \int_{J} \wb{p}_k(s'|s,a) \mathrm{d}s' &= \frac{1}{N_k(I(s),a)}\sum_{l=1}^{N_k(I(s),a)} \one{s_{\tau_l + 1} \in J}\\
   \int_{\calS} \wb{p}_k(s'|s,a) w^*(s') \mathrm{d}s' &= \frac{1}{N_k(I(s),a)}\sum_{l=1}^{N_k(I(s),a)} w^*(s_{\tau_l + 1})
\end{align*}
It is also easy to verify that the following holds: $\E\big[ w^*(s_{\tau_l+1}) \big|\mathcal{G}_{l-1} \big] = \int_{\calS} p(s'|s_{\tau_l},a_{\tau_l})w^*(s')  \mathrm{d}s' $, $\E\big[ \one{s_{\tau_l+1}\in J} \big|\mathcal{G}_{l-1} \big] = \int_{J} p(s'|s_{\tau_l},a_{\tau_l}) \mathrm{d}s' $ and $\E\big[ r_{\tau_l}(s_{\tau_l},a_{\tau_l}) \big|\mathcal{G}_{l-1} \big] = r(s_{\tau_l},a_{\tau_l})$ (see Lem.~\ref{L:conditioning_stop_time} in App.~\ref{app:prob.res} for a formal proof).
As a result, we can rewrite $\wt{r}_k$ and $\wt{p}_k$ as follows:
\begin{align*}
   \wt{r}_k(s,a) = \frac{1}{N_k(I(s),a)} \sum_{x \in I(s)} & N_k(x,a) r(x,a) = \frac{1}{N_k(I(s),a)}\sum_{l=1}^{N_k(I(s),a)} \E\big[ r_{\tau_l}(s_{\tau_l},a_{\tau_l}) \big|\mathcal{G}_{l-1} \big] \\ 
   \text{and}~~\int_{\calS} \wt{p}_k(s'|s,a) w^*(s') \mathrm{d}s' &= \frac{1}{N_k(I(s),a)}\sum_{x \in I(s)} N_k(x,a) \int_{\calS} p(s'|x,a)w^*(s')  \mathrm{d}s'\\
   &= \frac{1}{N_k(I(s),a)} \sum_{l=1}^{N_k(I(s),a)} \E\big[ w^*(s_{\tau_l+1}) \big|\mathcal{G}_{l-1} \big] \\
   \text{and similarly}~~\int_{J} \wt{p}_k(s'|s,a)  \mathrm{d}s' &= \frac{1}{N_k(I(s),a)} \sum_{l=1}^{N_k(I(s),a)} \E\big[ \one{s_{\tau_l+1} \in J} \big|\mathcal{G}_{l-1} \big]
\end{align*}
We can also give a more explicit expression for $V_k$:
\begin{align*}
 \Varcond{\one{s_{\tau_l +1} \in J}}{\G_{l-1}} &:= \E\big[ \underbrace{\one{s_{\tau_l+1}\in J}^2}_{=\one{s_{\tau_l+1}\in J}} \big|\mathcal{G}_{l-1} \big] - \E\big[ \one{s_{\tau_l+1}\in J} \big|\mathcal{G}_{l-1} \big]^2\\
 &= \int_{J} p(s'|s_{\tau_l},a_{\tau_l}) \mathrm{d}s' - \left(\int_{J} p(s'|s_{\tau_l},a_{\tau_l}) \mathrm{d}s'\right)^2
\end{align*}
implying:
\begin{align*}
 V_k(J) = \sum_{l=1}^{N_k(I(s),a)}  \underbrace{\left( 1 - \int_{J} p(s'|s_{\tau_l},a_{\tau_l}) \mathrm{d}s' \right)}_{\leq 1} \underbrace{\int_{J} p(s'|s_{\tau_l},a_{\tau_l}) \mathrm{d}s'}_{\geq 0} \leq \sum_{x \in I(s)} N_k(x,a) \int_{J} p(s'|x,a) \mathrm{d}s'
\end{align*}
Using Cauchy-Scwartz inequality:
\begin{align*}
 \sum_{J \in \mathcal{I}} \sqrt{V_k(J)} \leq \sqrt{S \sum_{J \in \mathcal{I}} V_k(J)} \leq \sqrt{S \sum_{x \in I(s)} N_k(x,a)  \sum_{J\in \mathcal{I}}\int_{J} p(s'|x,a) \mathrm{d}s'} = \sqrt{S N_k(I(s),a) }
\end{align*}
To conclude, we take a union bound over all possible $(I(s),a) \in \mathcal{I} \times \A$ and $J \in \mathcal{I}$. Note that we only need to take a union bound over $I(s) \in \mathcal{I}$ (and not $\calS$) because $s \mapsto \wt{p}_k(\cdot|s,a)$ and $s \mapsto \wt{r}_k(s,a)$ are piecewise constant on $\mathcal{I}$ (and similarly for $\wo{p}_k$ and $\wo{r}_k$). With probability at least $1-\frac{\delta}{15 t_k^6}$, for all $(s,a) \in \calS \times \A$ and for all $J \in \mathcal{I}$:
\begin{align}
\left| \wb{r}_k(s,a) - \wt{r}_k(s,a)  \right| \leq  \rmaxbound \sqrt{\frac{\lnn{\frac{90 S^2 A t_k^7}{\delta}}}{N_k(I(s),a)} } &\leq  \rmaxbound \sqrt{\frac{7\lnn{\frac{2 S A t_k}{\delta}}}{N_k(I(s),a)} } \label{eq:hp_bound_reward}\\
\sum_{J\in \mathcal{I}} \left|\int_{J}  \wb{p}_k(s'|s,a) - \wt{p}_k(s'|s,a)\mathrm{d}s' \right| &\leq  2 \sqrt{ \frac{7 S \lnn{\frac{3 S A t_k}{\delta}}}{N_k(I(s),a)}  } + \frac{28 S}{N_k(I(s),a)} \lnn{\frac{3 S At_k}{\delta}} \label{eq:hp_bound_proba}\\
 \left|\int_{\calS} \Big( \wb{p}_k(s'|s,a) - \wt{p}_k(s'|s,a) \Big)w^*(s') \mathrm{d}s' \right| &\leq  c \sqrt{\frac{7\lnn{\frac{2 S A t_k}{\delta}}}{N_k(I(s),a)}} \label{eq:hp_bound_proba_vector}
\end{align}
Since by definition $N_k^+(I(s),a):= \max \{ 1 , N_k(I(s),a)\}$, the above inequalities also hold with $N_k(I(s),a)$ replaced by $N_k^+(I(s),a)$.

\subsubsection{Gathering all the terms}

We first notice that $\int_{\calS} \left( \wh{p}_k(s'|s,a) - p(s'|s,a) \right) h^*(s')\mathrm{d}s' = \int_{\calS} \left( \wh{p}_k(s'|s,a) - p(s'|s,a) \right) w^*(s')\mathrm{d}s'$ since $w^*$ and $h^*$ are equal up to a constant shift and $\int_{\calS}\wh{p}_k(s'|s,a) \mathrm{d}s'= \int_{\calS}{p}(s'|s,a)\mathrm{d}s' = 1$. Gathering equations \eqref{eq:hp_bound_proba_vector}, \eqref{eq:pboundcontinuity2} and \eqref{eq:p.bound.perturbation.cont} we have:
\begin{align*}
 \left| \int_{\calS} \left( \wh{p}_k(s'|s,a) - p(s'|s,a) \right) h^*(s')\mathrm{d}s' \right| \leq c \sqrt{\frac{7\lnn{\frac{2 S A t_k}{\delta}}}{N_k^+(I(s),a)}} + cL S^{-\alpha} + \frac{c}{N_k(I(s),a) + 1}
\end{align*}
Gathering equations \eqref{eq:hp_bound_reward} and \eqref{eq:pboundcontinuity} we have:
\begin{align*}
 \left| \wb{r}_k(s,a) - r(s,a) \right| \leq \rmaxbound \sqrt{\frac{7\lnn{\frac{2 S A t_k}{\delta}}}{N_k^+(I(s),a)} } + \rmaxbound LS^{-\alpha} 
\end{align*}

Let $v$ be a piecewise constant function on $\mathcal{I}$ \st $\SP{v} \leq c$ and define $w(s) := v(s) - \left( \inf\{ v(s)\} + \sup\{ v(s)\} \right)/2$. $w$ is also piecewise constant on $\mathcal{I}$ and for all $J \in \mathcal{I}$, $w(J) \in [-c/2, c/2]$.  Gathering equations \eqref{eq:hp_bound_proba}, \eqref{eq:pboundcontinuity3} and \eqref{eq:p.bound.perturbation.cont2} we have that :
\begin{align*}
 \sum_{J \in \mathcal{I}}\left| \int_{J} \left( \wh{p}_k(s'|s,a) - p(s'|s,a) \right) v(s') \mathrm{d}s' \right| &= \sum_{J \in \mathcal{I}}\left| w(J) \int_{J} \left( \wh{p}_k(s'|s,a) - p(s'|s,a) \right) \mathrm{d}s' \right| \\
 & \leq \frac{c}{2} \sum_{J \in \mathcal{I}}\left| \int_{J} \left( \wh{p}_k(s'|s,a) - p(s'|s,a) \right) \mathrm{d}s' \right|\\
 & \leq c \sqrt{ \frac{7S \lnn{\frac{3 S A t_k}{\delta}}}{N_k^+(I(s),a)}  } + \frac{14 c S}{N_k^+(I(s),a)} \lnn{\frac{3 S At_k}{\delta}}  \\&~~~~+ cL S^{-\alpha}+ \frac{c}{N_k(I(s),a) + 1}
\end{align*}
Properties \emph{(a)} and \emph{(b)} of Lem.~\ref{L:bonus.inequality} follow by definition of the exploration bonuses and application of the triangle inequality.

\subsection{Optimism (Proof of Lem.~\ref{L:optimism})}

Let $\wh{g}^{ag+}_k$ denote the solution of optimisation problem \eqref{eq:opt.superior.spanc} on $\wh{M}^{ag+}_k$. In this section we prove that:
\begin{lemma}\label{L:optimism.cont}
        Consider the MDP $\wh{M}^{ag+}_k$ defined in Sec.~\ref{S:exploration.bonus.continuous}. Then for any $k>0$, with probability at least $1-\frac{\delta}{15 t_k^6}$, $\wh{g}^{ag+}_k \geq g^*$.
\end{lemma}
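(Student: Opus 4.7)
The plan is to chain three inequalities: $\wh g^{ag+}_k = g_c^*(\wh M_k^{ag+}) = g_c^*(\wh M_k^{ag}) = g_c^*(\wh M_k) \geq g^*$, where $\wh M_k$ is the intermediate continuous empirical MDP from Def.~\ref{def:estimated.cont.mdp}. The last equality $g_c^*(\wh M_k^{ag+}) = g_c^*(\wh M_k^{ag})$ is purely structural and is a direct consequence of the augmentation construction (copying each action with zero reward and unchanged transitions), so by \citet[Lem.~20]{fruit2018constrained} the gain is unaffected. The meat of the proof is therefore to establish the two remaining facts.

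\textbf{Optimism on the continuous empirical MDP.} The first step is to verify that, in high probability, $g^*_c(\wh M_k) \geq g^*$. For this I condition on the event of Lem.~\ref{lem:explbonus.inequality.cont} (probability $\geq 1-\delta/(15 t_k^6)$), which gives $\wh L h^*(s) \geq L h^*(s)$ for every $s \in \calS$, where $\wh L$ is the optimal Bellman operator of $\wh M_k$. Since $\SP{h^*} \leq c$ by assumption, $\SP{L h^*} = \SP{h^* + g^* e} \leq c$, so $\opT{h^*} = L h^*$. By the monotonicity of the span-truncation $\Gamma_c$ \citep[Lem.~15]{fruit2018constrained} applied pointwise, $\wh T_c h^* = \proj{\wh L h^*} \geq \proj{L h^*} = \opT{h^*} = h^* + g^* e$. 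Then Prop.~\ref{prop:bound.gain} (which holds for $\wh M_k$ since it satisfies the three assumptions of Prop.~\ref{prop:recap_scopt} exactly as in the discrete case --- the attractive state $\wb s$ and the augmentation produce contraction, unichain, and global feasibility even in the continuous setting) yields $g^*_c(\wh M_k) \geq g^*$.

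\textbf{Reducing the continuous to the aggregated empirical MDP.} The second step is to show $g^*_c(\wh M_k) = g^*_c(\wh M_k^{ag})$. The key observation, already sketched in Lem.~\ref{L:optimism.cont}, is that starting from $v_0 \equiv 0$ (piecewise constant on $\mathcal{I}$) the iterates $v_n := (\wh T_c)^n v_0$ remain piecewise constant on $\mathcal{I}$ and agree with the aggregated iterates $v_n^{ag} := (\wh T_c^{ag})^n v_0$. I will prove this by induction. Given the inductive hypothesis $v_n(s) = v_n^{ag}(I(s))$, using Eq.~\ref{eq:integral.p.perturbed} and the fact that $\wh r_k(s,a)$ depends on $s$ only through $I(s)$,
\begin{equation*}
\wh L v_n(s) = \max_{a\in\A}\Bigl\{\wh r_k^{ag}(I(s),a) + \sum_{J \in \mathcal{I}} v_n^{ag}(J)\,\wh p_k^{ag}(J|I(s),a)\Bigr\} = \wh L^{ag} v_n^{ag}(I(s)),
\end{equation*}
so $\wh L v_n$ is piecewise constant on $\mathcal{I}$. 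The span-truncation $\Gamma_c$ acts only on the values $\wh L v_n(s)$ and a global minimum/threshold, both of which are determined by the aggregated values, so $\wh T_c v_n$ is also piecewise constant and coincides with $\wh T_c^{ag} v_n^{ag}$ on each interval. Applying Prop.~\ref{prop:recap_scopt}(b) to both iterative schemes (the continuous and aggregated MDPs both satisfy the three requirements of Sec.~\ref{sec:algo}), $(\wh T_c)^{n+1} v_0 - (\wh T_c)^n v_0 \to g^*_c(\wh M_k)\, e$ and $(\wh T_c^{ag})^{n+1} v_0 - (\wh T_c^{ag})^n v_0 \to g^*_c(\wh M_k^{ag})\, e$; since the two sequences are equal on intervals, the limits coincide.

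\textbf{Main obstacle.} The delicate point is making the inductive step on the truncation rigorous: one must check that $\Gamma_c$ applied to a piecewise-constant-on-$\mathcal{I}$ function yields another piecewise-constant-on-$\mathcal{I}$ function whose aggregated restriction agrees with $\Gamma_c$ applied to the aggregated values in $\wh M_k^{ag}$. This requires verifying that the threshold set $\overline{\calS}(c,\wh L v_n) = \{s : \wh L v_n(s) \leq \min_{s'} \wh L v_n(s') + c\}$ is a union of intervals in $\mathcal{I}$ (which it is, since $\wh L v_n$ is piecewise constant) and that the constant value $c + \min_{s'} \wh L v_n(s')$ assigned outside this set coincides with its discrete counterpart on $\mathcal{I}$. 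Once this is carefully done, the three facts combine into the claimed inequality with the stated probability.
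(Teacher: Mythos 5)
Your proposal is correct and follows essentially the same route as the paper's own proof: optimism of the intermediate continuous MDP $\wh{M}_k$ via Lem.~\ref{lem:explbonus.inequality.cont} and the dominance argument of Prop.~\ref{prop:bound.gain}, followed by the induction showing that the $\wh{T}_c$- and $\wh{T}_c^{ag}$-iterates of a piecewise-constant initial vector coincide on $\mathcal{I}$ (via Eq.~\ref{eq:integral.p.perturbed}) and hence have the same limiting gain, with the augmentation handled by \citet[Lem.~20]{fruit2018constrained}. Your explicit check that the span truncation preserves piecewise constancy is a point the paper glosses over, but it does not change the argument.
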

$\wh{M}^{ag+}_k$ only has a finite number of states while the true MDP $M^*$ has an uncountable state-space. Thus, it is difficult to compare directly $\wh{g}^{ag+}_k$ with $g^*$. To overcome this difficulty, we first compare $g^*$ with the gain of $\wh{M}_k$ and then compare the latter to $\wh{g}^{ag+}_k$.

\paragraph{1. Optimism of $\wh{M}_k$.}
Let $\wh{g}_k$ denote the solution of optimisation problem \eqref{eq:opt.superior.spanc} on $\wh{M}_k$. 
To prove that $\wh{g}_k \geq g^*$ we can use Prop.~\ref{prop:bound.gain} which --as explained in the main body of the paper-- only requires to show that $\wh{L}h^* \geq Lh^*$ where $\wh{L}$ is the optimal Bellman operator of $\wh{M}_k$. By applying property \emph{(a)} of Lem.~\ref{L:bonus.inequality.continuous}, we have that with probability at least $1-\frac{\delta}{15 t_k^6}$:
\begin{align*}
    \forall s\in \calS,~\wh{L} h^*(s) &:= \max_{a \in \A} \left\{ \wb{r}_k(s,a) + b_k(s,a) + \int_{\calS} \wh{p}_k(s'|s,a) h^*(s') \mathrm{d}s' \right\}\\
                 &\geq \max_{a\in\A} \left\{ r(s,a) + \int_{\calS} p(s'|s,a) h^*(s') \mathrm{d}s'\right\} = Lh^*(s) 
\end{align*}
Therefore, $\wh{g}_k \geq g^*$ with probability at least $1-\frac{\delta}{15 t_k^6}$.

\paragraph{2. Relationship between $\wh{M}_k$ and $\wh{M}^{ag+}_k$.} 
We now show that $\wh{g}^{ag+}_k = \wh{g}_k$.
Consider a piecewise-constant function $v_0$ on $\mathcal{I}$ (\eg $v_0 = 0$) and a vector $u_0 \in \Re^S$ satisfying $u_0(J) = v_0(J)$ for all $J \in \mathcal{I}$. We define the sequences $v_{n+1} := \wh{T}_c v_n$ and $u_{n+1} := \wh{T}_c^{ag+} u_n$. We show by induction that $u_n(J) = v_n(J)$ for all $n \geq 0$ and for all $J \in \mathcal{I}$. By definition it is true for $n=0$ and for all $n \geq 0$:
\begin{align}\label{eq:piecewise_vector_proba}
\begin{split}
   \int_{s \in \calS} \wh{p}_k(s'|s,a) v_n(s') \mathrm{d}s' 
       &= \sum_{J \in \mathcal{I}} \int_{J} \wh{p}_k(s'|s,a) v_n(s') \mathrm{d}s'\\
       &= \sum_{J \in \mathcal{I}} v_n(J) \int_{J} \wh{p}_k(s'|s,a)\mathrm{d}s' = \sum_{J \in \mathcal{I}} u_{n}(J)\; \wh{p}^{ag}_k(J| I(s),a)
\end{split}
\end{align}
where the last equality follows from \eqref{eq:integral.p.perturbed} and the induction hypothesis.
In addition $\wh{r}_k(s,a)$ is also piecewise-constant on $\mathcal{I}$ and $\wh{r}_k(s,a) = \wh{r}^{ag}_k(I(s),a)$ for all $s \in \calS$. Therefore, we have that $\wh{L}^{ag} u_n(I(s)) = \wh{L} v_n(s)$ for any $s \in \calS$. Finally, the augmentation is not impacting the optimal Bellman operator (\ie for any $v$, $\wh{L}^{ag+} v = \wh{L}^{ag}v$) so $\wh{L}^{ag+} u_n(I(s)) = \wh{L} v_n(s)$ and consequently $\wh{T}^{ag+}_c u_n(I(s)) = \wh{T}_c v_n(s)$ for any $s \in \calS$. This shows that $v_{n+1}(J) =u_{n+1}(J)$ for all $J \in \mathcal{I}$ which concludes the proof by induction.

As shown by \citet[Theorem 10]{fruit2018constrained}, $ \lim_{n \to +\infty}v_{n+1}(J) - v_n(J) = \wh{g}^{ag+}_k$ and $ \lim_{n \to +\infty}u_{n+1}(J) - u_n(J) = \wh{g}_k$ so that $\wh{g}^{ag+}_k = \wh{g}_k \geq g^*$ with probability at least $1-\frac{\delta}{15 t_k^6}$.
%

\subsection{Regret Proof of \scalcont (Proof of Thm.~\ref{thm:regret.scalcont})} \label{app:proof.regret.continuous} 
In this section, we provide a complete proof of the regret bound for \scalcont.
Defining $\Delta_k=\sum_{s \in \mathcal{S}} \nu_k(s) \left(g^* - \sum_{a \in \A_{s_t}} r(s,a)\widetilde{\pi}_{k}(s,a)\right)$ and using the arguments in~\citep{Jaksch10,fruit2018constrained}, it holds with probability at least $1-\frac{\delta}{20T^{5/4}}$ that:
 $\Delta(\bonusscal,T) \leq \sum_{k=1}^m \Delta_k + \rmaxbound\sqrt{\frac{5}{2}T\ln \left(\frac{11T}{\delta}\right)}$.
Note that $\nu_k(s)$ is the total number of observation of state $s$ in episode $k$ and is well-defined for $s$ lying in $[0,1]$.
Finally, recall that for any subset $I\subseteq \mathcal{S}$, the sum $\sum_{s \in I} u_{s} $ is also well-defined as long as the collection $\left(u_{s} \right)_{s \in I}$ contains only a finite number of non-zero elements.

\vspace{.2cm}
\noindent \tikz{
        \node[draw=red!90!black, color=red!90!black, text width=0.985\textwidth]{\faExclamationTriangle~~~ In this section we will abuse of notation and write $p(\cdot|s,a)^\transp v = \int_\calS p(s'|s,a) v(s') \mathrm{d}s'$ for any probability density function $p$ defined on $\calS = [0,1]$.};
}

\subsubsection{Optimism and Bellman Equation}
We consider the case where $\Delta_k \cdot \one{g^*_c(\wh{M}^{ag+}_k) \geq g^*}$, the complementary case is (cumulatively) bounded by $\rmaxbound \sqrt{T}$ as in~\citep{Jaksch10,fruit2018constrained}. 
Denote by $g_k:= 1/2(\max \{\wh{T}^+_c v_{k} - v_k\} + \min \{\wh{T}^+_c v_{k} - v_k\})$ where $v_k$ is the value function returned by \scopt{($0,\overline{s},\gamma_k,\varepsilon_k$)}. 
Remember that $v_k \in \mathbb{R}^S$ is a discrete vectors obtained by applying \scopt on $\wh{M}_k^{ag+}$.
The stopping condition of \scopt is such that~\citep[see][]{fruit2018constrained}
\begin{align*}
        g_k \geq g_c^*(\wh{M}_k^{ag+}) - \underbrace{\varepsilon_k}_{=\rmaxbound /\sqrt{t_k}} \overbrace{\geq}^{\text{Lem.~\ref{L:optimism}}} g^* - \frac{\rmaxbound}{\sqrt{t_k}}
\end{align*}
implying:
\begin{align*}
        \Delta_k \cdot \one{g^*_c(\wh{M}_k^{ag+}) \geq g^*} 
\leq \rmaxbound \sum_{s \in \mathcal{S}} \frac{ \nu_k(s)}{\sqrt{t_k}} 
+ \sum_{s \in \mathcal{S}} \nu_k(s) \Bigg(
\underbrace{g_k - \sum_{a \in\mathcal{A}_{s}} r(s,a)\widetilde{\pi}_{k}(s,a)}_{:= \Delta'_k(s)}
\Bigg)
\end{align*}
Note that we can associated a continuos piecewise-constant function to the discrete vector $v_k$: $u_k(s) = v_k(I(s)),~\forall s\in \calS$.
A consequence of Lem.~\ref{L:bonus.inequality.continuous}\emph{(b)} applied to vector $u_k$ is that $-r(s,a) \leq d_k(s,a) -\wb{r}_k(s,a) + \left( p(\cdot|s,a) -  \wh{p}_k(\cdot|s,a) \right)^\transp u_k$. Note that we cannot use the tighter version $\emph{(a)}$ since it does not hold for any function $u_k$. Moreover, by definition $\wb{r}_k(s,a) = \wb{r}_k^{ag}(I(s),a) = \wh{r}_k(s,a) - b_k(s,a)$. Therefore:
\begin{equation}
        \label{E:delta_prime}
        \begin{aligned}
        \Delta'_k(s) &\leq 
        g_k -  \sum_{a \in \A_s} {\pi}_k(s,a)\bigg( \underbrace{\wh{r}_k(s,a)}_{:= \wh{r}^{ag}_k(I(s),a)}  + \wh{p}_k(\cdot|s,a)^\transp u_k \bigg)\\
        &\quad{} + \sum_{a \in \A_s} {\pi}_k(s,a) \bigg( \underbrace{b_k(s,a)}_{:= b_k(I(s),a)} + d_k(s,a) + p(\cdot|s,a) ^\transp u_k \bigg)
        \end{aligned}
\end{equation}
A direct consequence of the stopping condition used by \scopt (see Thm. 18 in~\citep{fruit2018constrained}) is that: $\forall J \in \mathcal{I}$,
\begin{equation}\label{E:gerror.app2}
        \bigg| g_k - \sum_{a \in \A} \sum_{i \in \{1,2\}} \wh{r}_k^{ag+}(J, a_i) {\pi}_k(J,a_i) + v_k(J)
        - \sum_{a \in \A} \sum_{i \in \{1,2\}} {\pi}_k(J,a_i)\wh{p}_k^{ag+}(\cdot|J,a_i)^\transp v_k \bigg| \leq \frac{\rmaxbound }{\sqrt{t_k}}
\end{equation}
Recall that by definition: ${\pi}_k(J,a) = {\pi}_k(J,a_1) + {\pi}_k(J,a_2)$, $\wh{r}_k^{ag+}(J, a_i) \leq \wh{r}_k^{ag}(J, a)$ (since $\wh{r}_k^{ag+}(J, a_1) = \wh{r}_k^{ag}(J, a)$ and $\wh{r}_k^{ag+}(J, a_2) = 0$) and $\wh{p}_k^{ag}(\cdot|J,a) = \wh{p}_k^{ag+}(\cdot|J,a_i) $. We can thus write:
\begin{equation}\label{eq:mdp.augmentation.props}
\begin{aligned}
        &\sum_{a \in \A} \wh{r}_k^{ag}(J,a) {\pi}_k(J,a) 
        = \sum_{a \in \A} \sum_{i \in \{1,2\}} \wh{r}_k^{ag}(J, a) {\pi}_k(J,a_i)
        \geq\sum_{a \in \A} \sum_{i \in \{1,2\}} \wh{r}_k^{ag+}(J, a_i) {\pi}_k(J,a_i)\\
        & \text{and} ~ \sum_{a \in \A} {\pi}_k(s,a)  \wh{p}_k^{ag}(\cdot|J,a) 
        = \sum_{a \in \A} \sum_{i \in \{1,2\}} {\pi}_k(J,a_i) \wh{p}_k^{ag+}(\cdot|J,a_i) 
\end{aligned}
\end{equation}
Plugging this last two equations into \eqref{E:gerror.app2} and
%
using Eq.~\ref{eq:piecewise_vector_proba} (\ie $\wh{p}_k(\cdot|s,a)^\transp u_k = \wh{p}^{ag}_k(\cdot|s,a)^\transp v_k$), the fact $u_k(s) = v_k(I(s))$, $\wh{r}_k(s,a) = \wh{r}^{ag}_k(I(s),a)$, and ${\pi}_k(s,a) = {\pi}_k(I(s),a)$, we obtain:
\begin{equation}\label{E:gerror.app}
        \begin{aligned}
                \forall s \in \calS, ~~~ g_k - \sum_{a \in \A} {\pi}_k(s,a) \bigg( \wh{r}_k(s,a)  + \wh{p}_k(\cdot|s,a)^\transp u_k\bigg) \leq - u_k(s) + \frac{\rmaxbound }{\sqrt{t_k}}
\end{aligned}
\end{equation}
Combining \eqref{E:gerror.app} with \eqref{E:delta_prime} we have:
\begin{align*}
        \Delta'_k(s) \leq \sum_{a\in\A_s} {\pi}_k(s,a) \bigg( d_k(s,a) + \underbrace{b_k(s,a)}_{\leq d_k(s,a)} + p(\cdot|s,a)^\transp u_k \bigg) - u_k(s) + \frac{\rmaxbound }{\sqrt{t_k}}\\
\end{align*}
Note that $d_k(s,a) \geq b_k(s,a)$ for any $(s,a) \in \calS \times \A$ since the term $\phi_{p,k}^{Ia}$ (see Eq.~\ref{eq:bernstein.ci.cont}) contains a $\sqrt{S}$ dependence that is not present in $\beta_{p,k}^{Ia}$.
Since the dominant term is given by $d_k(s,a)$, we will consider the following loser upper-bound $d_k(s,a) + b_k(s,a) \leq 2 d_k(s,a)$ in the remaining of the proof.
We can now state that
\begin{equation}
        \label{eq:deltak.bound}
\begin{aligned}
        \Delta_k &\leq  \underbrace{\sum_s \nu_k(s) \left( \sum_a \left( {\pi}_k(s,a) p(\cdot|s,a)^\transp w_k \right) - w_k(s) \right)}_{\xi_k}\\
                 &\quad{} + 2 \sum_{s,a} \nu_k(s) {\pi}_k(s,a)  d_k(s,a) + 2\rmaxbound \sum_{s \in \mathcal{S}} \frac{ \nu_k(s)}{\sqrt{t_k}}
\end{aligned}
\end{equation}
where $w_k = u_k - (\inf_s\{u_k(s)\} + \sup_s \{u_k(s)\})/2$ is obtained by ``recentering'' $u_k$ around $0$ so that $\|w_k\|_\infty = \SP{w_k}/2 \leq c/2$~\citep[see][App. F4]{fruit2018constrained}.
Then, similarly to what is done in~\citep[][Sec. 4.3.2]{Jaksch10} and~\citep[][App. F.7, pg. 32]{fruit2018constrained}, we have
\begin{align*}
        \xi_k 
        &= \sum_{t=t_k}^{t_{k+1}-1} 
        \underbrace{\sum_a \int p(s'|s_t,a_t) {\pi}_k(s_t,a_t) w_k(s') \mathrm{d}s'  - w_k(s_{t+1})}_{:=X_t} 
        +  \sum_{t=t_k}^{t_{k+1}-1} w_k(s_{t+1}) - w_k(s_t)\\
        &= \sum_{t=t_k}^{t_{k+1}-1} X_t
        +  \underbrace{w_k(s_{t_k+1}) - w_k(s_{t_k})}_{\leq \SP{w_k} \leq c}
\end{align*}
Given the filtration $\mathcal{F}_t = \sigma\left(s_1,a_1,r_1,\dots, s_{t+1} \right)$, $X_t$ is an MDS since $|X_t| \leq c$ and $\mathbb{E}[X_t|\mathcal{F}_{t-1}]=0$ since ${\pi}_{k_t}$ is $\mathcal{F}_{t-1}$-measurable.
By using Azuma inequality we have that with probability at least $1 - \frac{\delta}{20 T^{5/4}}$
\begin{equation}\label{eq:pminusone.inequality}
        \sum_{k=1}^m \xi_k \leq c \sqrt{\frac{5}{2}T\ln \left(\frac{11T}{\delta}\right)} +   m c
\end{equation}
with $m \leq SA \log_2\left(\frac{8T}{SA}\right)$ when $T \geq SA$ (see App. C.2 in~\citep{Jaksch10}).

\subsubsection{Bounding the exploration bonus}
Using the same argument in App. F.6 in~\citep{fruit2018constrained} and by noticing that $d_k(s,a) \leq 2c + \rmaxbound \leq 2\max\{c, \rmaxbound\}$, we obtain with probability at least $1-\frac{\delta}{20T^{5/4}}$:
\begin{align}
        \sum_{k=1}^{m}\sum_{s,a} \nu_k(s)&{\pi}_k(s,a) d_k(s,a)
        \leq \sum_{k=1}^{m}\sum_{s,a} \nu_k(s,a) d_{k}(s,a)
        + 2\max\{c,\rmaxbound\}\sqrt{\frac{5}{2}T\ln \left(\frac{11T}{\delta}\right)}
        \label{eq:bonus.mds}
\end{align}

We now gather inequalities~\eqref{eq:bonus.mds},~\eqref{eq:pminusone.inequality} and a result in~\citep[][App. F.7, pg. 33]{fruit2018constrained} into inequality~\eqref{eq:deltak.bound} summed over all the episodes $k$ which yields (after taking a union bound) that with probability at least $1 - \frac{2\delta}{20 T^{5/4}}\geq 1 - \frac{3\delta}{20 T^{5/4}}$ (for $T\geq SA$):
\begin{align}\label{eq:sum.deltak}
 \begin{split}
         \sum_{k=1}^m \Delta_k \;\mathbbm{1}\{g^*_c(\wh{M}_k) \geq g^*\} &\leq
         3\max\{c, \rmaxbound\} \sqrt{\frac{5}{2}T\ln \left(\frac{11T}{\delta}\right)}
         +  c SA\log_2\left(\frac{8T}{SA}\right)\\
 &\quad{}
 + 2\rmaxbound \left(\sqrt{2} +1 \right)\sqrt{SAT}
 + 2
        \sum_{k=1}^{m}\sum_{s,a} \nu_k(s,a) d_{k}(s,a)
 \end{split}
\end{align}
Let $\phi_{p,k}^{sa}$ as defined in Eq.~\ref{eq:bernstein.ci.cont}, then 
\begin{equation}\label{eq:bonus.bound.2}
\begin{aligned}
        \sum_{k=1}^{m}\sum_{s,a} \nu_k(s,a) d_{k}(s,a) 
        &\leq
        \underbrace{\sum_{k=1}^{m}\sum_{s,a} \nu_k(s,a) \beta_{r,k}^{sa}}_{\text{see p. 33~\citep{fruit2018constrained}}}
        +c \underbrace{\sum_{k=1}^m \sum_{s,a} \nu_k(s,a) \phi_{p,k}^{sa}}_{\text{see p. 33~\citep{fruit2018constrained}}}\\
        &\quad{}+ 2c \sum_{k=1}^m \sum_{s,a}\frac{\nu_k(s,a)}{N_k(s,a)+1} 
        + \underbrace{(c+\rmaxbound)}_{\leq 2\max\{c,\rmaxbound\}} L S^{-\alpha} T
\end{aligned}
\end{equation}
We recall that 
\begin{align*}
        \sum_{k=1}^{m}\sum_{s,a} \nu_k(s,a) \beta_{r,k}^{sa} = \wt{O}(\rmaxbound \sqrt{SAT}) ~~ \text{ and } ~~ \sum_{k=1}^m \sum_{s,a} \nu_k(s,a) \phi_{p,k}^{sa} =\wt{O}(c S\sqrt{A T} +cS^2A).
\end{align*}
Similarly to what done in~\citep[][Eq. 58-60]{fruit2018constrained}, we can write
\begin{equation}\label{eq:nu.over.n}
        \begin{split}
        \sum_{k=1}^m &\sum_{s,a}\frac{\nu_k(s,a)}{N_k(s,a)+1}
\leq 2\sum_{s,a} \sum_{t=1}^{T} \frac{\mathbbm{1}_{\lbrace (s_t, a_t) =(s,a)\rbrace}}{N_k(s_t,a_t)+1}
= 2 \sum_{s,a} \sum_{j=1}^{N_{T+1}(s,a)} \frac{1}{j+1} \leq 2 SA \ln(T)
\end{split}
\end{equation}

\subsubsection{Completing the proof}\label{sec:regret.summingup}
Summing up all the contributions and taking a union bound  over all possible values of $T$ and use the fact that $\sum_{T= 2}^{+\infty}\frac{\delta}{4T^{5/4}}<\delta$, we write that there exists a numerical constant $\chi$ such that at least with probability $1-\delta$ our algorithm \scalcont has a regret bounded by
\begin{align*}
        \begin{split}
                \Delta(\scalcont,T) \leq &\chi \;\left( \max{\left\lbrace \rmaxbound,c \right\rbrace} \left( S\sqrt{A T \ln\left(\frac{T}{\delta}\right)}
                        + S^2A\ln^2\left(\frac{T}{\delta} \right) 
                                          + L S^{-\alpha}T \right)
                \right)
         \end{split}
\end{align*}

We now set $S = \left(\alpha L \sqrt{\frac{T}{A}}\right)^{\sfrac{1}{(\alpha + 1)}}$ so that
\begin{align*}
    \Delta(\scalcont, T) 
    &= \wt{O} \bigg( \max\{\rmaxbound,c\} \bigg( \underbrace{\max \left\{ \alpha^{\sfrac{1}{(\alpha+1)}}, \alpha^{-\sfrac{\alpha}{(1+\alpha)}} \right \}}_{\leq 2,~\forall \alpha \geq 0} \times\\
    & \qquad \quad \times L^{\sfrac{1}{(\alpha+1)}} A^{\sfrac{\alpha}{(2\alpha+2)}} T^{\sfrac{(\alpha+2)}{(2\alpha+2)}} + \underbrace{\alpha^{\sfrac{2}{(1+\alpha)}}}_{\leq 2,~\forall \alpha >0} L^{\sfrac{2}{(1+\alpha)}} A^{\sfrac{\alpha}{(1+\alpha)}} T^{\sfrac{1}{(1+\alpha)}} \bigg ) \bigg)\\
\end{align*}
Finally, when $T \geq L^{\sfrac{2}{\alpha}}A$, the regret of \scalcont is bounded with probability at least $1-\delta$ by
\[
        \Delta(\scalcont,T) = \wt{O}\left(
            \max\{\rmaxbound,c\}L^{\sfrac{1}{(\alpha+1)}} A^{\sfrac{\alpha}{(2\alpha+2)}} T^{\sfrac{(\alpha+2)}{(2\alpha+2)}}
        \right).
\]

\section{Finite MDPs: the analysis of \bonusscal}\label{app:finite}
In this section we analyse \bonusscal by leveraging the results provided for the continuous state case.
We start presenting the derivation of the bonus $b_k$ and an analogous of Lem.~\ref{L:bonus.inequality.continuous} which implies \bonusscal is optimistic at each episode $k$.
Finally, we provide the proof of the regret bound stated  in Thm.~\ref{thm:regret.scalcont}.

\subsection{High probability bound using the exploration bonus (proof of Lem.~\ref{lem:explbonus.inequality})}\label{app:proba_bound_bonus.discrete}
To begin with, we introduce two variants of the exploration bonus that will be used for the regret proof:
\begin{equation}\label{eq:bonus.proof.finite}
\begin{aligned}
        b_k(s,a) &:= c \cdot \min \left\{ \beta_{p,k}^{sa} + \frac{1}{N_k(s,a) + 1}; 2\right\} + \min \left\{ \beta_{r,k}^{sa}; \rmaxbound\right\}\\
        d_k(s,a) &:= c \cdot \min \left\{ \phi_{p,k}^{sa} + \frac{1}{N_k(s,a) + 1}; 2\right\} + \min \left\{ \beta_{r,k}^{sa}; \rmaxbound\right\}
\end{aligned}
\end{equation}
where $\beta_{p,k}^{sa} = \beta_k^{sa}$ (see Eq.~\ref{eq:exploration.bonus}), $\beta_{r,k}^{sa} = \rmaxbound \beta_k^{sa}$ and
\begin{align*}
        \phi_{p,k}^{sa} := \sqrt{\frac{7 (\Gamma -1)\lnn{\frac{3 SA t_k}{\delta}}}{\max\{1,N_k(s,a)\}}} + \frac{14S}{\max\{1,N_k(s,a)\}} \lnn{\frac{3SAt_k}{\delta}}
\end{align*}
Notice that compared to the bonus $b_k$, $d_k$ explicitly depends on the number of states (linearly in $S$) and next states (sublinearly in $\Gamma$).
As a consequence, $d_k(s,a) \geq b_k(s,a)$ for any $(s,a) \in \calS \times \A$.
In the continuous case we might consider the number of next states in the (true) aggregated MDP. However, this quantity is not very informative so we have decided (for sake of clarity) to upper-bound it by the number of intervals.

\begin{lemma}\label{L:bonus.inequality}
        Let $(g^*,h^*)$ be a solution of the optimality equation $Lh^* = h^* + g^*$ such that $\SP{h^*}  \leq c$. For all $T \geq 1$ and $k \geq 1$, with probability at least $1-\frac{\delta}{15 t_k^6}$, for any $(s,a) \in \calS \times \A$ and for any $v \in \mathbb{R}^S$ \st $\SP{v} \leq c$ we have:
        \begin{align*}
                &(a) ~~ b_{k}(s,a) \geq \left| \wb{r}_k(s,a) - r(s,a) + \left( \wh{p}_k(\cdot|s,a) - p(\cdot|s,a) \right)^\transp h^* \right|\\
                &(b) ~~ d_k(s,a) \geq \left| \wb{r}_k(s,a) - r(s,a) + \left( \wh{p}_k(\cdot|s,a) - p(\cdot|s,a) \right)^\transp v \right|
        \end{align*}
        where $b_k$ and $d_k$ are defined as in Eq.~\ref{eq:bonus.proof.finite}.
\end{lemma}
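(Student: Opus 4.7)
The statement is the discrete-state analogue of Lemma~\ref{L:bonus.inequality.continuous}, so the plan is to follow the same outline but drop the intermediate MDP $\wt{M}_k$ (no aggregation, hence no smoothness term $LS^{-\alpha}$) and, crucially, replace the martingale/optional-skipping machinery of Appendix~\ref{app:bound_diff_tilde_bar} by a direct i.i.d.\ concentration plus a union bound over the finite set $\calS\times\A$. Concretely, I would write $\wh{p}_k(\cdot|s,a)-p(\cdot|s,a) = (\wh{p}_k-\wb{p}_k) + (\wb{p}_k-p)$ and $\wb{r}_k-r$ directly, and bound each piece separately for a fixed $(s,a)$ before union-bounding.

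For the deterministic bias terms, the calculation of Eq.~\eqref{eq:p.bound.perturbation.cont} specializes to $\left|(\wh{p}_k(\cdot|s,a)-\wb{p}_k(\cdot|s,a))^\transp u\right|\le c/(N_k(s,a)+1)$ for any $u$ with $\SP{u}\le c$ (replace $w^*$ by $u$ recentered, and use $\sum_{s'}\mathbbm{1}(s'=\wb{s})=1$ in place of $S\int_{I(\wb{s})}1\,\mathrm{d}s'$). Hoeffding's inequality applied to the i.i.d.\ rewards collected at $(s,a)$ gives $|\wb{r}_k(s,a)-r(s,a)|\le \rmaxbound\beta_k^{sa}$ with probability at least $1-\delta'$. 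For part (a), I would apply Hoeffding-Azuma to the scalar martingale $\big(\wb{p}_k(\cdot|s,a)-p(\cdot|s,a)\big)^\transp h^*=\big(\wb{p}_k(\cdot|s,a)-p(\cdot|s,a)\big)^\transp w^*$ with $\|w^*\|_\infty\le c/2$, yielding $\big|(\wb{p}_k-p)^\transp h^*\big|\le c\,\beta_k^{sa}$ with probability $1-\delta'$. Combining the three pieces with the triangle inequality and matching $\delta'$ so that after the union bound over $(s,a)\in\calS\times\A$ and over episodes (via $t_k^{-6}$) the failure probability is at most $\delta/(15t_k^6)$, gives exactly $b_k(s,a)$ as defined in Eq.~\eqref{eq:bonus.proof.finite}, proving (a).

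For part (b), the vector $v$ is arbitrary (subject only to $\SP{v}\le c$), so the scalar Azuma trick above cannot be used directly. Instead I would go through an $\ell_1$-bound on $\wb{p}_k(\cdot|s,a)-p(\cdot|s,a)$ and use $\big|(\wb{p}_k-p)^\transp v\big|=\big|(\wb{p}_k-p)^\transp w\big|\le \tfrac{c}{2}\|\wb{p}_k-p\|_1$ with $w=v-(\min v+\max v)/2$. The $\ell_1$ deviation over the $\Gamma$ effectively reachable next states is controlled by applying an empirical-Bernstein / Weissman-type inequality coordinatewise together with Cauchy–Schwarz (exactly as in the bound leading to Eq.~\eqref{eq:bernstein_transition_proba} in the continuous case, but with $S$ replaced by $\Gamma$ because only $\Gamma$ of the variances $p(s'|s,a)(1-p(s'|s,a))$ can be nonzero), producing $\|\wb{p}_k-p\|_1 \le 2\phi_{p,k}^{sa}$ with the $\sqrt{\Gamma-1}$ dependence built into $\phi_{p,k}^{sa}$. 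Adding back the $1/(N_k(s,a)+1)$ perturbation term, the reward bound, and taking the same union bound, yields the definition of $d_k(s,a)$ and proves (b).

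The main obstacle is bookkeeping rather than a real technical difficulty: one must carefully choose the per-event failure probabilities (of the reward Hoeffding, the Azuma bound on $(\wb{p}_k-p)^\transp h^*$, and the $\Gamma$ applications of empirical Bernstein) so that after summing and taking a union bound over $(s,a)$ and over $k\ge1$ via the $t_k^{-6}$ trick, the resulting constants match those in $\beta_k^{sa}$ and $\phi_{p,k}^{sa}$. Another point to watch is that the ``$\min\{\,\cdot\,,2\}$'' and ``$\min\{\,\cdot\,,\rmaxbound\}$'' truncations in Eq.~\eqref{eq:bonus.proof.finite} are trivially valid since $|h^*(s)-h^*(s')|\le c$ and $|\wb{r}_k-r|\le \rmaxbound$ hold deterministically; this ensures the bounds remain meaningful even when $N_k(s,a)$ is small.
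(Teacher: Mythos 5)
Your proposal is correct and follows essentially the same route as the paper: the same two-term decomposition $(\wh{p}_k-\wb{p}_k)+(\wb{p}_k-p)$ with the deterministic $c/(N_k(s,a)+1)$ perturbation bound, scalar Hoeffding/Azuma for the reward and for $(\wb{p}_k-p)^\transp h^*$, and a coordinatewise Freedman bound on $\|\wb{p}_k-p\|_1$ combined with Cauchy--Schwarz over the at most $\Gamma$ nonzero variances $p(s'|s,a)(1-p(s'|s,a))$ for part (b). The only (immaterial) difference is that you invoke i.i.d.\ concentration with a union bound over the finite $\calS\times\A$ where the paper reuses the MDS/optional-skipping argument from the continuous case --- an interchangeability the paper itself explicitly notes for the discrete setting.
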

\begin{proof}
        We consider the discrete case as a special sub-case of the continuous one considered in Lem.~\ref{L:bonus.inequality.continuous}.
        As explained in Sec.~\ref{app:bound_diff_tilde_bar}, for the discrete case we can even use an independence argument based on ``stack of samples'' idea as done for bandits~\citep[][Sec. 4.4]{bandittorcsaba}. However, for sake of clarity we use the same MDS argument exploited in the continuous case.
        The main difference is that in the discrete case we do not need state aggregation and thus we replace every interval with a singleton function, \ie $I(s) = s,~\forall s \in \calS$.
        Define $w := h^* - (\min\{h^*\} + \max\{h^*\})/2$ such that $w \in [-c/2, c/2]$.
        We decompose $\wh{p}_k -p$ into $(\wh{p}_k - \wb{p}_k) + (\wb{p}_k - p)$. As done in Eq.~\ref{eq:p.bound.perturbation.cont} (Sec.~\ref{app:bound_diff_hat_bar}), we can write that
        \begin{equation}\label{eq:bound.perturbation.p}
        \begin{aligned}
                \big|  (\wh{p}_k(\cdot|s,a) - \wb{p}_k(\cdot|s,a) )^\transp w \big| 
                &\leq \left|\frac{N_k(s,a)}{N_k(s,a)+1} -1 \right| \underbrace{\|\wb{p}_k(\cdot|s,a)\|_1}_{=1} \underbrace{\|w\|_{\infty}}_{\leq c/2} +\frac{|w(\wb{s})|}{N_k(s,a)+1} \\
                &\leq \frac{c}{2} \left( 1 - \frac{N_k(s,a)}{N_k(s,a)+1} + \frac{1}{N_k(s,a)+1} \right)
                = \frac{c}{N_k(s,a) + 1} 
        \end{aligned}
        \end{equation}
        In order to bound the term depending on $(\wb{p}_k - p)$ we use the same MDS argument as in Sec.~\ref{app:bound_diff_tilde_bar}.
        You can consider $p$ equivalent to $\wt{p}$ defined in the continuous case since:
        \[
                \wt{r}_k(s,a) = \frac{1}{N_k(\underbrace{I(s)}_{:=s},a)} \sum_{x \in \underbrace{I(s)}_{:=s}} N_k(x,a) r(x,a) = r(s,a).
        \]
        Similarly, we can prove that $\wt{p}_k(s'|s,a) = p(s'|s,a)$. Then, we consider the same adapted sequences, stopping times and predictable processes except from the fact that intervals are replaced by singletons (\ie discrete states).
        As a consequence, (an analogous of) Lem.~\ref{L:conditioning_stop_time} holds.
        By following the same steps in Sec.~\ref{app:bound_diff_tilde_bar}, we can prove that with probability at least $1-\delta$, for all $(s,a) \in \calS \times \A$
        \begin{align*}
                \left| \wb{r}_k(s,a) - r_k(s,a)  \right| &\leq  \rmaxbound \sqrt{\frac{7\lnn{\frac{2 S A t_k}{\delta}}}{N_k^+(s,a)} } := \beta_{r,k}^{sa} \\
                \left|\big( \wb{p}_k(\cdot|s,a) - p(\cdot|s,a) \big)^\transp h^* \right| &\leq  c \sqrt{\frac{7\lnn{\frac{2 S A t_k}{\delta}}}{N_k^+(s,a)}} := c\beta_{p,k}^{sa}
        \end{align*}
        where we recall the $N_k^+(s,a) := \max\{1, N_k(s,a)\}$.
        We now consider the concentration of $(\wh{p}_k -p)^\transp v$ for which we need to use Freedman's inequality (see Thm.~\ref{T:inequality_stop_mds}).
        Similarly to what done before, let $z = v - (\min\{v\} + \max\{v\})/2$ such that $(\wh{p}_k - p)^\transp v = (\wh{p}_k - p)^\transp z$. 
        We start noticing that, Eq.~\ref{eq:bernstein_transition_proba} holds for the discrete case where we replace the adapted sequence $\one{s_{\tau_l+1} \in I}$ by $\one{s_{\tau_l+1} = s'}$ 
        and the conditional variance $V_k(J)$ by $V_k(s') = \sum_{l=1}^{N_k(s,a)} \one{\tau_l < t_k} \Varcond{\one{s_{\tau_l+1} =s'}}{\G_{l-1}}$.
        Furthermore, $\Varcond{\one{s_{\tau_l+1} = s'}}{\mathcal{G}_{l-1}} = (1-p(s'|s,a)) p(s'|s,a)$ and
        \[
                V_k(s') = \sum_{l=1}^{N_k(s,a)} (1 - p(s'|s_{\tau_l}, a_{\tau_l})) p(s'|s_{\tau_l},a_{\tau_l}) \leq N_k(s,a) (1 - p(s'|s,a)) p(s'|s,a) 
        \]
        As done in~\citep[][App. F.7]{fruit2018constrained} we use Cauchy-Schwartz inequality to write that
        \begin{align*}
                \sum_{s' \in \mathcal{S}} \sqrt{p(s'|s,a)(1-p(s'|s,a))} &= \sum_{s' \in \mathcal{S}:~ p(s'|s,a)>0} \sqrt{p(s'|s,a)(1-p(s'|s,a))} \leq \sqrt{\Gamma -1}
        \end{align*}
        where we recall that $\Gamma := \max_{s,a} \| p(\cdot|s,a) \|_0$ is the maximum support of $p$.
        Then, for any $(s,a) \in \calS \times\A$ and for any vector $z \in [-c/2,c/2]$, we have that
        \begin{align*}
                \left|\big(\wb{p}_k(\cdot|s,a) - p(\cdot|s,a) \big)^\transp z\right| 
                &\leq \|z\|_{\infty}\sum_{s' \in \calS} |\wb{p}_k(s'|s,a) - p(s'|s,a)| \\
                &\leq \frac{c}{2 N_k^+(s,a)} \sum_{s' \in \calS} \left(2 \sqrt{V_{k}(s') \lnn{\frac{4 t_k}{\delta}}} + 4 \lnn{\frac{4t_k}{\delta}} \right)\\
                &\leq c  \left( \sqrt{\frac{7 (\Gamma -1)\lnn{\frac{3 SA t_k}{\delta}}}{N_k^+(s,a)}} + \frac{14S}{N_k^+(s,a)} \lnn{\frac{3SAt_k}{\delta}} \right)
                := c\; \phi_{p,k}^{sa}
        \end{align*}
        We can also write with probability $1$ that:
        \begin{align*}
                \big| (\wh{p}_k(\cdot|s,a) - p(\cdot|s,a) )^\transp w \big|
                \leq \wh{p}_k(\cdot|s,a) ^\transp w +  p(\cdot|s,a)^\transp w 
                 \leq 2 c
        \end{align*}
        So we can take the minimum between the two upper-bounds. We also know that the difference in reward is bound by $\rmaxbound$.
\end{proof}

In order to prove optimism we start noticing that the bonus $b_k(s,a)$ (see Lem.~\ref{L:bonus.inequality}) implies that $\wh{L}_k h^* \geq Lh^*$. As a consequence, we can use Prop.~\ref{prop:bound.gain} to show that $g^+_k \geq g^*$.

\subsection{Regret Proof of \bonusscal}
The regret proof follows the same steps of the one for \scalcont.
The main difference resides in the fact that there is no need of state aggregation, thus simplifying the proof.

By using the optimism of $\wh{M}^+_k$,  the stopping condition of \scopt and the relationships between $\wh{M}^+_k$ and $\wh{M}_k$ (see Eq.~\ref{eq:mdp.augmentation.props}), we can prove Eq.~\ref{eq:deltak.bound} for the discrete case.
Note that the analysis of the cumulative contribution of the term $d_k(s,a)$ and $b_k(s,a)$ will lead to the following terms $\wt{O}(c\sqrt{\Gamma SAT})$ and $\wt{O}(c \sqrt{SAT})$, respectively. Since the dominant term is the one associated to $d_k$, even in this case we upper-bound $b_k$ by $d_k$.

From this point, we follow the same steps as in Sec.~\ref{app:proof.regret.continuous}. The only difference resides in Eq.~\ref{eq:sum.deltak} where the term $(c + \rmaxbound) LS^{-\alpha} T$ disappears since it depends on aggregation and/or smoothness.
Finally, the regret bound in Thm.~\ref{thm:regret.bonusscal} follows by noticing that the order of the term $\sum_{k=1}^m \sum_{s,a} \nu_k(s,a) \phi_{p,k}^{sa}$ is $\wt{O}(\sqrt{\Gamma S A T} + S^2 A)$.

As a consequence, there exists a numerical constant $\chi$ such that at least with probability $1-\delta$ our algorithm \bonusscal has a regret bounded by
\begin{align*}
        \begin{split}
                \Delta(\bonusscal,T) \leq &\chi \;\left( \max{\left\lbrace \rmaxbound,c \right\rbrace} \left( \sqrt{\Gamma S A T \ln\left(\frac{T}{\delta}\right)}
                        + S^2A\ln^2\left(\frac{T}{\delta} \right) 
                                          \right)
                \right)
         \end{split}
\end{align*}

\end{document}